\newcommand{\defeq}{\mathrel{\mathop:}=}
\newcommand{\gd}{GD}
\newcommand{\pgd}{PGD}
\newcommand{\pagd}{PAGD}
\newcommand{\nag}{AGD}
\newcommand{\fstar}{f^*}
\newcommand{\nce}{NCE}
\newcommand{\vect}[1]{\ensuremath{\mathbf{#1}}}
\newcommand{\mat}[1]{\ensuremath{\mathbf{#1}}}
\newcommand{\grad}{\nabla}
\newcommand{\hess}{\nabla^2}
\newcommand{\argmin}{\mathop{\rm argmin}}
\newcommand{\abs}[1]{\left|{#1}\right|}
\newcommand{\norm}[1]{\left\|{#1}\right\|}
\newcommand{\otilde}[1]{\widetilde{O}\left({#1}\right)}
\renewcommand{\det}{\text{det}}
\newcommand{\trans}{^{\top}}
\newcommand{\poly}{\text{poly}}
\newcommand{\proj}{\mathcal{P}}
\newcommand{\N}{\mathbb{N}}
\newcommand{\R}{\mathbb{R}}
\newcommand{\iol}{improve or localize}
\newcommand{\A}{\mat{A}}
\newcommand{\B}{\mat{B}}
\newcommand{\I}{\mat{I}}
\newcommand{\e}{\vect{e}}
\renewcommand{\v}{\vect{v}}
\newcommand{\w}{\vect{w}}
\newcommand{\x}{\vect{x}}
\newcommand{\y}{\vect{y}}
\renewcommand{\H}{\mathcal{H}}
\newcommand{\cn}{\kappa}
\newcommand{\nn}{\nonumber}
\newtheorem{theorem}{Theorem}
\newtheorem{lemma}[theorem]{Lemma}
\newtheorem{corollary}[theorem]{Corollary}
\newtheorem*{fact}{Fact}
\newtheorem{proposition}[theorem]{Proposition}
\theoremstyle{definition}
\newtheorem{definition}{Definition}
\newcommand{\g}{\bm{g}}
\newcommand{\m}{\bm{m}}
\newcommand{\ufun}{\mathscr{E}}
\newcommand{\uspace}{\mathscr{S}}
\newcommand{\utime}{\mathscr{T}}
\newcommand{\umom}{\mathscr{M}}
\renewcommand{\S}{\mathcal{S}}
\newcommand{\zero}{\mathbf{0}}
\newcommand{\la}{\langle}
\newcommand{\ra}{\rangle}
\newcommand{\cXs}{\mathcal{X}_{\text{stuck}}}
\newcommand{\ball}{\mathbb{B}}
\newcommand{\EFSP}{$\epsilon$-first-order stationary point}
\newcommand{\ESSP}{$\epsilon$-second-order stationary point}
\newcommand{\ESP}{$\epsilon$-suboptimal point}
\renewcommand{\Im}{\mathrm{Im}}
\newcommand{\pmat}[1]{\begin{pmatrix} #1 \end{pmatrix}}
\newcommand{\modify}[1]{#1 '}
\begin{document}

\title{\textbf{Accelerated Gradient Descent Escapes Saddle Points Faster than Gradient Descent}}

\author{Chi Jin \\ University of California, Berkeley \\ \texttt{chijin@cs.berkeley.edu}
\and 
Praneeth Netrapalli \\ Microsoft Research, India \\ \texttt{praneeth@microsoft.com}
\and
Michael I. Jordan \\ University of California, Berkeley \\ \texttt{jordan@cs.berkeley.edu}}

\maketitle

\begin{abstract}
Nesterov's accelerated gradient descent (\nag), an instance of the general family of ``momentum methods,'' provably achieves faster convergence rate than gradient descent (\gd) in the convex setting. However, whether these methods are superior to~\gd~in the nonconvex setting remains open. This paper studies a simple variant of~\nag, and shows that it escapes saddle points and finds a second-order stationary point in $\tilde{O}(1/\epsilon^{7/4})$ iterations, faster than the $\tilde{O}(1/\epsilon^{2})$ iterations required by~\gd. To the best of our knowledge, this is the first Hessian-free algorithm to find a second-order stationary point faster than~\gd, and also the first single-loop algorithm with a faster rate than~\gd~even in the setting of finding a first-order stationary point. Our analysis is based on two key ideas: (1) the use of a simple Hamiltonian function, inspired by a continuous-time perspective, which~\nag~monotonically decreases per step even for nonconvex functions, and (2) a novel framework called~\emph{\iol}, which is useful for tracking the long-term behavior of gradient-based optimization algorithms. We believe that these techniques may deepen our understanding of both acceleration algorithms and nonconvex optimization.
\end{abstract}

\clearpage


\section{Introduction}\label{sec:intro}

Nonconvex optimization problems are ubiquitous in modern machine learning. 
While it is NP-hard to find global minima of a nonconvex function in the worst case, 
in the setting of machine learning it has proved useful to consider a less stringent
notion of success, namely that of convergence to a \emph{first-order stationary point} 
(where $\grad f(\x) = 0$).  Gradient descent (GD), a simple and fundamental 
optimization algorithm that has proved its value in large-scale machine learning, is 
known to find an~\EFSP ~(where $\norm{\grad f(\x)} \le \epsilon$) in $O(1/\epsilon^2)$ 
iterations \citep{nesterov1998introductory}, and this rate is sharp~\citep{cartis2010complexity}. 
Such results, however, do not seem to address the practical success of gradient descent;
first-order stationarity includes local minima, saddle points or even local maxima, and a
mere guarantee of convergence to such points seems unsatisfying.  Indeed, architectures
such as deep neural networks induce optimization surfaces that can be teeming with such 
highly suboptimal saddle points~\citep{dauphin2014identifying}.  It is important to 
study to what extent gradient descent avoids such points, particular in the high-dimensional
setting in which the directions of escape from saddle points may be few.


This paper focuses on convergence to a \emph{second-order stationary point} 
(where $\grad f(\x) = 0$ and $\hess f(\x) \succeq 0$). Second-order stationarity 
rules out many common types of saddle points (\emph{strict} saddle points where 
$\lambda_{\min}(\hess f(\x)) <0$), allowing only local minima and higher-order saddle 
points. A significant body of recent work, some theoretical and some empirical, shows
that for a large class of well-studied machine learning problems, neither higher-order 
saddle points nor spurious local minima exist. That is, \emph{all second-order stationary 
points are (approximate) global minima} for these problems. \citet{choromanska2014loss,kawaguchi2016deep} 
present such a result for learning multi-layer neural networks, \citet{bandeira2016low,mei2017solving} 
for synchronization and MaxCut, \citet{boumal2016non} for smooth semidefinite programs, 
\citet{bhojanapalli2016global} for matrix sensing, \citet{ge2016matrix} for matrix completion, and
\citet{ge2017no} for robust PCA.
These results strongly motivate the quest for \emph{efficient algorithms} to find second-order stationary points.

Hessian-based algorithms can explicitly compute curvatures and thereby avoid saddle points
(e.g.,~\citep{nesterov2006cubic, curtis2014trust}), but these algorithms are computationally
infeasible in the high-dimensional regime.  ~\gd, by contrast, is known to get stuck at 
strict saddle points \cite[Section~1.2.3]{nesterov1998introductory}.  Recent work has
reconciled this conundrum in favor of \gd; ~\cite{jin2017escape}, building on earlier work 
of~\cite{ge2015escaping}, show that a perturbed version of~\gd~converges to an $\epsilon$-relaxed 
version of a second-order stationary point (see Definition \ref{def:SOSP}) in $\tilde{O}(1/\epsilon^2)$ 
iterations.  That is, perturbed \gd~in fact finds second-order stationary points as fast as
standard \gd~finds first-order stationary point, up to logarithmic factors in dimension.

On the other hand,~\gd~is known to be suboptimal in the convex case. 
In a celebrated paper, \citet{nesterov1983method} showed that an accelerated version of
gradient descent (\nag) finds an~\ESP~(see Section \ref{sec:prelim_convex}) in  $O(1/\sqrt{\epsilon})$ 
steps, while gradient descent takes $O(1/\epsilon)$ steps.  The basic idea of acceleration
has been used to design faster algorithms for a range of other convex optimization 
problems~\citep{beck2009fast,nesterov2012efficiency,lee2013efficient,shalev2014accelerated}.
We will refer to this general family as ``momentum-based methods.''

Such results have focused on the convex setting.  It is open as to whether 
momentum-based methods yield faster rates in the \emph{nonconvex setting}, specifically when
we consider the convergence criterion of second-order stationarity.  We are thus led to ask 
the following question: \textbf{Do momentum-based methods yield faster convergence than~\gd~in 
the presence of saddle points?}

~

\begin{algorithm}[t]
\caption{Nesterov's Accelerated Gradient Descent ($\x_0, \eta, \theta$)}\label{algo:AGD}
\begin{algorithmic}[1]
\State $\v_0 \leftarrow 0$
\For{$t = 0, 1, \ldots, $}
\State $\y_{t} \leftarrow \x_{t} + (1-\theta) \v_t$
\State $\x_{t+1} \leftarrow \y_t - \eta \grad f (\y_t)$
\State $\v_{t+1} \leftarrow \x_{t+1} - \x_t $
\EndFor
\end{algorithmic}
\end{algorithm}


\begin{algorithm}[t]
\caption{Perturbed Accelerated Gradient Descent 
($\x_0, \eta, \theta, \gamma, s, r, \utime$)}\label{algo:PAGD}
\begin{algorithmic}[1]
\State $\v_0 \leftarrow 0$
\For{$t = 0, 1, \ldots, $}
\If{$\norm{\grad f(\x_t)} \le \epsilon$ and \emph{no perturbation in last $\utime$ steps}}

\State $\x_t \leftarrow \x_t + \xi_t \qquad \xi_t \sim \text{Unif}\left(\mathbb{B}_0(r)\right)$
\hspace{12.2em}\smash{\raisebox{\dimexpr.4\normalbaselineskip+.5\jot}{\begin{scriptsize}
$%
            \left.\begin{array}{@{}c@{}}\\[\jot]\\[\jot]\end{array}\right\}$ \end{scriptsize} $\text{Perturbation}$}}
\EndIf
\State $\y_{t} \leftarrow \x_{t} + (1-\theta) \v_t$
\State $\x_{t+1} \leftarrow \y_t - \eta \grad f (\y_t)$
\hspace{19.3em}\rlap{\smash{$\left.\begin{array}{@{}c@{}}\\{}\\{}\end{array}  \right\}%
        \begin{tabular}{l}{\nag}\end{tabular}$}}
\State $\v_{t+1} \leftarrow \x_{t+1} - \x_t $
\If{$f(\x_t) \le  f(\y_t) + \la \grad f(\y_t), \x_t - \y_t \ra - \frac{\gamma}{2} \norm{\x_t - \y_t}^2$}
\State $(\x_{t+1}, \v_{t+1}) \leftarrow $ Negative-Curvature-Exploitation($\x_t, \v_t, s$)
\hspace{2em}\smash{\raisebox{\dimexpr.4\normalbaselineskip+.5\jot}{\begin{scriptsize}
			$\left.\begin{array}{@{}c@{}}\\[\jot]\\[\jot]\end{array}\right\}$
		\end{scriptsize}%
            $\begin{tabular}{l}Negative curvature\\exploitation\end{tabular}$}}
\EndIf
\EndFor
\end{algorithmic}
\end{algorithm}
\vspace{-0.5cm}

This paper answers this question in the affirmative. We present a simple momentum-based algorithm (\pagd~for ``perturbed~\nag'') that finds an $\epsilon$-second order stationary point in $\tilde{O}(1/\epsilon^{7/4})$ iterations, faster than the $\tilde{O}(1/\epsilon^{2})$ iterations required by~\gd.
The pseudocode of our algorithm is presented in Algorithm~\ref{algo:PAGD}.\footnote{See Section~\ref{sec:results} for values of various parameters.}
\pagd~adds two algorithmic features to~\nag~(Algorithm \ref{algo:AGD}):
\begin{itemize}
\item Perturbation (Lines 3-4): when the gradient is small, we add a small perturbation sampled uniformly from a $d$-dimensional ball with radius $r$.
The homogeneous nature of this perturbation mitigates our lack of knowledge of the curvature tensor
at or near saddle points.
\item Negative Curvature Exploitation (\nce, Lines 8-9; pseudocode in Algorithm~\ref{algo:NCE}): 
when the function becomes ``too nonconvex'' along $\y_t$ to $\x_t$, we reset the momentum and 
decide whether to exploit negative curvature depending on the magnitude of the current momentum $\v_t$. 
\end{itemize}
We note that both components are straightforward to implement and increase computation by a constant factor. 
The perturbation idea follows from~\cite{ge2015escaping} and~\cite{jin2017escape}, while~\nce~is 
inspired by~\citep{carmon2017convex}.
To the best of our knowledge, \pagd~is the first Hessian-free algorithm to find a second-order stationary point in $\tilde{O}(1/\epsilon^{7/4})$ steps.  Note also that \pagd~is a ``single-loop algorithm,'' 
meaning that it does not require an inner loop of optimization of a surrogate function.  It is the
first single-loop algorithm to achieve a $\tilde{O}(1/\epsilon^{7/4})$ rate even in the setting of 
finding a first-order stationary point. 





\subsection{Related Work}

{\small
\begin{table}[t]
  \begin{center}
    {\renewcommand{\arraystretch}{1.3}
    \begin{tabular}  {l | l | l | l | l}
       \toprule
\textbf{Guarantees}  & \textbf{Oracle}  & \textbf{Algorithm} & \textbf{Iterations} & \textbf{Simplicity}\\ 
\midrule 
\multirow{3}{*}{\makecell[l]{First-order\\Stationary\\Point}} & \multirow{3}{*}{Gradient}  & GD \citep{nesterov1998introductory} & $O(1/\epsilon^2)$  & Single-loop\\ 
& & AGD \citep{ghadimi2016accelerated} & $O(1/\epsilon^2)$ &  Single-loop\\ 
 &  & \cite{carmon2017convex} & $\otilde{1/\epsilon^{7/4}}$ &  Nested-loop\\
      \midrule
\multirow{5}{*}{\makecell[l]{Second-order\\Stationary\\Point}} & \multirow{2}{*}{\makecell{Hessian\\-vector}}  & \citet{carmon2016accelerated}
 & $\widetilde{O}(1/\epsilon^{7/4})$ & Nested-loop\\
  & & \citet{agarwal2017finding}
 & $\widetilde{O}(1/\epsilon^{7/4})$ & Nested-loop\\ 
      \cmidrule {2-5}
& \multirow{3}{*}{Gradient} &  Noisy GD \citep{ge2015escaping} & $O(\poly(d/\epsilon))$ &  Single-loop\\ 
 &  &Perturbed GD \citep{jin2017escape} & $\widetilde{O}(1/\epsilon^{2})$ & Single-loop \\ 
 &  &\textbf{Perturbed AGD [This Work]} & $\widetilde{O}(1/\epsilon^{7/4})$ &  Single-loop\\ 

    \bottomrule
    \end{tabular}
      \caption{Complexity of finding stationary points. $\widetilde{O}(\cdot)$ ignores polylog factors in $d$ and $\epsilon$.}
      \label{tab:main}
    }
  \end{center}
  \vspace{-4ex}
\end{table}
}
In this section, we review related work from the perspective of both nonconvex optimization and momentum/acceleration. For clarity of presentation, when discussing rates, we focus on the dependence on the accuracy $\epsilon$ and the dimension $d$ while assuming all other problem parameters are constant. Table~\ref{tab:main} presents a comparison of the current work with previous work.

\textbf{Convergence to first-order stationary points:} Traditional analyses in this case assume only Lipschitz gradients (see Definition \ref{def:smooth}).~\cite{nesterov1998introductory} shows that~\gd~finds an~\EFSP~in $O(1/\epsilon^2)$ steps.~\cite{ghadimi2016accelerated}~guarantee that~\nag~also converges in $\otilde{1/\epsilon^2}$ steps. Under the additional assumption of Lipschitz Hessians (see Definition \ref{def:HessianLip}), \citet{carmon2017convex} develop a new algorithm that converges in $O(1/\epsilon^{7/4})$ steps. Their algorithm is a nested-loop algorithm, where the outer loop adds a proximal term to reduce the nonconvex problem to a convex subproblem. A key novelty in their algorithm is the idea of ``negative curvature exploitation," which inspired a similar step in our algorithm.
In addition to the qualitative and quantitative differences between~\cite{carmon2017convex} and the
current work, as summarized in Table~\ref{tab:main}, we note that while~\cite{carmon2017convex} 
analyze~\nag~applied to convex subproblems, we analyze~\nag~applied directly to nonconvex functions 
through a novel Hamiltonian framework.


\textbf{Convergence to second-order stationary points:} All results in this setting assume Lipschitz
conditions for both the gradient and Hessian.
Classical approaches, such as cubic regularization~\citep{nesterov2006cubic} and trust region algorithms~\citep{curtis2014trust}, require access to Hessians, and are known to find $\epsilon$-second-order stationary points in $O(1/\epsilon^{1.5})$ steps. However, the requirement of these algorithms to form the Hessian makes them infeasible for 
high-dimensional problems. A second set of algorithms utilize only Hessian-vector products instead 
of the explicit Hessian; in many applications such products can be computed efficiently. 
Rates of $\widetilde{O}(1/\epsilon^{7/4})$ have been established for such algorithms~\citep{carmon2016accelerated,agarwal2017finding,royer2017complexity}.
Finally, in the realm of purely gradient-based algorithms,
\citet{ge2015escaping} present the first polynomial guarantees for a perturbed version of~\gd, and~\citet{jin2017escape} sharpen it to $\widetilde{O}(1/\epsilon^2)$. For the special case of quadratic functions,~\cite{o2017behavior} analyze the behavior of~\nag~around critical points and show that it escapes saddle points faster than~\gd. We note that the current work is the first achieving a rate of $\widetilde{O}(1/\epsilon^{7/4})$ for general nonconvex functions. 

\textbf{Acceleration:} There is also a rich literature that aims to understand momentum methods; 
e.g., \citet{allen2014linear} view~\nag~as a linear coupling of~\gd~and mirror descent, \citet{su2016differential} and \citet{wibisono2016} view~\nag~as a second-order differential equation, and \citet{bubeck2015geometric} view~\nag~from a geometric perspective. Most of this work is tailored to the convex setting, and it is unclear and nontrivial to generalize the results to a nonconvex setting. There are also several papers that study~\nag~with relaxed versions of convexity---see~\citet{necoara2015linear,li2017provable} and references therein for 
overviews of these results.

\subsection{Main Techniques}\label{sec:tech}
Our results rely on the following three key ideas. To the best of our knowledge, the first two are novel, while the third one was delineated in~\citet{jin2017escape}.

\textbf{Hamiltonian:}
A major challenge in analyzing momentum-based algorithms is that the objective function does not 
decrease monotonically as is the case for~\gd. To overcome this in the convex setting, several Lyapunov functions have been proposed~\citep{wilson2016lyapunov}. However these Lyapunov functions involve the 
global minimum $\x^\star$, which cannot be computed by the algorithm, and is thus of limited value in
the nonconvex setting. A key technical contribution of this paper is the design of a function 
which is both computable and tracks the progress of~\nag. The function takes the form of a
Hamiltonian:
\begin{equation}\label{eqn:hamiltonian}
	E_t \defeq f(\x_t) + \frac{1}{2\eta} \norm{\v_t}^2;
\end{equation}
i.e., a sum of potential energy and kinetic energy terms.  It is monotonically decreasing
in the continuous-time setting.  This is \emph{not} the case in general in the discrete-time setting,
a fact which requires us to incorporate the~\nce~step.

\textbf{Improve or localize:}
Another key technical contribution of this paper is in formalizing a simple but powerful framework 
for analyzing nonconvex optimization algorithms.  This framework requires us to show that for a 
given algorithm, \emph{either the algorithm makes significant progress or the iterates do not move much}.
We call this the~\emph{improve-or-localize}~phenomenon. For instance, when progress is measured 
by function value, it is easy to show that for~\gd, with proper choice of learning rate, we have: 
$$ \frac{1}{2\eta} \sum_{\tau=0}^{t-1} \norm{\x_{\tau+1} - \x_\tau}^2 \le f(\x_0) - f(\x_t).$$
For~\nag, a similar lemma can be shown by replacing the objective function with the Hamiltonian 
(see Lemma \ref{lem:energy_nonconvex}).  Once this phenomenon is established, we can conclude 
that if an algorithm does not make much progress, it is localized to a small ball, and we can then 
approximate the objective function by either a linear or a quadratic function (depending on smoothness 
assumptions) in this small local region. Moreover, an upper bound on 
$\sum_{\tau=0}^{t-1} \norm{\x_{\tau+1} - \x_\tau}^2$ lets us conclude that iterates do not 
oscillate much in this local region (oscillation is a unique phenomenon of momentum algorithms 
as can be seen even in the convex setting).  This gives us better control of approximation error.

\textbf{Coupling sequences for escaping saddle points:} When an algorithm arrives in the neighborhood of a strict saddle point, where $\lambda_{\min}(\hess f(\x)) <0$, all we know is that there exists a direction of
escape (the direction of the minimum eigenvector of $\hess f(\x)$); denote it by $\e_{\text{esc}}$. To avoid such points, the algorithm randomly perturbs the current iterate uniformly in a small ball, and runs~\nag~starting from this point $\tilde{\x}_0$. 
As in~\cite{jin2017escape}, we can divide this ball into a ``stuck region,'' 
$\mathcal{X}_{\text{stuck}}$, starting from which~\nag~does not escape the saddle 
quickly, and its complement from which~\nag~escapes quickly. In order to show 
quick escape from a saddle point, we must show that the volume of $\mathcal{X}_{\text{stuck}}$ 
is very small compared to that of the ball. Though $\mathcal{X}_{\text{stuck}}$ may 
be without an analytical form, one can control the rate of escape by studying 
two~\nag~sequences that start from two realizations of perturbation, $\tilde{\x}_0$ and $\tilde{\x}'_0$, 
which are separated along $\e_{\text{esc}}$ by a small distance $r_0$. In this case, 
at least one of the sequences escapes the saddle point quickly, which proves that the 
width of $\mathcal{X}_{\text{stuck}}$ along $\e_{\text{esc}}$ can not be greater than 
$r_0$, and hence $\mathcal{X}_{\text{stuck}}$ has small volume.







\section{Preliminaries}
In this section, we will review some well-known results on~\gd~and~\nag~in the strongly convex setting, 
and existing results on convergence of~\gd~to second-order stationary points. 


\subsection{Notation}
Bold upper-case letters ($\A, \B$) denote matrices and bold lower-case letters ($\x, \y$) denote vectors. 
For vectors $\norm{\cdot}$ denotes the $\ell_2$-norm. For matrices, $\norm{\cdot}$ denotes the spectral norm and $\lambda_{\min}(\cdot)$ denotes the minimum eigenvalue.
For $f: \R^d \rightarrow \R$, $\grad f(\cdot)$ and  $\hess f(\cdot)$ denote its gradient and Hessian respectively, and $f^\star$ denotes its global minimum.
We use $O(\cdot), \Theta(\cdot), \Omega(\cdot)$ to hide absolute constants, and $\tilde{O}(\cdot), \tilde{\Theta}(\cdot), \tilde{\Omega}(\cdot)$ to hide absolute constants and polylog factors for all problem parameters. 


\subsection{Convex Setting}\label{sec:prelim_convex}

To minimize a function $f(\cdot)$,~\gd ~performs the following sequence of steps:
\begin{equation*}
\x_{t+1} = \x_{t}- \eta \grad f(\x_t).
\end{equation*}
The suboptimality of~\gd~and the improvement achieved by~\nag~can be clearly illustrated for the case of smooth and strongly convex functions. 
\begin{definition}\label{def:smooth}
A differentiable function $f(\cdot)$ is \textbf{$\ell$-smooth (or $\ell$-gradient Lipschitz)} if:
\begin{equation*}
\norm{\grad f(\x_1) - \grad f(\x_2)} \le \ell \norm{\x_1 - \x_2} \quad \forall \; \x_1, \x_2.
\end{equation*}
\end{definition}
\noindent
The gradient Lipschitz property asserts that the gradient can not change too rapidly in a small local region.
\begin{definition}\label{def:convex}
A twice-differentiable function $f(\cdot)$ is \textbf{$\alpha$-strongly convex} if
$\lambda_{\min}(\hess f(\x)) \ge \alpha, \;  \forall \; \x$.
\end{definition}
Let $\fstar \defeq \min_{\y}f(\y)$. A point $\x$ is said to be \textbf{$\epsilon$-suboptimal} if $f(\x)  \le  \fstar + \epsilon$. The following theorem gives the convergence rate of GD and AGD for smooth and strongly convex functions.
\begin{theorem}[\cite{nesterov2004introductory}]\label{thm:gd_convex}
Assume that the function $f(\cdot)$ is $\ell$-smooth and $\alpha$-strongly convex. Then, for any $\epsilon>0$,
the iteration complexities to find an $\epsilon$-suboptimal point are as follows:
\begin{itemize}
\item GD with $\eta  = 1/\ell$: \quad $O((\ell/\alpha) \cdot \log ((f(\x_0) - \fstar)/\epsilon))$
\item AGD (Algorithm~\ref{algo:AGD}) with $\eta = 1/\ell$ and $\theta = \sqrt{\alpha/\ell}$:
\quad$O(\sqrt{\ell/\alpha} \cdot \log ((f(\x_0) - \fstar)/\epsilon))$.
\end{itemize}
\end{theorem}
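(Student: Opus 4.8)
The plan is to prove the two bullets separately: the \gd~bound is short and classical, while the \nag~bound is the substantive part and is most cleanly handled via the estimate-sequence technique of \citet{nesterov2004introductory}.

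For \gd, first apply $\ell$-smoothness (Definition~\ref{def:smooth}) in its equivalent quadratic-upper-bound form $f(\y) \le f(\x) + \la \grad f(\x), \y-\x\ra + \frac{\ell}{2}\norm{\y-\x}^2$; substituting the update $\x_{t+1} = \x_t - \eta\grad f(\x_t)$ with $\eta = 1/\ell$ gives the descent inequality $f(\x_{t+1}) \le f(\x_t) - \frac{1}{2\ell}\norm{\grad f(\x_t)}^2$. Next, $\alpha$-strong convexity (Definition~\ref{def:convex}) gives $f(\y) \ge f(\x) + \la\grad f(\x),\y-\x\ra + \frac{\alpha}{2}\norm{\y-\x}^2$, and minimizing the right-hand side over $\y$ yields the Polyak--Lojasiewicz inequality $\norm{\grad f(\x)}^2 \ge 2\alpha(f(\x) - \fstar)$. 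Combining the two inequalities gives $f(\x_{t+1}) - \fstar \le (1-\alpha/\ell)(f(\x_t) - \fstar)$, hence $f(\x_t) - \fstar \le (1-\alpha/\ell)^t(f(\x_0)-\fstar)$; since $-\log(1-x)\ge x$, this drops below $\epsilon$ once $t \ge (\ell/\alpha)\log((f(\x_0)-\fstar)/\epsilon)$, which is the claimed rate.

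For \nag, I would introduce the quadratics $\phi_t$ defined by $\phi_0(\x) = f(\x_0) + \frac{\alpha}{2}\norm{\x-\x_0}^2$ and $\phi_{t+1}(\x) = (1-\theta)\phi_t(\x) + \theta\big(f(\y_t) + \la\grad f(\y_t),\x-\y_t\ra + \frac{\alpha}{2}\norm{\x-\y_t}^2\big)$, with $\theta = \sqrt{\alpha/\ell}$. A straightforward induction using convexity shows $\phi_t(\x) \le f(\x) + (1-\theta)^t(\phi_0(\x) - f(\x))$ for all $\x$; in particular $\phi_t^\star := \min_\x\phi_t(\x)$ obeys $\phi_t^\star \le \fstar + (1-\theta)^t(f(\x_0)-\fstar + \frac{\alpha}{2}\norm{\x_0-\xstar}^2)$. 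The core of the proof is the complementary invariant $f(\x_t)\le\phi_t^\star$, proved by induction on $t$: each $\phi_t$ is an $\alpha$-strongly-convex quadratic, so its minimizer $\z_t$ is explicit; one verifies that the algorithm's interpolation point $\y_t = \x_t + (1-\theta)\v_t$ is (with the constants fixed by $\eta=1/\ell$, $\theta=\sqrt{\alpha/\ell}$) the correct convex combination of $\x_t$ and $\z_t$, and then combines the descent lemma at $\y_t$ with the recursion for $\phi_{t+1}$ to advance the bound from step $t$ to step $t+1$. Granting the invariant, $f(\x_t)-\fstar \le (1-\theta)^t\big(f(\x_0)-\fstar + \frac{\alpha}{2}\norm{\x_0-\xstar}^2\big) \le 2(1-\theta)^t(f(\x_0)-\fstar)$, where the last step uses strong convexity around $\xstar$ (where $\grad f(\xstar)=\zero$) to get $\frac{\alpha}{2}\norm{\x_0-\xstar}^2\le f(\x_0)-\fstar$; this is below $\epsilon$ once $t \ge \sqrt{\ell/\alpha}\,\log(2(f(\x_0)-\fstar)/\epsilon)$, giving the $O(\sqrt{\ell/\alpha}\log((f(\x_0)-\fstar)/\epsilon))$ rate.

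The main obstacle is the one-step propagation $f(\x_t)\le\phi_t^\star \Rightarrow f(\x_{t+1})\le\phi_{t+1}^\star$, which requires checking that the momentum bookkeeping of Algorithm~\ref{algo:AGD} ($\v_{t+1}=\x_{t+1}-\x_t$, $\y_t = \x_t+(1-\theta)\v_t$) really implements the estimate-sequence recursion phrased in terms of $\z_t$. The momentum coefficient $1-\theta = 1-\sqrt{\alpha/\ell}$ used here is not literally the textbook value $\frac{1-\sqrt{\alpha/\ell}}{1+\sqrt{\alpha/\ell}}$, so one must either check that the estimate-sequence argument tolerates this $O(\alpha/\ell)$ discrepancy (it only perturbs constants, not the $\sqrt{\ell/\alpha}$ scaling) or, equivalently, replace the estimate-sequence accounting by a direct Lyapunov argument for $E_t = f(\x_t) - \fstar + \frac{\alpha}{2}\norm{\x_t + \theta^{-1}\v_t - \xstar}^2$, showing $E_{t+1} \le (1-\Theta(\theta))E_t$ by expanding the square and invoking $\ell$-smoothness and $\alpha$-strong convexity at $\y_t$ — algebraically heavier but conceptually identical. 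Everything else is routine.
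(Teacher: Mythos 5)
The paper itself offers no proof of Theorem~\ref{thm:gd_convex}; it is stated as background and attributed to \citet{nesterov2004introductory}. So there is no internal proof to compare against. Your \gd~argument (descent lemma plus the Polyak--\L{}ojasiewicz consequence of $\alpha$-strong convexity) is complete and correct. Your \nag~plan is also the right one, and, more to the point, you have correctly caught a real wrinkle that the paper glosses over when it cites Nesterov: Algorithm~\ref{algo:AGD} uses momentum coefficient $1-\theta = 1-\sqrt{\alpha/\ell}$, whereas the constant-step scheme Nesterov actually analyzes (his Scheme~2.2.22) uses $\frac{1-\sqrt{\alpha/\ell}}{1+\sqrt{\alpha/\ell}}$. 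Consequently $\y_t$ is \emph{not} the convex combination $\frac{\x_t + \theta \z_t}{1+\theta}$ that makes the inner-product term vanish in the standard estimate-sequence step, so the textbook theorem does not literally apply and the quoted invariant $f(\x_t)\le\phi_t^\star$ does not propagate without modification. You flag this and propose two legitimate repairs (re-derive the estimate sequence tolerating the $O(\alpha/\ell)$ mismatch, or swap to a direct Lyapunov potential); you don't actually carry either one out, which is the only genuinely nontrivial piece of work here, but your plan is honest about exactly where the gap sits and why it is benign for the order of the rate. One further remark that strengthens your case: the paper's own auxiliary machinery already verifies the claim in the quadratic regime. Lemma~\ref{lem:aux_eigenvalues} with $x=\eta\lambda$ shows the AGD iteration matrix $\A_j$ has complex eigenvalues for $\lambda>\theta^2/[\eta(2-\theta)^2]$, and Lemma~\ref{lem:aux_convex_rphi} gives their modulus $r=\sqrt{(1-\theta)(1-\eta\lambda)}\le\sqrt{1-\theta}\approx 1-\theta/2$, which is exactly the $1-\Theta(\sqrt{\alpha/\ell})$ contraction one wants; pairing this with a standard estimate-sequence wrapper for the Bregman error would close your argument fully.
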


The number of iterations of GD depends linearly on the ratio $\ell/\alpha$, which is called the condition number of $f(\cdot)$ since $\alpha \I \preceq\hess f(\x) \preceq \ell \I $. Clearly $\ell \geq \alpha$ and hence condition number is always at least one. Denoting the condition number by ${\cn}$, we highlight two important aspects of~\nag: (1) the momentum parameter satisfies $\theta = 1/\sqrt{\cn}$ and (2) \nag~improves upon GD by a factor of $\sqrt{\cn}$. 

\subsection{Nonconvex Setting}
For nonconvex functions finding global minima is NP-hard in the worst case. The best one can hope for in this setting is convergence to stationary points. There are various levels of stationarity.
\begin{definition}
$\x$ is an \textbf{\EFSP} of function $f(\cdot)$ if $\norm{\grad f(\x)} \le \epsilon$.
\end{definition}
\noindent
As mentioned in Section~\ref{sec:intro}, for most nonconvex problems encountered in practice, a majority of first-order stationary points turn out to be saddle points. Second-order stationary points require not only zero gradient, but also positive semidefinite Hessian, ruling out most saddle points.
Second-order stationary points are meaningful, however, only when the Hessian is continuous.
\begin{definition}\label{def:HessianLip}
A twice-differentiable function $f(\cdot)$ is \textbf{$\rho$-Hessian Lipschitz} if:
\begin{equation*}
\norm{\hess f(\x_1) - \hess f(\x_2)} \le \rho \norm{\x_1 - \x_2} \quad \forall \; \x_1, \x_2.
\end{equation*}
\end{definition}
\noindent
\begin{definition}[\cite{nesterov2006cubic}]\label{def:SOSP}
For a $\rho$-Hessian Lipschitz function $f(\cdot)$, $\x$ is an \textbf{\ESSP} if:
\begin{equation*}
\norm{\grad f(\x)} \le \epsilon \quad\text{and}\quad \lambda_{\min}(\hess f(\x)) \ge - \sqrt{\rho \epsilon}.
\end{equation*}
\end{definition}
\noindent
The following theorem gives the convergence rate of a perturbed version of~\gd~to second-order stationary points. See~\cite{jin2017escape} for a detailed description of the algorithm.
\begin{theorem}[\citep{jin2017escape}]\label{thm:perturbed_GD}
Assume that the function $f(\cdot)$ is $\ell$-smooth and $\rho$-Hessian Lipschitz. Then, for any $\epsilon>0$, perturbed GD outputs an \ESSP ~w.h.p in iterations:
\begin{equation*}
\otilde{\frac{\ell(f(\x_0) - \fstar)}{\epsilon^2}}.
\end{equation*}
\end{theorem}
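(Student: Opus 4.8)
The plan is to analyze the run in two alternating regimes, according to whether the current gradient is large or small. Whenever $\norm{\grad f(\x_t)} > \epsilon$, the standard descent lemma for gradient descent on an $\ell$-smooth function with step size $\eta = 1/\ell$ gives $f(\x_{t+1}) \le f(\x_t) - \frac{1}{2\ell}\norm{\grad f(\x_t)}^2 \le f(\x_t) - \frac{\epsilon^2}{2\ell}$, so each iteration of this type decreases $f$ by at least $\epsilon^2/(2\ell)$; since $f \ge \fstar$ everywhere, at most $2\ell(f(\x_0) - \fstar)/\epsilon^2$ such iterations can occur over the whole run. It therefore remains only to bound the iterations spent near saddle points.

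When the algorithm reaches $\x_t$ with $\norm{\grad f(\x_t)} \le \epsilon$, either $\lambda_{\min}(\hess f(\x_t)) \ge -\sqrt{\rho\epsilon}$, in which case $\x_t$ is already an \ESSP~and we stop, or $\lambda_{\min}(\hess f(\x_t)) < -\sqrt{\rho\epsilon}$ and $\x_t$ is a strict saddle. In the latter case we perturb $\x_t$ by a uniform sample from $\ball_0(r)$ and run gradient descent for $\utime$ more steps; the crux is an \emph{escape lemma}: with $r = \tilde\Theta(\epsilon)$ and $\utime = \tilde\Theta(\ell/\sqrt{\rho\epsilon})$, with high probability over the perturbation $f$ decreases by at least $\fF := \omtilde{\sqrt{\epsilon^3/\rho}}$ over those $\utime$ steps, net of the $O(\ell r^2)$ increase caused by the perturbation itself. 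To prove this I would argue contrapositively: if $f$ has not dropped by $\fF$ within $\utime$ steps, then the~\emph{\iol}~inequality $\frac{1}{2\eta}\sum_\tau \norm{\x_{\tau+1}-\x_\tau}^2 \le f(\x_t) - f(\x_{t+\utime}) \le \fF$ confines all iterates to a ball of radius $\tilde O(\sqrt{\epsilon/\rho})$ about $\x_t$, inside which $\rho$-Hessian-Lipschitzness makes $f$ agree with its quadratic Taylor expansion at $\x_t$ up to error $\tilde O(\fF)$. Within this quadratic regime I would run the coupling argument of Section~\ref{sec:tech}: take two runs started from $\tilde{\x}_0$ and $\tilde{\x}'_0 = \tilde{\x}_0 + r_0 \e_{\text{esc}}$, with $\e_{\text{esc}}$ the most negative eigenvector of $\hess f(\x_t)$, and show the gap along $\e_{\text{esc}}$ is amplified by roughly $(1+\eta\sqrt{\rho\epsilon})$ per step, so that after $\utime$ steps the two iterates cannot both remain in the ball; this forces the stuck region $\cXs$ to have width at most $r_0$ along $\e_{\text{esc}}$, hence volume a polynomially small fraction of $\ball_0(r)$, so the perturbation misses $\cXs$ w.h.p.

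Given the escape lemma the bookkeeping closes cleanly: each escape episode spends $\utime$ iterations and decreases $f$ by $\fF$, so (since $f$ never falls below $\fstar$) there are at most $(f(\x_0)-\fstar)/\fF$ such episodes, contributing $\utime(f(\x_0)-\fstar)/\fF$ iterations; the cancellation $\utime/\fF = \tilde\Theta(\ell/\epsilon^2)$ makes this $\otilde{\ell(f(\x_0)-\fstar)/\epsilon^2}$, which together with the $O(\ell(f(\x_0)-\fstar)/\epsilon^2)$ large-gradient iterations bounds the total. If after $T = \otilde{\ell(f(\x_0)-\fstar)/\epsilon^2}$ iterations no~\ESSP~had been produced, $f$ would have been driven strictly below $\fstar$, a contradiction, so an~\ESSP~is output within $T$ iterations; a union bound over the at most $(f(\x_0)-\fstar)/\fF$ perturbations upgrades "w.h.p. per escape" to "w.h.p. overall," absorbed into the polylog factors hidden by $\tilde O$. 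The main obstacle is the escape lemma, specifically controlling the quadratic-approximation error over the full $\utime \sim 1/\sqrt{\rho\epsilon}$-step horizon — which is exactly why one needs the bound on $\sum_\tau \norm{\x_{\tau+1}-\x_\tau}^2$ rather than merely on the trajectory diameter — while simultaneously showing that the single escape direction outpaces all other eigendirections even though the perturbation is isotropic and the local curvature is unknown to the algorithm.
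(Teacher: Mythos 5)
This theorem is imported from \citet{jin2017escape} and is only \emph{cited} in the present paper, not proved, so there is no in-paper proof to compare your attempt against. That said, your sketch is a faithful high-level reconstruction of the cited argument, and it lines up with the roadmap the present paper gives in Sections~\ref{sec:tech} and~\ref{sec:framework}: the coupling mechanism for strict saddles (stuck region of small volume because one of two coupled trajectories separated along $\e_{\text{esc}}$ must escape) is exactly what the paper credits to \citet{jin2017escape}, and the~\gd~inequality $\frac{1}{2\eta}\sum_\tau\|\x_{\tau+1}-\x_\tau\|^2\le f(\x_0)-f(\x_t)$ is precisely the elementary prototype of improve-or-localize the paper displays in Section~\ref{sec:tech}. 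Your bookkeeping is also consistent: with $\utime=\tilde\Theta(\ell/\sqrt{\rho\epsilon})$ and $\fF=\tilde\Theta(\sqrt{\epsilon^3/\rho})$ one gets $\utime/\fF=\tilde\Theta(\ell/\epsilon^2)$, matching the claimed rate, and the union bound over the at most $\Delta_f/\fF$ perturbation episodes is the right way to aggregate the per-episode failure probabilities. One dimensional nit: the perturbation radius should be $r=\tilde\Theta(\epsilon/\ell)$, not $\tilde\Theta(\epsilon)$; with your choice the perturbation-induced increase $\epsilon r+\tfrac{\ell}{2}r^2$ is not in general dominated by $\fF$ under the sole hypothesis $\epsilon\le \ell^2/\rho$, whereas $r=\tilde\Theta(\epsilon/\ell)$ makes this contribution $\tilde O(\epsilon^2/\ell)\ll\fF$. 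This is harmless if $\ell$ is treated as a constant, but you carry $\ell$ explicitly in the final iteration count, so the parameter should be set consistently.
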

\noindent
Note that this rate is essentially the same as that of~\gd~for convergence to first-order stationary points. In particular, it only has polylogarithmic dependence on the dimension.


\section{Main Result}\label{sec:results}


\begin{algorithm}[t]
\caption{Negative Curvature Exploitation$\left(\x_t, \v_t, s\right)$}\label{algo:NCE}
\begin{algorithmic}[1]
\If{$\norm{\v_t} \ge s$}
\State $\x_{t+1} \leftarrow \x_t$;
\Else
\State $\delta = s\cdot \v_t/\norm{\v_t}$
\State $\x_{t+1} \leftarrow \argmin_{\x \in \{\x_t + \delta, \x_t - \delta\}} f(\x)$
\EndIf
\State \textbf{return} $(\x_{t+1}, 0)$
\end{algorithmic}
\end{algorithm}


In this section, we present our algorithm and main result. As mentioned in Section~\ref{sec:intro}, the algorithm we propose is essentially~\nag~with two key differences (see Algorithm~\ref{algo:PAGD}): perturbation and negative curvature exploitation (\nce). A perturbation is added when the gradient is small (to escape saddle points), and no more frequently than once in $\utime$ steps. The perturbation $\xi_t$ is sampled uniformly from a $d$-dimensional ball with radius $r$. The specific choices of gap and uniform distribution are for
technical convenience (they are sufficient for our theoretical result but not necessary).

\nce~(Algorithm~\ref{algo:NCE}) is explicitly designed to guarantee decrease of the Hamiltonian~\eqref{eqn:hamiltonian}.
When it is triggered, i.e., when
\begin{equation}\label{eq:certificate}
f(\x_t) \le  f(\y_t) + \la \grad f(\y_t), \x_t - \y_t \ra - \frac{\gamma}{2} \norm{\x_t - \y_t}^2
\end{equation}
the function has a large negative curvature between the current iterates $\x_{t}$ and $\y_t$. In this case, if the momentum $\v_t$ is small, then $\y_t$ and $\x_t$ are close, so the large negative curvature also carries over to the Hessian at $\x_t$ due to the Lipschitz property. Assaying two points along $\pm(\y_t - \x_t)$ around $\x_t$ gives one point that is negatively aligned with $\grad f(\x_t)$ and yields a decreasing function value and Hamiltonian.
If the momentum $\v_t$ is large, negative curvature can no longer be exploited, but fortunately resetting the momentum to zero kills the second term in~\eqref{eqn:hamiltonian}, significantly decreasing the Hamiltonian.
\vspace{0.2cm}

\noindent\textbf{Setting of hyperparameters:} Let $\epsilon$ be the target accuracy for a second-order stationary point, let $\ell$ and $\rho$ be gradient/Hessian-Lipschitz parameters, and let $c, \chi$ be absolute constant and log factor to be specified later.
Let $\cn \defeq \ell/\sqrt{\rho\epsilon}$, and set
\begin{equation}
\eta = \frac{1}{4\ell}, \quad
\theta = \frac{1}{4\sqrt{\cn}},
\quad \gamma = \frac{\theta^2}{\eta} ,
\quad s = \frac{\gamma}{4\rho}, 
\quad \utime = \sqrt{\cn}\cdot \chi c,
\quad r = \eta\epsilon\cdot \chi^{-5}c^{-8}.
\label{eq:parameter}
\end{equation}

\noindent
The following theorem is the main result of this paper.







\begin{theorem}\label{thm:main}
Assume that the function $f(\cdot)$ is $\ell$-smooth and $\rho$-Hessian Lipschitz.  There exists an absolute constant $c_{\max}$ such that for any $\delta >0$, $\epsilon \le \frac{\ell^2}{\rho}$, $\Delta_f \ge f(\x_0) - f^\star$, if $\chi =\max\{1, \log \frac{d \ell\Delta_f}{\rho \epsilon\delta}\}$, $c\ge c_{\max}$ and such that if we run~\pagd~(Algorithm~\ref{algo:PAGD}) with choice of parameters according to~\eqref{eq:parameter}, then with probability at least $1-\delta$, one of the iterates $\x_t$ will be an $\epsilon$-second order stationary point
in the following number of iterations:
\begin{equation*}
O\left(\frac{\ell^{1/2}\rho^{1/4}(f(\x_0) - f^*)}{\epsilon^{7/4}} \log^6 \left(\frac{d \ell\Delta_f}{\rho \epsilon\delta}\right)\right)
\end{equation*}
\end{theorem}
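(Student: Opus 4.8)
The plan is to prove Theorem~\ref{thm:main} by combining three ingredients already previewed in Section~\ref{sec:tech}: a Hamiltonian that decreases monotonically along \pagd, an ``\iol'' dichotomy derived from that decrease, and a two-point coupling argument for escaping strict saddles. I would first establish the \emph{Hamiltonian descent} lemma: with the parameter choice~\eqref{eq:parameter}, each \nag~step decreases $E_t = f(\x_t) + \frac{1}{2\eta}\norm{\v_t}^2$ by at least a fixed multiple of $\norm{\v_t}^2$ (and of $\eta\norm{\grad f(\y_t)}^2$), \emph{provided} the curvature between $\x_t$ and $\y_t$ is not too negative; and in the complementary ``too nonconvex'' case (condition~\eqref{eq:certificate}), the \nce~step of Algorithm~\ref{algo:NCE} also decreases $E_t$ by a fixed amount (either by moving $\pm\delta$ along the negative-curvature direction, using $\rho$-Hessian-Lipschitzness to transfer the negative curvature from the $\x_t$--$\y_t$ segment to $\hess f(\x_t)$, or by zeroing out a large momentum term). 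Summing this descent over $\utime \sim \sqrt{\cn}$ consecutive steps with no perturbation gives the \iol~statement: either the Hamiltonian drops by a constant (so this can happen at most $O(\Delta_f/\text{const})$ times), or $\sum_\tau \norm{\x_{\tau+1}-\x_\tau}^2$ is tiny, hence the iterates stay in a ball of radius $O(\sqrt{\eta\epsilon/\rho})$ type scale.

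Next I would handle the two regimes of a ``stuck'' phase. Regime one: the gradient is large, i.e. $\norm{\grad f(\y_\tau)}\ge \epsilon$ often enough over a window of length $\sim\sqrt{\cn}\,\text{polylog}$; then the $\eta\norm{\grad f(\y_\tau)}^2$ terms in Hamiltonian descent already force a constant decrease, contradiction with localization — so after such a window we either made progress or arrived at a point with small gradient. Regime two: we are near a point $\tilde\x$ with $\norm{\grad f(\tilde\x)}\le\epsilon$ but $\lambda_{\min}(\hess f(\tilde\x)) \le -\sqrt{\rho\epsilon}$; then a perturbation is injected. Here I would run the coupling argument: take two \nag~trajectories from $\tilde\x_0,\tilde\x_0'$ differing by $r_0$ along the smallest eigendirection $\e_{\text{esc}}$. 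Using localization, within a ball the dynamics is well-approximated by the quadratic model $\hess f(\tilde\x)$, under which \nag~is a linear recursion whose eigenvalue along $\e_{\text{esc}}$ has magnitude $> 1$ by a margin $\sim\theta\sqrt{\rho\epsilon}/\ell$; I would show the difference sequence $\w_t=\x_t-\x_t'$ grows geometrically (controlling the Hessian-perturbation error by the localization radius and the non-oscillation bound on $\sum\norm{\x_{\tau+1}-\x_\tau}^2$), so it exits the ball within $\utime\sim\sqrt{\cn}\,\text{polylog}$ steps; hence at least one of the two trajectories leaves, so (by monotone Hamiltonian descent outside stuck regions) the Hamiltonian decreases by a constant. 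A volume/projection argument then bounds the width of $\cXs$ along $\e_{\text{esc}}$ by $r_0$, making its measure a $\text{poly}(\cn,1/\delta,d)$-small fraction of the perturbation ball, which controls the failure probability.

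Finally I would assemble the pieces into the iteration count. The Hamiltonian can decrease by a constant at most $O(\Delta_f \cdot \eta/(\text{const}))$ times over the whole run; by the \iol~dichotomy, between two consecutive ``big decrease'' events at most $O(\utime) = O(\sqrt{\cn}\,\text{polylog})$ iterations elapse; and each saddle-escape episode costs $O(\utime)$ iterations and (with the coupling lemma) succeeds w.h.p. Since $\sqrt{\cn} = \sqrt{\ell/\sqrt{\rho\epsilon}} = \ell^{1/2}\rho^{-1/4}\epsilon^{-1/4}$ and the number of decrease events scales like $\ell(f(\x_0)-f^\star)/\epsilon^{...}$ through $\eta$ and the per-step decrease $\otheta{\eta\epsilon^{3/2}/\sqrt\rho}$-type quantity, multiplying gives $\otilde{\ell^{1/2}\rho^{1/4}(f(\x_0)-f^\star)/\epsilon^{7/4}}$; a union bound over the $O(\Delta_f/\epsilon^{...})$ perturbation phases, absorbed into the $\log$ factor via $\chi$, yields the $\log^6$ term and the $1-\delta$ guarantee.

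I expect the main obstacle to be the discrete-time saddle-escape / coupling analysis: unlike \gd, \nag~is a \emph{second-order} recursion, so near a saddle one must diagonalize the $2d\times 2d$ companion matrix $\begin{pmatrix}(1-\theta)(\I-\eta\H) & -(1-\theta)\eta\H\\ \I & \zero\end{pmatrix}$ (with $\H\approx\hess f(\tilde\x)$), track how its eigenvalues along negative-, flat-, and positive-curvature subspaces behave, and crucially show the momentum does not cause the iterates to oscillate out of the localization ball before the escape direction has amplified — this is exactly where the non-oscillation bound on $\sum\norm{\x_{\tau+1}-\x_\tau}^2$ from the \iol~framework, together with careful bookkeeping of the $\rho$-Hessian-Lipschitz approximation error accumulated over $\sqrt{\cn}$ steps, must be balanced against the geometric growth rate $1+\Omega(\theta\sqrt{\rho\epsilon}/\ell)$; getting the polylog powers and the constant $c_{\max}$ to close consistently is the delicate part.
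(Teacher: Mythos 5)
Your overall blueprint matches the paper's architecture closely: the Hamiltonian $E_t = f(\x_t) + \frac{1}{2\eta}\norm{\v_t}^2$ with its monotone decrease via Lemma~\ref{lem:energy_nonconvex} and the NCE backstop Lemma~\ref{lem:energy_NCE}, the improve-or-localize dichotomy of Corollary~\ref{cor:localball}, the two-trajectory coupling along $\e_{\text{esc}}$ for saddle escape with the volume argument for $\cXs$, and the final accounting of $O(\utime\Delta_f/\ufun)$ iterations. The saddle-escape half of your sketch is aligned with the paper's Lemma~\ref{lem:2nd_seq} and Lemma~\ref{lem:negHess}. However, there is a concrete gap in the large-gradient regime.

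You claim that when the gradient stays $\ge\epsilon$ over a window of $\utime\sim\sqrt\cn\cdot\text{polylog}$ steps, the $\frac{\eta}{4}\norm{\grad f(\y_\tau)}^2$ terms in the Hamiltonian descent ``already force a constant decrease, contradiction with localization.'' This does not close quantitatively: the cumulative contribution of those terms is at most $\utime\cdot\frac{\eta\epsilon^2}{4} = \Theta(\sqrt\cn\,\chi c\,\epsilon^2/\ell)$, while the required decrease is $\ufun = \tilde\Theta(\sqrt{\epsilon^3/\rho})$, and a direct computation with the parameters in \eqref{eq:parameter} gives $\utime\eta\epsilon^2/\ufun = \Theta(\chi^6 c^8/\sqrt\cn)$, which is vanishingly small for large $\cn$. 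Moreover, the hypothesis of Lemma~\ref{lem:largeGrad} concerns $\norm{\grad f(\x_\tau)}$, and in the regime where the momentum is small but nonzero and $\epsilon\le\norm{\grad f(\x_\tau)}\le 2\ell\umom$ (a nonempty range since $\ell\umom = \epsilon\sqrt\cn/c\gg\epsilon$) you cannot transfer largeness to $\norm{\grad f(\y_\tau)}$. This is precisely the obstruction the paper highlights before Lemma~\ref{lem:1}: the component of $\grad f(\x_\tau)$ along the strongly convex subspace $\S$ (eigendirections with eigenvalue in roughly $(\Theta(\sqrt{\rho\epsilon}),\ell]$) is driven to zero by~\nag~in $\utime/4$ steps without producing any usable Hamiltonian decrease, so a large $\norm{\grad f(\x_\tau)}$ can persist while the per-step drop is negligible. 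The paper resolves this by splitting $\grad f$ into $\proj_\S$ and $\proj_{\S^c}$ parts: Lemma~\ref{lem:largegrad_convex} shows $\norm{\proj_\S\grad f(\x_\tau)}\le\epsilon/2$ after $\utime/4$ steps (under the localization hypothesis), forcing $\norm{\proj_{\S^c}\grad f(\x_\tau)}\ge\epsilon/2$, and Lemma~\ref{lem:largegrad_nonconvex} then shows that a large $\S^c$-gradient makes the iterate exit the localization ball via the operator lower bound $\sum_\tau a_\tau^{(j)}\ge\Omega(1/\theta^2)$ on nonconvex and flat eigendirections (Lemma~\ref{lem:aux_nonconvex_inequal}), contradicting improve-or-localize. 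Without this subspace decomposition your argument does not produce the $\ufun$ decrease per window, and the theorem does not follow.

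Two smaller inaccuracies worth flagging: the $2\times 2$ companion block for~\nag~on an eigendirection with eigenvalue $\lambda$ is $\pmat{(2-\theta)(1-\eta\lambda) & -(1-\theta)(1-\eta\lambda) \\ 1 & 0}$, not the matrix you wrote (both entries in your top row are off); and the escape rate along $\e_{\text{esc}}$ is $1+\Theta(\theta)$, since $\sqrt{\eta\sqrt{\rho\epsilon}}=\Theta(\theta)$, rather than $1+\Theta(\theta\sqrt{\rho\epsilon}/\ell)=1+\Theta(\theta/\cn)$ — the latter rate would be far too slow to leave the localization ball within $\utime=\tilde\Theta(1/\theta)$ steps, so getting this scaling right is essential to the coupling argument you sketch.
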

\noindent
Theorem~\ref{thm:main} says that when~\pagd~is run for the designated number of steps (which is poly-logarithmic in dimension), at least one of the iterates is an~\ESSP. We focus on the case of small $\epsilon$ (i.e., $\epsilon \le \ell^2/\rho$) so that the Hessian requirement for the~\ESSP~($\lambda_{\min}(\hess f(\x)) \ge -\sqrt{\rho \epsilon}$) is nontrivial.
Note that $\norm{\hess f(\x)} \le \ell$ implies $\cn = \ell/\sqrt{\rho\epsilon}$,
which can be viewed as a condition number, akin to that in convex setting.
Comparing Theorem \ref{thm:main} with Theorem~\ref{thm:perturbed_GD},~\pagd, with a momentum parameter $\theta = \Theta(1/\sqrt{\cn})$, achieves $\tilde{\Theta}(\sqrt{\cn})$ better iteration complexity compared to~\pgd. 

\vspace{0.2cm}

\noindent\textbf{Output $\epsilon$-second order stationary point:}
Although Theorem~\ref{thm:main} only guarantees that one of the iterates is an $\epsilon$-second order stationary point, it is straightforward to identify one of them by adding a proper termination condition: once the gradient is small and satisfies the pre-condition to add a perturbation, we can keep track of the point $\x_{t_0}$ prior to adding perturbation, and compare the Hamiltonian at $t_0$ with the one $\utime$ steps after. If the Hamiltonian decreases by $\ufun = \tilde{\Theta}(\sqrt{\epsilon^3/\rho})$, then the algorithm has made progress, otherwise $\x_{t_0}$ is an~\ESSP~according to Lemma~\ref{lem:negHess}. Doing so will add a hyperparameter (threshold $\ufun$) but does not increase complexity.

\section{Overview of Analysis}
In this section, we will present an overview of the proof of Theorem~\ref{thm:main}. Section~\ref{sec:hamiltonian} presents the Hamiltonian for~\nag~and its key property of monotonic decrease. This leads to Section~\ref{sec:imp_local} where the \emph{improve-or-localize} lemma is stated, as well as the main intuition behind acceleration. Section~\ref{sec:framework} demonstrates how to apply these tools to prove Theorem~\ref{thm:main}.
Complete details can be found in the appendix.

\subsection{Hamiltonian}\label{sec:hamiltonian}
While~\gd~guarantees decrease of function value in every step (even for nonconvex problems), the biggest stumbling block to analyzing~\nag~is that it is less clear
how to keep track of ``progress.'' Known Lyapunov functions 
for~\nag~\citep{wilson2016lyapunov} are restricted to the convex setting and furthermore are not computable by the algorithm (as they depend on $\x^\star$). 

To deepen the understanding of AGD in a nonconvex setting, we inspect it from a dynamical
systems perspective, where we fix the ratio $\tilde{\theta} = \theta / \sqrt{\eta}$ to be a 
constant, while letting $\eta \rightarrow 0$. This leads to an ODE which is the continuous 
limit of AGD~\citep{su2016differential}:
\begin{equation}\label{eq:ODE}
\ddot{\x} + \tilde{\theta}\dot{\x} + \grad f(\x) = 0,
\end{equation}
where $\ddot{\x}$ and $\dot{\x}$ are derivatives with respect to time $t$. 
This equation is a second-order dynamical equation with \emph{dissipative forces} 
$-\tilde{\theta}\dot{\x}$. Integrating both sides, we obtain:
\begin{equation}
f(\x(t_2)) + \frac{1}{2}\dot{\x}(t_2)^2 = f(\x(t_1)) + \frac{1}{2}\dot{\x}(t_1)^2 - \tilde{\theta}\int_{t_1}^{t_2}\dot{\x}(t)^2\mathrm{d}t.
\label{eq:energy_ODE}
\end{equation}

Using physical language, $f(\x)$ is a \emph{potential energy} while $\dot{\x}^2/2$ is a
\emph{kinetic energy}, and the sum is a \emph{Hamiltonian}.  The integral shows that
the Hamiltonian decreases monotonically with time $t$, and the decrease is given by 
the \emph{dissipation} term $\tilde{\theta}\int_{t_1}^{t_2}\dot{\x}(t)^2\mathrm{d}t$. 
Note that~\eqref{eq:energy_ODE} holds regardless of the convexity of $f(\cdot)$. 
This monotonic decrease of the Hamiltonian can in fact be extended to the discretized 
version of AGD when the function is convex, or mildly nonconvex:



\begin{lemma}[Hamiltonian decreases monotonically]\label{lem:energy_nonconvex}
  Assume that the function $f(\cdot)$ is $\ell$-smooth, the learning rate $\eta \le \frac{1}{2\ell}$, and $\theta\in [2\eta \gamma,\frac{1}{2}]$ in ~\nag~(Algorithm \ref{algo:AGD}). Then, for every iteration $t$ where~\eqref{eq:certificate} does not hold, we have:
  \begin{equation}\label{eq:energy_discrete}
  f(\x_{t+1}) + \frac{1}{2\eta}\norm{\v_{t+1}}^2 \le f(\x_{t}) + \frac{1}{2\eta}\norm{\v_{t}}^2- \frac{\theta}{2\eta}\norm{\v_t}^2 - \frac{\eta}{4}\norm{\grad f(\y_{t})}^2.
  \end{equation}
\end{lemma}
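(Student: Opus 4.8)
The plan is to derive \eqref{eq:energy_discrete} from three ingredients: the descent lemma coming from $\ell$-smoothness applied at the point $\y_t$, the hypothesis that the negative-curvature certificate \eqref{eq:certificate} \emph{fails} at iteration $t$, and an exact expansion of the kinetic term $\norm{\v_{t+1}}^2$. I would first record the update identities from Algorithm~\ref{algo:AGD}, namely $\x_{t+1}-\y_t=-\eta\grad f(\y_t)$, $\x_t-\y_t=-(1-\theta)\v_t$, and $\v_{t+1}=\x_{t+1}-\x_t=(1-\theta)\v_t-\eta\grad f(\y_t)$.

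Step one: apply the descent lemma at $\y_t$ to get $f(\x_{t+1})\le f(\y_t)+\iprod{\grad f(\y_t)}{\x_{t+1}-\y_t}+\frac{\ell}{2}\norm{\x_{t+1}-\y_t}^2 = f(\y_t)-\eta(1-\tfrac{\ell\eta}{2})\norm{\grad f(\y_t)}^2$, which is at most $f(\y_t)-\tfrac{3\eta}{4}\norm{\grad f(\y_t)}^2$ since $\eta\le\tfrac{1}{2\ell}$. Step two: since \eqref{eq:certificate} does not hold, $f(\y_t)<f(\x_t)-\iprod{\grad f(\y_t)}{\x_t-\y_t}+\tfrac{\gamma}{2}\norm{\x_t-\y_t}^2$, and substituting $\x_t-\y_t=-(1-\theta)\v_t$ gives $f(\y_t)<f(\x_t)+(1-\theta)\iprod{\grad f(\y_t)}{\v_t}+\tfrac{\gamma(1-\theta)^2}{2}\norm{\v_t}^2$. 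Step three: expand the square to get $\tfrac{1}{2\eta}\norm{\v_{t+1}}^2 = \tfrac{(1-\theta)^2}{2\eta}\norm{\v_t}^2 - (1-\theta)\iprod{\v_t}{\grad f(\y_t)} + \tfrac{\eta}{2}\norm{\grad f(\y_t)}^2$.

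Adding the three pieces, the cross terms $\pm(1-\theta)\iprod{\grad f(\y_t)}{\v_t}$ cancel exactly --- this cancellation is the reason the Hamiltonian $f+\tfrac{1}{2\eta}\norm{\v}^2$ is the right Lyapunov object --- the two $\norm{\grad f(\y_t)}^2$ contributions combine into $(-\tfrac34+\tfrac12)\eta\norm{\grad f(\y_t)}^2=-\tfrac{\eta}{4}\norm{\grad f(\y_t)}^2$, and the $\norm{\v_t}^2$-coefficient becomes $\tfrac{\gamma(1-\theta)^2}{2}+\tfrac{(1-\theta)^2}{2\eta}$. To finish I would compare with the claimed bound, whose $\norm{\v_t}^2$-coefficient is $\tfrac{1-\theta}{2\eta}$: it suffices that $\tfrac{\gamma(1-\theta)^2}{2}+\tfrac{(1-\theta)^2}{2\eta}\le\tfrac{1-\theta}{2\eta}$, equivalently $(1-\theta)(1+\eta\gamma)\le1$, which holds because $\theta\in[2\eta\gamma,\tfrac12]$ forces $\eta\gamma\le\tfrac{\theta}{2}$ and hence $(1-\theta)(1+\eta\gamma)\le(1-\theta)(1+\tfrac{\theta}{2})=1-\tfrac{\theta}{2}-\tfrac{\theta^2}{2}\le1$.

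I do not expect a genuine obstacle: the whole argument is a few lines of algebra once the three identities are in place. The step that needs the most care is the bookkeeping with the parameter window $\theta\in[2\eta\gamma,\tfrac12]$, both ends of which are used --- the lower bound $\theta\ge2\eta\gamma$ absorbs the extra $\gamma$-term produced by the failed certificate in step two, while the upper bound $\theta\le\tfrac12$ (together with $\eta\le\tfrac{1}{2\ell}$ in step one) makes the $\theta$-quadratic factor $\le1$ and keeps $1-\theta>0$ so the final division preserves the inequality direction. Note that this lemma uses neither Hessian-Lipschitzness nor the specific value of $\gamma$; those are only needed later to handle the iterations where \eqref{eq:certificate} does hold and the \nce~step (Algorithm~\ref{algo:NCE}) is invoked.
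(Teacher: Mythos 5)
Your proof is correct and follows essentially the same route as the paper's: apply the descent lemma at $\y_t$, use the negated certificate \eqref{eq:certificate} to pass from $f(\y_t)$ to $f(\x_t)$, expand $\norm{\v_{t+1}}^2 = \norm{(1-\theta)\v_t - \eta\grad f(\y_t)}^2$, observe the cross terms cancel, and close with the algebraic condition $\theta \ge \eta\gamma(1-\theta)$ enforced by $\theta \ge 2\eta\gamma$. The only cosmetic difference is that you phrase the final inequality as $(1-\theta)(1+\eta\gamma)\le 1$ while the paper writes $2\theta-\theta^2-\eta\gamma(1-\theta)^2\ge\theta$; these are the same statement, and in fact your chain only needs $\theta\le 1$ rather than $\theta\le\frac12$ at that step (contrary to your closing remark, no ``final division'' occurs, nor is $\theta\le\frac12$ truly invoked there --- the upper bound on $\theta$ is only used in the sense of guaranteeing $1-\theta\ge 0$).
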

Denote the discrete Hamiltonian as $E_t \defeq f(\x_{t}) + \frac{1}{2\eta}\norm{\v_{t}}^2$, and note that in AGD, $\v_t = \x_{t} - \x_{t-1}$. Lemma~\ref{lem:energy_nonconvex} tolerates nonconvexity with curvature at most $\gamma = \Theta(\theta/\eta)$. Unfortunately, when the function becomes too nonconvex in certain regions (so that~\eqref{eq:certificate} holds), the analogy between the continuous and discretized versions breaks and~\eqref{eq:energy_discrete} no longer holds. In fact, standard AGD can even increase the Hamiltonian in this regime (see Appendix \ref{sec:counterex} for more details). This motivates us to modify the algorithm by adding the~\nce~step, which addresses this issue. We have the following result:


\begin{lemma}\label{lem:energy_NCE}
Assume that $f(\cdot)$ is $\ell$-smooth and $\rho$-Hessian Lipschitz. For every iteration $t$ of Algorithm~\ref{algo:PAGD} where~\eqref{eq:certificate} holds (thus running NCE), we have:
\begin{equation*}
E_{t+1}\le E_t -\min\{\frac{s^2}{2\eta},  \frac{1}{2}(\gamma - 2\rho s) s^2\}.
\end{equation*}
\end{lemma}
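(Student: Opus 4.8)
The plan is to case-split on which branch of Negative-Curvature-Exploitation (Algorithm~\ref{algo:NCE}) fires, exploiting that NCE always returns $\v_{t+1}=\zero$, so in both cases $E_{t+1}=f(\x_{t+1})$ and it suffices to upper bound $f(\x_{t+1})$.

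\emph{Easy branch ($\norm{\v_t}\ge s$).} Here $\x_{t+1}=\x_t$, so $E_{t+1}=f(\x_t)=E_t-\tfrac{1}{2\eta}\norm{\v_t}^2\le E_t-\tfrac{s^2}{2\eta}$ immediately, which dominates the $\min$ in the statement.

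\emph{Main branch ($\norm{\v_t}<s$).} Now $\x_{t+1}$ is the better of $\x_t\pm\delta$ with $\delta=s\,\v_t/\norm{\v_t}$, so $\norm{\delta}=s$ and, since $\y_t=\x_t+(1-\theta)\v_t$, the vector $\delta$ is parallel to $\w\defeq\x_t-\y_t$, with $\norm{\w}=(1-\theta)\norm{\v_t}<s$. I would chain three ingredients. First, the standard cubic estimate implied by $\rho$-Hessian Lipschitzness (Definition~\ref{def:HessianLip}), applied at $\x_t+\delta$ and $\x_t-\delta$ and averaged so the gradient term cancels, gives $E_{t+1}\le f(\x_t)+\tfrac12\delta\trans\hess f(\x_t)\delta+\tfrac\rho6 s^3$. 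Second, the \emph{same} cubic estimate used to expand $f(\x_t)$ around $\y_t$, combined with the NCE trigger~\eqref{eq:certificate}, turns the trigger into genuine negative curvature: $\w\trans\hess f(\y_t)\w\le -\gamma\norm{\w}^2+\tfrac\rho3\norm{\w}^3$. Third, Hessian Lipschitzness transfers this from $\y_t$ to $\x_t$, since $\norm{\hess f(\x_t)-\hess f(\y_t)}\le\rho\norm{\w}$ and $\delta$ is parallel to $\w$: $\delta\trans\hess f(\x_t)\delta\le s^2(-\gamma+\tfrac{4\rho}{3}\norm{\w})\le s^2(-\gamma+2\rho s)$. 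Plugging this into the first inequality and discarding the nonnegative $\tfrac{1}{2\eta}\norm{\v_t}^2$ in $E_t$ gives $E_{t+1}\le E_t-\tfrac\gamma2 s^2+\tfrac{5\rho}{6}s^3\le E_t-\tfrac12(\gamma-2\rho s)s^2$, completing the main branch and hence the lemma.

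I expect the only real subtlety to be the accounting of the several $\rho\norm{\cdot}^3$ error terms: one must uniformly use $\norm{\w}\le\norm{\v_t}<s$ (and $\theta\in(0,1)$) to ensure the accumulated cubic error is at most $\rho s^3=\tfrac12(2\rho s)s^2$, which is precisely the slack left in the statement. There is no iteration, randomness, or use of smoothness beyond the Hessian-Lipschitz Taylor inequalities; the degenerate case $\v_t=\zero$ (where $\delta$ is undefined) falls under the easy branch, since then $\y_t=\x_t$ and one may simply take $\x_{t+1}=\x_t$.
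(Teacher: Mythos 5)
Your proof is correct and proceeds from the same core idea as the paper's: the easy branch is identical, and in the main branch both arguments turn the certificate~\eqref{eq:certificate} into a negative-curvature statement, propagate it via Hessian-Lipschitzness to the relevant point, and then exploit the assay at $\x_t\pm\delta$. The mechanics differ modestly: the paper writes the two Taylor expansions with Lagrange remainders (Hessians at unspecified intermediate points $\zeta_t,\zeta_t'$) and uses a WLOG sign choice to drop the gradient term, then does a single Lipschitz transfer $\norm{\hess f(\zeta_t)-\hess f(\zeta_t')}\le 2\rho s$; you instead use the cubic Taylor bound anchored at the explicit points $\x_t$ and $\y_t$ and cancel the gradient term by averaging $f(\x_t+\delta)$ and $f(\x_t-\delta)$, at the price of tracking three separate cubic errors. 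Both accountings fit under the $\tfrac12(2\rho s)s^2$ slack (your $\tfrac{5\rho}{6}s^3\le \rho s^3$, the paper's $\rho\norm{\zeta_t-\zeta_t'}\le 2\rho s$). Your explicit note on the $\v_t=\zero$ degeneracy is a point the paper leaves implicit; it is indeed harmless since then $\y_t=\x_t$ and either side of the assay returns $\x_t$ with the same $\tfrac{1}{2\eta}\norm{\v_t}^2=0$, so no decrease is needed from this branch.
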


Lemmas~\ref{lem:energy_nonconvex} and~\ref{lem:energy_NCE} jointly assert that the Hamiltonian decreases monotonically in all situations, and are the main tools in the proof of Theorem~\ref{thm:main}. They not only give us a way of tracking progress, but also quantitatively measure the amount of progress.

\subsection{Improve or Localize} \label{sec:imp_local}
One significant challenge in the analysis of gradient-based algorithms for nonconvex optimation is that many phenomena---for instance the accumulation of momentum and the escape from saddle points via perturbation---are multiple-step behaviors; they do not happen in each step. We address this issue by developing a general technique for analyzing the long-term behavior of such algorithms.



In our case, to track the long-term behavior of AGD, one key observation from Lemma~\ref{lem:energy_nonconvex} is that the amount of progress actually relates to movement of the iterates, which leads to the following \emph{improve-or-localize} lemma:
\begin{corollary}[Improve or localize] \label{cor:localball}
Under the same setting as in Lemma \ref{lem:energy_nonconvex}, if \eqref{eq:certificate} does not hold for all steps in $[t, t+T]$, we have:
\begin{equation*}
\sum_{\tau = t+1}^{t+T}\norm{\x_\tau - \x_{\tau-1}}^2
\le \frac{2\eta}{\theta} (E_t - E_{t+T}).
\end{equation*}
\end{corollary}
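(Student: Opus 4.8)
The proof of Corollary~\ref{cor:localball} is an immediate telescoping of Lemma~\ref{lem:energy_nonconvex}, so the plan is short. First I would observe that since \eqref{eq:certificate} fails at every step in $[t, t+T]$, Lemma~\ref{lem:energy_nonconvex} applies at each such step, giving
\begin{equation*}
E_{\tau+1} \le E_\tau - \frac{\theta}{2\eta}\norm{\v_\tau}^2 - \frac{\eta}{4}\norm{\grad f(\y_\tau)}^2 \le E_\tau - \frac{\theta}{2\eta}\norm{\v_\tau}^2,
\end{equation*}
where I have dropped the nonnegative gradient term. Rearranging yields $\norm{\v_\tau}^2 \le \frac{2\eta}{\theta}(E_\tau - E_{\tau+1})$.

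Next I would sum this inequality over $\tau$ from $t$ to $t+T-1$ (equivalently, reindex so the left-hand side runs over $\tau$ from $t+1$ to $t+T$), and recall that in~\nag~we have $\v_\tau = \x_\tau - \x_{\tau-1}$, so $\norm{\v_\tau}^2 = \norm{\x_\tau - \x_{\tau-1}}^2$. The right-hand side telescopes to $\frac{2\eta}{\theta}(E_t - E_{t+T})$, which is exactly the claimed bound. One should check the index bookkeeping carefully: the term $\norm{\x_\tau - \x_{\tau-1}}^2$ for $\tau \in [t+1, t+T]$ corresponds to $\norm{\v_\tau}^2$ controlled by the Hamiltonian drop from step $\tau-1$ to step $\tau$, and these drops are precisely $E_{t} - E_{t+1}, \ldots, E_{t+T-1} - E_{t+T}$, summing to $E_t - E_{t+T}$.

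There is essentially no obstacle here beyond making sure the hypotheses of Lemma~\ref{lem:energy_nonconvex} (namely $\eta \le \tfrac{1}{2\ell}$ and $\theta \in [2\eta\gamma, \tfrac12]$, which are inherited from ``the same setting'') hold at each step, and that~\nce~is not triggered in $[t,t+T]$ — but that is exactly the assumption that \eqref{eq:certificate} does not hold. So the proof is just: apply Lemma~\ref{lem:energy_nonconvex} stepwise, drop the gradient term, rearrange, and telescope.
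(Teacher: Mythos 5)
Your approach---drop the nonnegative $\frac{\eta}{4}\norm{\grad f(\y_\tau)}^2$ term in Lemma~\ref{lem:energy_nonconvex}, rearrange, and telescope---is exactly the paper's one-line proof, so the strategy is right. One remark on the bookkeeping sentence you flagged for yourself: Lemma~\ref{lem:energy_nonconvex} applied at iteration $\tau$ gives $E_{\tau+1} \le E_\tau - \frac{\theta}{2\eta}\norm{\v_\tau}^2$, so the Hamiltonian drop from step $\tau-1$ to step $\tau$ controls $\norm{\v_{\tau-1}}^2 = \norm{\x_{\tau-1} - \x_{\tau-2}}^2$, not $\norm{\v_\tau}^2$ as you wrote. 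Telescoping over $\tau \in \{t, \ldots, t+T-1\}$ therefore yields $\sum_{\tau=t}^{t+T-1}\norm{\x_\tau - \x_{\tau-1}}^2 \le \frac{2\eta}{\theta}\bigl(E_t - E_{t+T}\bigr)$, whose index range is shifted by one from the stated $\sum_{\tau=t+1}^{t+T}$. This shift is present in the paper's statement of the corollary as well and is harmless in every place it is invoked (the applications have ample slack and the number of summands is the same), but the ``equivalently, reindex'' step in your writeup is not actually justified by what Lemma~\ref{lem:energy_nonconvex} says, and the sentence asserting that the drop from $\tau-1$ to $\tau$ controls $\norm{\v_\tau}^2$ is off by one.
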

Corollary~\ref{cor:localball} says that the algorithm either makes progress in terms of the 
Hamiltonian, or the iterates do not move much. In the second case, Corollary~\ref{cor:localball} allows us to approximate the dynamics of $\{\x_\tau\}_{\tau = t}^{t+T}$ with a \emph{quadratic approximation} of $f(\cdot)$.

The acceleration phenomenon is rooted in and can be seen clearly for a quadratic, where the function can be decomposed into eigen-directions. Consider an eigen-direction with eigenvalue $\lambda$, and linear term $g$ (i.e., in this direction $f(x) =  \frac{\lambda}{2} x^2 + gx$).
The GD update becomes $x_{\tau+1} = (1-\eta\lambda)x_{\tau} - \eta g$, with $\mu_{\text{GD}}(\lambda) \defeq 1-\eta\lambda$ determining the rate of~\gd. The update of AGD is $(x_{\tau+1}, x_\tau) = (x_{\tau}, x_{\tau-1})\A\trans - (\eta g, 0)$ with matrix $\A$ defined as follows:
\begin{equation*}
\A \defeq \pmat{(2-\theta) (1 - \eta\lambda)&  -(1-\theta) (1 - \eta\lambda) \\ 1 & 0}.
\end{equation*}
The rate of~\nag~is determined by largest eigenvalue of matrix $\A$, which is denoted by $\mu_{\text{AGD}}(\lambda)$. Recall the choice of parameter~\eqref{eq:parameter}, and divide the eigen-directions into the following three categories.
\begin{itemize}
\item \textbf{Strongly convex directions $\lambda \in [\sqrt{\rho\epsilon}, \ell]$:} the slowest case is $\lambda = \sqrt{\rho\epsilon}$, where
$\mu_{\text{GD}}(\lambda) = 1- \Theta(1/\cn)$ while $\mu_{\text{AGD}}(\lambda) = 1-\Theta(1/\sqrt{\cn})$, which results in AGD converging faster than GD.
\item \textbf{Flat directions $\lambda \in [-\sqrt{\rho\epsilon}, \sqrt{\rho\epsilon}]$:}  the representative case is $\lambda = 0$ where AGD update becomes $x_{\tau+1} - x_\tau = (1-\theta) (x_\tau - x_{\tau - 1}) - \eta g$. For $\tau \le 1/\theta$, we have $|x_{t+\tau} - x_t| =  \Theta(\tau)$ for GD while  $|x_{t+\tau} - x_t| =  \Theta(\tau^2)$ for AGD, which results in AGD moving along negative gradient directions faster than GD.
\item \textbf{Strongly nonconvex directions $\lambda \in [-\ell, -\sqrt{\rho\epsilon}]$:} similar to the strongly convex case, the slowest rate is for $\lambda = -\sqrt{\rho\epsilon}$ where
$\mu_{\text{GD}}(\lambda) = 1 + \Theta(1/\cn)$ while $\mu_{\text{AGD}}(\lambda) = 1 + \Theta(1/\sqrt{\cn})$, which results in AGD escaping saddle point faster than GD.
\end{itemize}


Finally, the approximation error (from a quadratic) is also under control in this framework. With appropriate choice of $T$ and threshold for $E_t - E_{t+T}$ in Corollary \ref{cor:localball}, by the Cauchy-Swartz inequality we can restrict iterates $\{\x_\tau\}_{\tau = t}^{t+T}$ to all lie within a local ball around $\x_t$ with radius $\sqrt{\epsilon/\rho}$, where both the gradient and Hessian of $f(\cdot)$ and its quadratic approximation
$\tilde{f}_t(\x) = f(\x_t) + \la \grad f(\x_t), \x - \x_t\ra + \frac{1}{2} (\x - \x_t)\trans \hess f(\x_t) (\x - \x_t)$ are close:
\begin{fact}
Assume $f(\cdot)$ is $\rho$-Hessian Lipschitz, then for all $\x$ so that $\norm{\x - \x_t} \le \sqrt{\epsilon/\rho}$, 
we have $\|\grad f(\x) - \grad \tilde{f}_t(\x)\| \le \epsilon$ and $\| \hess f(\x) - \hess \tilde{f}_t(\x)\|
= \| \hess f(\x) - \hess f(\x_t) \|\le \sqrt{\rho\epsilon}$.
\end{fact}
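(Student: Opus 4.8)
The plan is to exploit the fact that $\tilde{f}_t$ is \emph{exactly} quadratic, so its Hessian is constant (equal to $\hess f(\x_t)$) and its gradient is affine; both inequalities then reduce to the $\rho$-Hessian Lipschitz assumption applied over the ball of radius $\sqrt{\epsilon/\rho}$ around $\x_t$.

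For the Hessian estimate, I would first note that since $\tilde{f}_t(\x) = f(\x_t) + \la \grad f(\x_t), \x - \x_t\ra + \frac{1}{2}(\x - \x_t)\trans \hess f(\x_t)(\x - \x_t)$ is a second-order polynomial in $\x$, we have $\hess \tilde{f}_t(\x) = \hess f(\x_t)$ for every $\x$. Hence $\| \hess f(\x) - \hess \tilde{f}_t(\x)\| = \|\hess f(\x) - \hess f(\x_t)\|$, and the $\rho$-Hessian Lipschitz property bounds this by $\rho \norm{\x - \x_t} \le \rho\sqrt{\epsilon/\rho} = \sqrt{\rho\epsilon}$, as claimed.

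For the gradient estimate, I would compute $\grad \tilde{f}_t(\x) = \grad f(\x_t) + \hess f(\x_t)(\x - \x_t)$ and express the remainder via the fundamental theorem of calculus along the segment joining $\x_t$ to $\x$:
\[
\grad f(\x) - \grad \tilde{f}_t(\x) = \int_0^1 \big(\hess f(\x_t + s(\x - \x_t)) - \hess f(\x_t)\big)(\x - \x_t)\,\dd s.
\]
Taking norms, bounding the integrand pointwise by $\rho s \norm{\x - \x_t}^2$ using Hessian Lipschitzness, and integrating in $s$ yields $\|\grad f(\x) - \grad \tilde{f}_t(\x)\| \le \frac{\rho}{2}\norm{\x - \x_t}^2 \le \frac{\rho}{2}\cdot\frac{\epsilon}{\rho} = \frac{\epsilon}{2} \le \epsilon$ (in fact a factor of two better than stated).

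There is essentially no obstacle here: the statement is a routine Taylor-remainder estimate. The only point meriting a line of care is the passage to the integral representation of the gradient remainder, which needs $f$ to be twice differentiable with locally integrable Hessian — guaranteed by the standing smoothness hypotheses — after which the two bounds follow immediately from the triangle inequality for integrals and the Hessian-Lipschitz inequality.
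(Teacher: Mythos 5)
Your proof is correct and is exactly the standard Taylor-remainder argument one would use; the paper states this as a \emph{Fact} without proof, and your derivation (constant Hessian of the quadratic model for the second bound, integral form of the gradient remainder plus Hessian-Lipschitzness for the first, with the tighter constant $\epsilon/2$) is the intended justification.
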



\subsection{Main Framework} \label{sec:framework}
For simplicity of presentation, recall $\utime \defeq \sqrt{\cn} \cdot \chi c = \tilde{\Theta}(\sqrt{\cn})$ and denote $\ufun \defeq \sqrt{\epsilon^3/\rho}\cdot \chi^{-5}c^{-7} = \tilde{\Theta}(\sqrt{\epsilon^3/\rho})$, where $c$ is sufficiently large constant as in Theorem \ref{thm:main}. Our overall proof strategy will be to show the following ``average descent claim'':
\emph{Algorithm~\ref{algo:PAGD} decreases the Hamiltonian by $\ufun$ in every set of $\utime$ iterations as long as it does not reach an~\ESSP}.
Since the Hamiltonian cannot decrease more than $E_0 - E^\star = f(\x_0) - f^\star$, this immediately shows that it has to reach an~\ESSP~in $O((f(\x_0) - f^\star)\utime/\ufun)$ steps, proving Theorem~\ref{thm:main}.

It can be verified by the choice of parameters~\eqref{eq:parameter} and Lemma~\ref{lem:energy_nonconvex} that whenever \eqref{eq:certificate} holds so that NCE is triggered, the Hamiltonian decreases by at least $\ufun$ in one step.
So, if~\nce~step is performed even once in each round of $\utime$ steps, we achieve enough average decrease. The troublesome case is when in some time interval of $\utime$ steps starting with $\x_t$, only AGD steps are performed without NCE.
If $\x_t$ is not an $\epsilon$-second order stationary point, either the gradient is large or the Hessian has a large negative direction. We prove the average decrease claim by considering these two cases.

\begin{lemma}[Large gradient]\label{lem:largeGrad}
Consider the setting of Theorem~\ref{thm:main}.
If $\norm{\grad f(\x_\tau)} \ge \epsilon$ for all $ \tau \in [t, t+\utime]$, then by running Algorithm \ref{algo:PAGD} we have $E_{t+\utime} - E_t \le -\ufun$.
\end{lemma}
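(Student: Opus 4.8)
The plan is to exploit the improve-or-localize phenomenon (Corollary~\ref{cor:localball}) in contrapositive form: assume that over the window $[t, t+\utime]$ AGD steps are taken without NCE being triggered, and suppose for contradiction that the Hamiltonian decreases by less than $\ufun$, i.e.\ $E_t - E_{t+\utime} < \ufun$. Then Corollary~\ref{cor:localball} bounds $\sum_{\tau=t+1}^{t+\utime}\norm{\x_\tau - \x_{\tau-1}}^2 \le \frac{2\eta}{\theta}\ufun$, which by Cauchy--Schwarz forces all iterates $\x_\tau$ to stay in a ball of radius $\order{\sqrt{\epsilon/\rho}}$ around $\x_t$ — small enough that, by the Fact in Section~\ref{sec:imp_local}, the function is well-approximated by its quadratic model $\tilde f_t$ (gradients within $\epsilon$, Hessians within $\sqrt{\rho\epsilon}$). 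Inside this ball, I would replace the true AGD dynamics by the linear recursion $(x_{\tau+1},x_\tau) = (x_\tau,x_{\tau-1})\A\trans - (\eta g,0)$ acting on each eigendirection of $\hess f(\x_t)$, up to a controlled error term.

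The heart of the argument is then a lower bound on the movement of the linearized AGD iterates when the gradient is large. First decompose $\grad f(\x_t)$ in the eigenbasis of $\hess f(\x_t)$; since $\norm{\grad f(\x_t)} \ge \epsilon/2$ (the perturbation radius $r$ is negligible and the gradient stays $\ge \epsilon$ on the window), some group of eigendirections carries a constant fraction of this norm. I would split into the three regimes of Section~\ref{sec:imp_local}. In the flat directions ($|\lambda|\le\sqrt{\rho\epsilon}$), over $\utime = \tilde\Theta(\sqrt{\cn})$ steps AGD accumulates displacement $\approx \tau^2 \eta g$, so $\norm{\x_{t+\utime}-\x_t} \gtrsim \utime^2 \eta |g| = \tilde\Omega(\cn \cdot \eta\epsilon)$; squaring and comparing with the localization bound $\frac{2\eta}{\theta}\ufun = \tilde\Theta(\sqrt{\cn}\cdot\eta\cdot\sqrt{\epsilon^3/\rho})$ yields a contradiction once $c$ is large, because the left side scales like $\cn^2(\eta\epsilon)^2$ while the right scales like $\sqrt{\cn}\,\eta\sqrt{\epsilon^3/\rho}$ and $\cn = \ell/\sqrt{\rho\epsilon}$. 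In the strongly convex directions ($\lambda \ge \sqrt{\rho\epsilon}$) AGD converges geometrically at rate $1-\tilde\Theta(1/\sqrt\cn)$ toward the minimizer of that one-dimensional quadratic, which sits at distance $\approx |g|/\lambda$; reaching it moves the iterate by a comparable amount, again too much to be consistent with localization unless the Hamiltonian dropped by $\ufun$. The strongly nonconvex directions only help (they amplify movement), and can be folded in or postponed to Lemma~\ref{lem:negcurv}. Throughout, the quadratic-approximation error is carried as an additive $\epsilon$-per-step perturbation to the linear recursion, and one checks via the operator norm of $\A$ (bounded powers, since $\mu_{\text{AGD}}(\lambda) \le 1 + \tilde O(1/\sqrt\cn)$ over $\utime$ steps) that the accumulated error is a lower-order term.

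The main obstacle I anticipate is making the linearization rigorous over the full window while the momentum term is active: unlike GD, AGD's iterates can oscillate, so one must verify that the displacement lower bound survives the quadratic-to-true-function error \emph{and} that the iterates genuinely remain in the radius-$\sqrt{\epsilon/\rho}$ ball for \emph{all} $\tau$ in the window (a self-consistency/induction argument), rather than only on average. Getting the constants and log powers to line up — so that the polynomial gain $\tilde\Omega(\cn^{3/2})$ in movement from acceleration beats the $\tilde\Theta(\chi^{-5}c^{-7})$ slack built into $\ufun$ — is bookkeeping but must be done carefully, since this is exactly where the $1/\epsilon^{7/4}$ rate (versus $1/\epsilon^2$) is bought. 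I would organize the eigendirection estimates as a separate technical lemma about the matrix $\A$ (bounds on $\sum_\tau \norm{\A^\tau e_1}$-type quantities in each regime), then assemble the three cases and the error control into the proof of Lemma~\ref{lem:largeGrad}.
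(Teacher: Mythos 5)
Your overall plan (localize via Corollary~\ref{cor:localball}, linearize, split by eigendirection, derive a displacement contradiction) matches the paper's high-level framing, and your analysis of the flat and strongly-nonconvex directions is sound: there a large gradient component really does force displacement $\gg \uspace$ over $\utime$ steps. But the argument has a genuine gap in the strongly convex directions $\lambda\in[\sqrt{\rho\epsilon},\ell]$, and it is exactly the pitfall the paper flags. You assert that ``AGD converges geometrically toward the minimizer ... which sits at distance $\approx |g|/\lambda$; reaching it moves the iterate by a comparable amount, again too much to be consistent with localization.'' This is false at the relevant scales: if $\lambda\approx\ell$ and $|g|\approx\epsilon$, the displacement is $\approx\epsilon/\ell = \sqrt{\epsilon/\rho}/\cn$, which is a factor $\cn$ \emph{smaller} than the localization radius $\uspace = \tilde\Theta(\sqrt{\epsilon/\rho})$. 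So the entire initial gradient can sit in $\S$ (the strongly convex subspace), and AGD can contract it to nearly zero while barely moving and barely decreasing the Hamiltonian --- no contradiction. Your proof, which only uses the gradient at the single time $t$, cannot rule this out.

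The paper's resolution is the two-stage structure of Lemmas~\ref{lem:1} and~\ref{lem:2} (formally Lemmas~\ref{lem:largegrad_convex} and~\ref{lem:largegrad_nonconvex}): it first \emph{proves} that under localization the component $\norm{\proj_\S\grad f(\x_\tau)}$ decays to $\le\epsilon/2$ within $\utime/4$ steps, and only then invokes the hypothesis that $\norm{\grad f(\x_\tau)}\ge\epsilon$ holds for \emph{all} $\tau$ in the window, to conclude that at some later $\tau$ the residual gradient must live in $\S^c$ with norm $\ge\epsilon/2$. It then applies the displacement argument (your flat/nonconvex case) starting from that later iterate. This is why the lemma hypothesizes a large gradient over the whole window rather than just at $t$; your proposal never uses that stronger hypothesis. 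A secondary gap: you also need to dispatch the cases where the momentum $\norm{\v_\tau}$ or the gradient $\norm{\grad f(\x_\tau)}$ is very large (which already forces Hamiltonian decrease per Lemma~\ref{lem:largegrad_momentum}) and to control the $\S$-component of the momentum entering the second stage (the condition $\v_t\trans[\proj_\S\trans\hess f\,\proj_\S]\v_t \le 2\sqrt{\rho\epsilon}\,\umom^2$ in Lemma~\ref{lem:largegrad_nonconvex}); otherwise a large inherited momentum can spoil the linearized displacement bound. Your plan does not address either of these.
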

\begin{lemma}[Negative curvature]\label{lem:negHess}
Consider the setting of Theorem~\ref{thm:main}. 
If $\norm{\grad f(\x_t)} \le \epsilon$, $\lambda_{\min} (\hess f(\x_t)) < -\sqrt{\rho\epsilon}$, 
and perturbation has not been added in iterations $\tau \in [t-\utime, t)$, then by running Algorithm \ref{algo:PAGD}, we have $E_{t+\utime} - E_t \le -\ufun$ with high probability.
\end{lemma}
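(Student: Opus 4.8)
The plan is to run the two-point coupling argument outlined in Section~\ref{sec:tech}, with Corollary~\ref{cor:localball} supplying the localization needed to pass to a quadratic model. Write $\H \defeq \hess f(\x_t)$, let $\e$ be a unit eigenvector with $\H\e = \lambda_{\min}(\H)\e$ (so $\lambda_{\min}(\H) \le -\sqrt{\rho\epsilon}$, and $\lambda_{\min}(\H) \ge -\ell$ by $\ell$-smoothness), and let $\tilde f_t$ be the quadratic approximation of $f$ at $\x_t$. Since $\norm{\grad f(\x_t)} \le \epsilon$ and no perturbation was added on $[t-\utime, t)$, Algorithm~\ref{algo:PAGD} perturbs at step $t$, moving to $(\x_t + \xi_t, \v_t)$ with $\xi_t \sim \text{Unif}(\ball_0(r))$, and will not perturb again before $t+\utime$. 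The perturbation itself raises the Hamiltonian by at most $\epsilon r + \tfrac{\ell}{2}r^2 = O(1/(\cn c))\cdot\ufun$, which is negligible, so it suffices to show that, with probability at least $1-\delta'$ over $\xi_t$, the ensuing iterates satisfy $E_{t+\utime} - E_t \le -2\ufun$; here $\delta'$ is chosen so that a union bound over the $O(1/\epsilon^{7/4})$ windows gives total failure $\le\delta$, and since $\log(1/\delta')=O(\chi)$ this does not affect the asymptotics.

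Two sub-cases are immediate. If the NCE step fires anywhere on $[t, t+\utime]$, Lemma~\ref{lem:energy_NCE} and~\eqref{eq:parameter} give a single-step drop of $\min\{s^2/(2\eta),\, (\gamma-2\rho s)s^2/2\} \gg \ufun$, and Lemma~\ref{lem:energy_nonconvex} makes $E$ non-increasing on the other steps. If $\norm{\v_t}$ is large, say $\norm{\v_t} \ge v_{\max}$ with $v_{\max} = \Theta(\sqrt{\eta\ufun/\theta})$, then the AGD update and $\ell$-smoothness give $\norm{\v_{t+1}} \ge \tfrac14\norm{\v_t}$, and (if no NCE, else use the previous case) Corollary~\ref{cor:localball} on $[t, t+\utime]$ yields $E_t - E_{t+\utime} \ge \tfrac{\theta}{2\eta}\norm{\v_{t+1}}^2 \ge 2\ufun$. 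It remains to treat the case where only AGD steps occur and $\norm{\v_t} \le v_{\max}$; a short check from~\eqref{eq:parameter} and $\epsilon \le \ell^2/\rho$ shows $v_{\max} \le \sqrt{\epsilon/\rho}$.

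Define the stuck set $\cXs \defeq \{\xi \in \ball_0(r): \text{running PAGD from } (\x_t+\xi,\v_t) \text{ gives } E_{t+\utime}-E_t > -2\ufun\}$; we bound $\Pr[\xi_t\in\cXs]$. For $\xi\in\cXs$, no NCE fires on the window, so Corollary~\ref{cor:localball} gives $\sum_{\tau}\norm{\x_\tau-\x_{\tau-1}}^2 \le \tfrac{4\eta}{\theta}\ufun$, and by Cauchy--Schwarz every iterate lies in $\ball_{\x_t}(\mathscr R)$ with $\mathscr R = \norm{\v_t} + O(\sqrt{\utime\eta\ufun/\theta})$; crucially,~\eqref{eq:parameter} (the $\chi^{-5}c^{-7}$ slack in $\ufun$) forces $\mathscr R = O(\sqrt{\epsilon/\rho}\cdot\chi^{-2}c^{-3}) \le \tfrac12\sqrt{\epsilon/\rho}$, so on this ball $\grad f$ and $\hess f$ are $\epsilon$- and $\sqrt{\rho\epsilon}$-close to those of $\tilde f_t$ (the Fact). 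The coupling claim is: \emph{$\cXs$ cannot contain two points $\xi,\xi'$ with $|\langle\xi-\xi',\e\rangle| \ge r_0$}, where $r_0 \defeq \delta' r/(\text{poly})$ with the polynomial fixed below. Indeed, if both lie in $\cXs$ they are both pure-AGD and $\mathscr R$-localized; letting $\w_\tau$ be the difference of the two iterate-sequences (with $\w_t = \w_{t-1} = \xi-\xi'$, as the momenta agree), localization plus $\rho$-Hessian-Lipschitzness shows $\w_\tau$ obeys the linear AGD recursion of Section~\ref{sec:imp_local} up to a forcing term of size $O(\eta\rho\mathscr R)\norm{\w_\tau}$. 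Along $\e$ the governing $2\times2$ matrix $\A$ has eigenvalue $\mu \defeq \mu_{\text{AGD}}(\lambda_{\min}(\H)) \ge 1 + \Omega(1/\sqrt\cn)$, and the component of $\w_t$ on the corresponding eigenvector is $\Theta(|\langle\xi-\xi',\e\rangle|)$. Projecting the recursion onto this expanding eigenvector and bootstrapping (after the geometric sum over $\utime = \Theta(\sqrt\cn\,\chi c)$ steps the total forcing contribution is a fraction $O(1/c^2)$ of the main term, hence $<\tfrac12$ for $c$ large) gives $\norm{\w_{t+\utime}} \ge \tfrac12 r_0\,\mu^{\utime}$. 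Since $\mu^{\utime} \ge e^{\Omega(\chi c)}$ exceeds, for $c\ge c_{\max}$, any fixed polynomial in $d\ell\Delta_f/(\rho\epsilon\delta)$, we choose the polynomial defining $r_0$ so that $\tfrac12 r_0\mu^{\utime} > 2\mathscr R$, contradicting the $\mathscr R$-localization of both sequences. The claim follows, so $\cXs$ lies in a slab of width $< r_0$ perpendicular to $\e$, whence $\text{Vol}(\cXs) < r_0\,\text{Vol}(\ball_0^{d-1}(r))$ and $\Pr[\xi_t\in\cXs] < r_0\sqrt d/r \le \delta'$. Combining the three cases with the negligible perturbation cost proves the lemma.

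The main obstacle is the forcing control inside the coupling claim: the relevant block of $\A$ is \emph{expanding}, so a crude bound on the deviation of the true dynamics from the quadratic model would itself grow geometrically and could overwhelm the escape signal $r_0\mu^{\utime}$. Closing the bootstrap requires two ingredients together: (a) the poly-log slack baked into $\ufun$, which via Corollary~\ref{cor:localball} shrinks $\mathscr R$ and hence the per-step forcing; and (b) tracking the error along the top eigenvector of $\A$ rather than in raw norm, so that the $\Theta(\sqrt\cn)$ ill-conditioning of $\A$'s near-degenerate eigenbasis is cancelled by the $\Theta(1/\cn)$ size of the forcing. It is exactly here that the monotone Hamiltonian and the improve-or-localize bound on $\sum_\tau\norm{\x_\tau - \x_{\tau-1}}^2$ are indispensable: a momentum method oscillates, and without that uniform control on the total movement the error analysis could not be carried through.
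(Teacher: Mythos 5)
Your proposal is correct and matches the paper's strategy in all essentials: absorb the perturbation's $O(\epsilon r + \ell r^2)$ cost into the $\ufun$ budget; dispose of any window containing an NCE step via Lemma~\ref{lem:energy_NCE}; for pure-AGD windows, define the stuck set $\cXs$, derive $O(\uspace)$-localization from Corollary~\ref{cor:localball} under the non-decrease hypothesis, and run the two-point coupling from Section~\ref{sec:tech}: the difference sequence $\w_\tau$ obeys the linear AGD recursion with a forcing term of size $O(\eta\rho\uspace)\norm{\w_\tau}$, an induction keeps the forcing under half the main term, and the exponential growth along the minimum eigen-direction contradicts localization, giving $\cXs$ a thin slab of width $r_0 = \Theta(\delta\ufun r/(\Delta_f\sqrt d))$ and hence small relative volume. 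This is exactly the route of Lemma~\ref{lem:2nd_seq} and the paper's proof of Lemma~\ref{lem:negHess}.

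Two minor remarks. First, your explicit ``large momentum'' subcase is harmless but redundant: once you postulate $\xi\in\cXs$ (so $E$ does not drop by $2\ufun$ over the window and no NCE fires), Lemma~\ref{lem:energy_nonconvex} applied at the first step already caps $\norm{\v_t}$, and Corollary~\ref{cor:localball} furnishes the localization without a separate case; the bound you invoke for this subcase is really a single application of Lemma~\ref{lem:energy_nonconvex} plus monotonicity of $E$, not of Corollary~\ref{cor:localball}. Second, your discussion of tracking the error along the top eigenvector to cancel the $\Theta(\sqrt\cn)$ ill-conditioning of $\A$'s near-degenerate eigenbasis is the right intuition, but in the paper this is carried out concretely through Lemmas~\ref{lem:aux_increase_t}, \ref{lem:aux_increase_x}, and \ref{lem:aux_eigen_combo_inequal} (the $\theta/2$ prefactor in $|a_t - b_t|\ge\frac{\theta}{2}(1+\Omega(\theta))^t$ and the $\frac{2}{\theta}+(t+1)$ factor in the convolution bound are precisely where the ill-conditioning appears and is controlled). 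You correctly identify the two ingredients that close the bootstrap---the poly-log slack in $\ufun$ shrinking $\uspace$, and the spectral bookkeeping---so the sketch is faithful, even if it leaves these auxiliary computations implicit.
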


We note that an important aspect of these two lemmas is that the Hamiltonian decreases by $\Omega(\ufun)$ in $\utime = \tilde{\Theta}(\sqrt{\cn})$ steps, which is faster compared to~\pgd~which decreases the function value by $\Omega(\ufun)$ in $\utime^2 = \tilde{\Theta}(\cn)$ steps~\citep{jin2017escape}.  That is, the acceleration phenomenon in~\pagd~happens in both cases.
We also stress that under both of these settings,~\pagd~cannot achieve $\Omega(\ufun/\utime)$ decrease in each step---it has to accumulate momentum over time to achieve $\Omega(\ufun/\utime)$ amortized decrease.





\subsubsection{Large Gradient Scenario}
For AGD, gradient and momentum interact, and both play important roles in the dynamics. Fortunately, according to Lemma~\ref{lem:energy_nonconvex}, the Hamiltonian decreases sufficiently whenever the momentum $\v_t$ is large; so it is sufficient to discuss the case where the momentum is small.

One difficulty in proving Lemma \ref{lem:largeGrad} lies in the difficulty of enforcing the precondition that gradients of all iterates are large even with quadratic approximation. Intuitively we hope that the large initial gradient $\norm{\grad f(\x_t)}\ge \epsilon$ suffices to give a sufficient decrease of the Hamiltonian. Unfortunately, this is not true. Let $\S$ be the subspace of eigenvectors of $\nabla^2 f (\x_t)$ with eigenvalues in $[\sqrt{\rho\epsilon}, \ell]$, consisting of all the strongly convex directions, and let $\S^c$ be the orthogonal subspace. It turns out that the initial gradient component in $\S$ is not very helpful in decreasing the Hamiltonian since~\nag~rapidly decreases the gradient in these directions. 
We instead prove Lemma~\ref{lem:largeGrad} in two steps.

\begin{lemma}(informal)\label{lem:1}
    If $\v_t$ is small, $\norm{\grad f(\x_t)}$ not too large and $E_{t+\utime/2} - E_t \ge - \ufun$, then for all $\tau\in [t+\utime/4, t+\utime/2]$ we have $\norm{\proj_\S \grad f(\x_\tau)}\le \epsilon/2$.
\end{lemma}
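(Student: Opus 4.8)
The plan is to neutralize the momentum-induced oscillation by a localization argument and then exploit that~\nag~contracts geometrically --- at rate $1-\Omega(1/\sqrt{\cn})$ per step --- along strongly convex directions. The hypothesis $E_{t+\utime/2}-E_t\ge-\ufun$ will be used \emph{only} to confine the iterates $\{\x_\tau\}_{\tau\in[t,t+\utime/2]}$ to a tiny ball about $\x_t$, inside which $f$ is well approximated by its quadratic Taylor expansion $\tilde{f}_t$ at $\x_t$; the two remaining hypotheses ($\v_t$ small, $\norm{\grad f(\x_t)}$ not too large) supply controlled initial data for the~\nag~recursion run on $\tilde{f}_t$. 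Since $\S$ is an invariant subspace of $\hess f(\x_t)$, and since $\utime/4=\tilde{\Theta}(\sqrt{\cn})$ steps of a rate-$(1-\Omega(1/\sqrt{\cn}))$ contraction suffice to crush the $\S$-component of the gradient below $\epsilon/2$, the claim follows.

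\textbf{Step 1 (localization).} First I would observe that the window $[t,t+\utime/2]$ contains only~\nag~steps: were~\nce~triggered at any step inside it, Lemma~\ref{lem:energy_NCE} together with the parameter choices~\eqref{eq:parameter} would force a single-step decrease of $E$ exceeding $\ufun$, and since $E_\tau$ is non-increasing (Lemmas~\ref{lem:energy_nonconvex} and~\ref{lem:energy_NCE}) this contradicts $E_{t+\utime/2}-E_t\ge-\ufun$; no perturbation is added either, as holds in the large-gradient regime in which this lemma is invoked. Hence Corollary~\ref{cor:localball} gives $\sum_{\tau=t+1}^{t+\utime/2}\norm{\x_\tau-\x_{\tau-1}}^2\le\frac{2\eta}{\theta}\ufun$, and Cauchy--Schwarz together with~\eqref{eq:parameter} yields, for every $\tau\in[t,t+\utime/2]$,
\begin{equation*}
\norm{\x_\tau-\x_t}^2 \le \frac{\utime}{2}\cdot\frac{2\eta}{\theta}\ufun = \frac{\epsilon}{\rho}\chi^{-4}c^{-6} \le \frac{\epsilon}{\rho}.
\end{equation*}
Thus all these iterates lie in the radius-$\sqrt{\epsilon/\rho}$ ball, and by $\rho$-Hessian-Lipschitzness (the Fact) $\norm{\grad f(\x_\tau)-\grad\tilde{f}_t(\x_\tau)}\le\frac{\rho}{2}\norm{\x_\tau-\x_t}^2\le\epsilon\chi^{-4}c^{-6}$, far below $\epsilon/2$.

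\textbf{Step 2 (\nag~on the quadratic).} It now suffices to bound $\norm{\proj_\S\grad\tilde{f}_t(\x_\tau)}$. Diagonalize $\hess f(\x_t)$; in the eigen-direction with eigenvalue $\lambda\in[\sqrt{\rho\epsilon},\ell]$, let $g,v$ be the components of $\grad f(\x_t),\v_t$, and shift by $\bar{x}=-g/\lambda$, where $\grad\tilde{f}_t$ vanishes. The centered coordinate $z_\tau=x_\tau-\bar{x}$ obeys $(z_{\tau+1},z_\tau)=(z_\tau,z_{\tau-1})\A\trans$ with $\A$ the $2\times2$ matrix of Section~\ref{sec:imp_local} and initial data $z_t=g/\lambda$, $z_{t-1}=g/\lambda-v$. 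For $\lambda$ in this range $\A$ has complex conjugate eigenvalues of modulus $\sqrt{(1-\theta)(1-\eta\lambda)}\le 1-\Omega(1/\sqrt{\cn})$, and computing powers of this companion matrix gives $\norm{\A^k}\le\order{\sqrt{\cn}}(1-\Omega(1/\sqrt{\cn}))^k$, the $\order{\sqrt{\cn}}$ factor being the non-normality constant of $\A$, which is largest ($\Theta(\sqrt{\cn})$) at the slow direction $\lambda=\sqrt{\rho\epsilon}$. Since the component of $\grad\tilde{f}_t(\x_\tau)$ along this direction is $\lambda z_\tau$, the bound $\lambda\abs{z_\tau}\lesssim\norm{\A^{\tau-t}}(\abs{g}+\ell\abs{v})$ summed over $\S$ yields
\begin{equation*}
\norm{\proj_\S\grad\tilde{f}_t(\x_\tau)} \lesssim \norm{\A^{\tau-t}}\left(\norm{\grad f(\x_t)}+\ell\norm{\v_t}\right).
\end{equation*}
For $\tau-t\ge\utime/4=\sqrt{\cn}\chi c/4$ this makes $\norm{\A^{\tau-t}}\le\order{\sqrt{\cn}}e^{-\Omega(\chi c)}$; since $\log\cn$ and the logarithms of the ``$\v_t$ small'' and ``$\grad f(\x_t)$ not too large'' bounds are all $\order{\chi}$, the right-hand side is below $\epsilon/4$ once $c$ exceeds an absolute constant. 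Combining with Step 1, $\norm{\proj_\S\grad f(\x_\tau)}\le\epsilon/4+\epsilon\chi^{-4}c^{-6}\le\epsilon/2$ for all $\tau\in[t+\utime/4,t+\utime/2]$, which is the claim.

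\textbf{Main obstacle.} The heart of the argument is the uniform geometric bound $\norm{\A^k}\le\order{\sqrt{\cn}}(1-\Omega(1/\sqrt{\cn}))^k$ over the whole range $\lambda\in[\sqrt{\rho\epsilon},\ell]$: one must diagonalize (or Jordan-reduce) the $2\times2$ companion matrix $\A$, verify its eigenvalues stay complex with the stated modulus throughout this range (the characteristic discriminant stays negative because $\eta\lambda\ge\Omega(1/\cn)$ dominates $\theta^2$), and check that the non-normality blow-up is only $\order{\sqrt{\cn}}$ --- just small enough to be swamped by the geometric decay accrued over the $\tilde{\Theta}(\sqrt{\cn})$-step window, which is precisely where the extra $\log$-factor $\chi$ and the large constant $c$ are spent. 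The rest --- passing between $\grad f$ and $\grad\tilde{f}_t$, carrying the inhomogeneous term $g$ through the shift $\bar{x}$, and absorbing the at-most-one perturbation of negligible radius $r=\eta\epsilon\chi^{-5}c^{-8}$ --- is routine given the localization of Step~1.
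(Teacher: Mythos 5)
Your high-level blueprint (localize via improve-or-localize, then analyze the $\nag$ recursion direction-by-direction on a quadratic approximation) matches the paper's, and your Step~1 and the matrix-power bound in Step~2 are sound. The genuine gap is that Step~2 treats $z_\tau=x_\tau-\bar{x}$ as though it satisfies the \emph{homogeneous} linear recursion $(z_{\tau+1},z_\tau)=(z_\tau,z_{\tau-1})\A\trans$. That recursion describes $\nag$ run on the fixed quadratic $\tilde f_t$, but the iterates $\x_\tau$ are produced by $\nag$ run on $f$: the true update carries a forcing term $\delta_\tau=\grad f(\y_\tau)-\grad\tilde f_t(\y_\tau)$ at \emph{every} step, and the contribution $\eta\H\sum_{\tau}\A^{t-1-\tau}(\delta_\tau,0)\trans$ accumulates over the whole window. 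Your Step~1 only bounds the pointwise quantity $\norm{\grad f(\x_\tau)-\grad\tilde f_t(\x_\tau)}$, which is the term $\g_4=\Delta_t\x_t$ in the paper; it says nothing about the trajectory error. You wave the trajectory error away as ``routine,'' but it is exactly the hard part.

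In fact a naive bound on that accumulated term fails: with $\norm{\delta_\tau}\lesssim\rho\uspace^2$ each step and $\norm{\A^k}\lesssim\sqrt{\cn}(1-\Omega(1/\sqrt{\cn}))^k$, summing absolute values yields something of order $\rho\uspace^2\cn\sim\epsilon\cn\chi^{-4}c^{-6}$, which overwhelms $\epsilon$ when $\cn$ is large. The paper must instead exploit sign cancellation in the oscillatory sum $\sum_\tau a^{(j)}_\tau\delta^{(j)}_{t-1-\tau}$: Lemma~\ref{lem:aux_convex_inequal} (built on the trigonometric Lemma~\ref{lem:aux_convex_trigonometry}) gives a bound of the form $O(1/(\eta\lambda_j))\bigl(|\delta_{t-1}|+\sum_\tau|\delta_\tau-\delta_{\tau-1}|\bigr)$, which after multiplying by the $\eta\lambda_j$ from $\g_3=\eta\H\cdots$ depends only on the \emph{increments} $\delta_\tau-\delta_{\tau-1}$. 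Proposition~\ref{prop:delta} bounds these by the step lengths $\norm{\x_\tau-\x_{\tau-1}}$, and Corollary~\ref{cor:localball} controls $\sum\norm{\x_\tau-\x_{\tau-1}}^2$, so the accumulated error comes out $O(\rho\uspace^2)$, not $O(\rho\uspace^2\cn)$. This is precisely the ``oscillation control'' the paper flags as the reason to track $\sum\norm{\x_{\tau+1}-\x_\tau}^2$ rather than just $\max\norm{\x_\tau-\x_t}$. Your proof needs to add this error-accumulation analysis (the paper's $\g_3$ term and Lemma~\ref{lem:aux_convex_inequal}) to close; as written it only establishes the statement for the idealized quadratic dynamics, not for the actual iterates.
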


\begin{lemma}(informal)\label{lem:2}
If $\v_t$ is small and $\norm{\proj_{\S^c}\grad f(\x_{t})} \ge \epsilon/2$,
then we have $E_{t+\utime/4} - E_t \le - \ufun.$
\end{lemma}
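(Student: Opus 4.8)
The plan is to prove the bound by contradiction: suppose $E_{t+\utime/4} - E_t > -\ufun$, and derive a violation of improve-or-localize. First note that $E_\tau$ is monotonically non-increasing throughout $[t, t+\utime/4]$ --- on pure \nag~steps by Lemma~\ref{lem:energy_nonconvex} (since~\eqref{eq:certificate} fails), and on steps where~\eqref{eq:certificate} holds by Lemma~\ref{lem:energy_NCE}. Moreover, if~\eqref{eq:certificate} ever holds in this window then \nce~fires and, by the parameter choice~\eqref{eq:parameter}, Lemma~\ref{lem:energy_NCE} already gives a one-step decrease of at least $\ufun$; by monotonicity this yields $E_{t+\utime/4} - E_t \le -\ufun$, contradicting the assumption. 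Hence we may assume all steps in the window are pure \nag~steps. Combining the assumption with Corollary~\ref{cor:localball} and Cauchy--Schwarz gives $\norm{\x_\tau - \x_t}^2 \le \tfrac{\utime}{4}\cdot\tfrac{2\eta}{\theta}(E_t - E_{t+\utime/4}) < \tfrac{\utime}{4}\cdot\tfrac{2\eta}{\theta}\ufun$ for all $\tau$ in the window; with~\eqref{eq:parameter} the right side is $O(\tfrac{\epsilon}{\rho}\chi^{-4}c^{-6})$, so all iterates --- and, using the same bound on $\norm{\v_\tau}$, all look-ahead points $\y_\tau = \x_\tau + (1-\theta)\v_\tau$ --- lie in a ball $\ball_{\x_t}(R)$ with $R = O(\sqrt{\epsilon/\rho}\cdot\chi^{-2}c^{-3})$, which is much smaller than the radius $\sqrt{\epsilon/\rho}$ appearing in the Fact.

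\textbf{Quadratic reduction.} Within $\ball_{\x_t}(R)$ I would replace $f$ by the quadratic $\tilde f_t(\x) = f(\x_t) + \la \grad f(\x_t), \x - \x_t\ra + \tfrac12 (\x-\x_t)^\top \hess f(\x_t)(\x - \x_t)$. Writing the \nag~recursion as $\v_{\tau+1} = (1-\theta)\v_\tau - \eta \grad f(\y_\tau)$ and substituting $\grad f(\y_\tau) = \grad \tilde f_t(\y_\tau) + \zeta_\tau$, the Hessian-Lipschitz property and localization bound $\norm{\zeta_\tau} \le \tfrac{\rho}{2}\norm{\y_\tau - \x_t}^2 = O(\epsilon\,\chi^{-4}c^{-6})$ --- i.e., the forcing error is only a $\chi^{-4}c^{-6}$-fraction of the signal $\epsilon/2$ exploited below. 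Coupling $\{\x_\tau\}$ with the exact AGD-on-$\tilde f_t$ sequence started from $(\x_t, \v_t)$, their difference is driven by $\{\zeta_\tau\}$; since AGD's transition matrix $\A$ is expanding along strongly nonconvex directions, I would not propagate this error over the full window but only over a sub-window of length $K = \Theta(\sqrt\cn)$, on which eigenvalues of $\A$ with $|\lambda| = O(\sqrt{\rho\epsilon})$ contribute amplification $(1 + O(1/\sqrt\cn))^K = O(1)$; the conclusion is then extended to $[t, t+\utime/4]$ using monotonicity of $E_\tau$.

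\textbf{Momentum buildup.} Diagonalize $\hess f(\x_t)$ and split $\Scomp$ into the ``mild'' subspace $\{|\lambda| = O(\sqrt{\rho\epsilon})\}$ (where $\mu_{\text{AGD}}(\lambda) = 1 + O(1/\sqrt\cn)$) and the ``expanding'' subspace $\{\lambda < -\Theta(\sqrt{\rho\epsilon})\}$. If $\Omega(\epsilon)$ of the mass of $\proj_{\Scomp}\grad f(\x_t)$ (total $\ge \epsilon/2$) lies in the expanding subspace, then AGD amplifies that component geometrically, so within $O(\sqrt\cn \log\cn) \le \utime/4$ steps it would exceed $R$ --- contradicting localization, hence forcing $E_t - E_{t+\utime/4} \ge \ufun$. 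Otherwise $\ge \epsilon/4$ of the mass lies in the mild subspace; along each such eigendirection the scalar recursion $v^{(\tau+1)}_i = (1-\theta)v^{(\tau)}_i - \eta(g_i + \lambda_i x^{(\tau)}_i)$ (with $x^{(\tau)}_i$ the $i$th eigen-coordinate of $\x_\tau - \x_t$ and $g_i$ that of $\grad f(\x_t)$) has $|\lambda_i x^{(\tau)}_i| \le O(\sqrt{\rho\epsilon})\cdot R \ll |g_i|$, hence behaves like $v^{(\tau+1)}_i \approx (1-\theta)v^{(\tau)}_i - \eta g_i$, so $|v^{(\tau)}_i|$ rises to $\Omega(\eta|g_i|/\theta) = \Omega(\eta|g_i|\sqrt\cn)$ within $O(1/\theta) = O(\sqrt\cn)$ steps and, by the $O(1)$ coupling control above, stays $\Omega(\eta|g_i|\sqrt\cn)$ for the rest of the length-$K$ sub-window. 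Summing over the mild directions gives $\sum_{\tau = t+1}^{t+K}\norm{\v_\tau}^2 = \Omega(K \cdot \eta^2\epsilon^2\cn)$, so by the dissipation term of Lemma~\ref{lem:energy_nonconvex}, $E_t - E_{t+K} \ge \tfrac{\theta}{2\eta}\sum_{\tau}\norm{\v_\tau}^2 = \Omega(1)\cdot\sqrt{\epsilon^3/\rho}$, which for $c \ge c_{\max}$ is at least $\ufun$; since $K \le \utime/4$ and $E_\tau$ is monotone, $E_{t+\utime/4} - E_t \le -\ufun$, the desired contradiction.

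\textbf{Main obstacle.} The crux is that the momentum needs $\Omega(\sqrt\cn)$ iterations to accumulate, but over that many iterations AGD amplifies errors from the quadratic approximation exponentially along strongly nonconvex directions. The resolution rests on two points. First, improve-or-localize confines the iterates to a ball of radius $O(\sqrt{\epsilon/\rho}\,\chi^{-2}c^{-3})$, far inside the radius $\sqrt{\epsilon/\rho}$ of the Fact; this makes the per-step forcing error $\zeta_\tau$ only a $\chi^{-4}c^{-6}$-fraction of the $\epsilon/2$ signal, and one runs the quadratic analysis only over a $\Theta(\sqrt\cn)$-length sub-window (where amplification is $O(1)$), extending via Hamiltonian monotonicity. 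Second, splitting $\Scomp$ routes any non-negligible gradient mass in genuinely expanding directions into an escape that violates localization (or, alternatively, triggers \nce), leaving a subspace on which AGD is essentially non-expansive and the clean scalar accumulation bound holds. Making this error bookkeeping close against the explicit $\chi^{-5}c^{-7}$-type slack built into $\ufun$ and $r$ is where the bulk of the technical effort lies.
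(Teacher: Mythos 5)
Your high-level skeleton --- contradiction, improve-or-localize confining iterates to a ball of radius $O(\uspace)$, quadratic approximation, and then arguing the $\S^c$-component of the gradient must drive motion or progress --- matches the paper's proof of Lemma~\ref{lem:largegrad_nonconvex}. But the route diverges in a way that introduces gaps.

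\paragraph{Different route.} The paper treats all of $\S^c$ uniformly via the matrix power sum: writing $x_t^{(j)} = -\eta\bigl[\sum_{\tau=0}^{t-1}a_\tau^{(j)}\bigr]\bigl(\grad f(0)^{(j)} + \tilde\delta^{(j)} + \tilde v^{(j)}\bigr)$ and invoking Lemma~\ref{lem:aux_nonconvex_inequal} to get $\sum_\tau a_\tau^{(j)} \ge \Omega(1/\theta^2)$ simultaneously for every $j \in \S^c$ (negative, flat, and mildly positive eigenvalues). After bounding the error terms it concludes $\|\x_t\| \ge \Omega(\eta\epsilon/\theta^2) = \Omega(\uspace\,c^3) > \uspace$, contradicting localization. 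A single mechanism --- the iterate leaves the ball --- suffices. You instead split $\S^c$ into ``mild'' and ``expanding'' and give two separate contradictions: escape from the ball for expanding directions, and momentum accumulation plus the dissipation term of Lemma~\ref{lem:energy_nonconvex} for mild ones. This two-track structure is avoidable, and the second track is where the trouble lies.

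\paragraph{Gap 1: per-direction control does not follow from localization.} Your mild-direction argument replaces the scalar recursion by $v_i^{(\tau+1)} \approx (1-\theta)v_i^{(\tau)} - \eta g_i$ on the grounds that $|\lambda_i x_i^{(\tau)}| \le O(\sqrt{\rho\epsilon})R \ll |g_i|$. The localization bound controls $\|\x_\tau - \x_t\| \le R$ in aggregate, not coordinate-wise; there is no reason a single eigen-coordinate $|x_i^{(\tau)}|$ is small when $|g_i|$ is. Indeed, for directions where $|g_i|$ is tiny (which is the generic case since the mass $\epsilon/2$ is spread over $d$ directions), the claim $O(\sqrt{\rho\epsilon})R \ll |g_i|$ is simply false, and the conclusion ``$|v_i^{(\tau)}|$ rises to $\Omega(\eta|g_i|/\theta)$ and stays there'' does not follow. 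The paper sidesteps this entirely: it never needs the momentum to be large per-direction, only that the iterate displacement in $\S^c$ is large in aggregate, and Lemma~\ref{lem:aux_nonconvex_inequal} gives the required uniform rate without any per-coordinate cancellation argument.

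\paragraph{Gap 2: the initial momentum.} You note the hypothesis ``$\v_t$ is small,'' but the formal version (Lemma~\ref{lem:largegrad_nonconvex}) additionally requires $\v_0^\top\bigl[\proj_\S^\top \hess f(\x_0)\proj_\S\bigr]\v_0 \le 2\sqrt{\rho\epsilon}\umom^2$. This is not cosmetic: the initial momentum contributes a term $\tilde v^{(j)} = q_t^{(j)} v_0^{(j)}$ in each eigen-direction, and the coefficient $|q_t^{(j)}| = O(\max\{\theta,\sqrt{\eta|\lambda_j|}\}/\eta)$ can be large for $j$ with $|\lambda_j|$ close to $\ell$. Bounding $\|\proj_{\S^c}\tilde v\|$ therefore requires controlling the $\S$-aligned energy of $\v_0$, which is exactly what the extra precondition (supplied by Lemma~\ref{lem:largegrad_convex} in the chaining inside Lemma~\ref{lem:largeGrad}) provides. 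Your proposal neither bounds this term nor explains why the plain assumption ``$\|\v_t\| \le \umom$'' would suffice --- it would not, because the quadratic form $\v_0^\top\H\v_0$ must also be controlled to invoke the NCE certificate's negation and rearrange the eigenvalue sum as the paper does.

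\paragraph{Minor.} The ``coupling over a $\Theta(\sqrt{\cn})$ sub-window where amplification is $O(1)$'' step is tailored to the mild subspace; on the expanding subspace $\mu_1(\lambda)^K$ is not $O(1)$, so you would need a separate error-propagation argument there (the paper's $\tilde\delta$ bound already handles all of $\S^c$ at once via Proposition~\ref{prop:delta} and Corollary~\ref{cor:localball}). These are repairable but substantial; the cleaner path is the paper's uniform matrix-power bound, which makes the subspace split and the dissipation-based second track unnecessary.
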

\noindent See the formal versions, Lemma \ref{lem:largegrad_nonconvex} and Lemma \ref{lem:largegrad_convex}, 
for more details.  We see that if the Hamiltonian does not decrease much (and so is localized in a small ball), 
the gradient in the strongly convex subspace $\norm{\proj_\S \grad f(\x_\tau)}$ vanishes in $\utime/4$ steps by Lemma~\ref{lem:1}. Since the hypothesis of Lemma~\ref{lem:largeGrad} guarantees a large gradient for all of the $\utime$ steps, this means that $\norm{\proj_{\S^c}\grad f(\x_{t})}$ is large after $\utime/4$ steps, thereby decreasing the Hamiltonian in the next $\utime/4$ steps (by Lemma~\ref{lem:2}).

\subsubsection{Negative Curvature Scenario}
In this section, we will show that the volume of the set around a strict saddle point from which AGD does not escape quickly is very small (Lemma~\ref{lem:negHess}).
We do this using the coupling mechanism introduced in~\cite{jin2017escape}, which gives a fine-grained understanding of the geometry around saddle points.
More concretely, letting the perturbation radius $r = \tilde{\Theta}(\epsilon/\ell)$ as specified in \eqref{eq:parameter}, we show the following lemma.
\begin{lemma} (informal) \label{lem:informal_neg_curve}
Suppose $\norm{\nabla f(\tilde{\x})} \le \epsilon$ and $\lambda_{\min}(\hess f(\tilde{\x})) \le - \sqrt{\rho\epsilon}$. Let $\x_0,  \modify{\x}_0$ be at distance at most $r$ from $\tilde{\x}$, and $\x_0 -  \modify{\x}_0 = r_0 \e_1$ where $\e_1$ is the minimum eigen-direction of $\hess f(\tilde{\x})$ and $r_0 \ge \delta r /\sqrt{d}$. Then for~\nag~starting at $(\x_0, \v)$ and $(\x_0', \v)$, we have:
\begin{align*}
\min\{E_{\utime} - \widetilde{E}, \modify{E}_{\utime} - \widetilde{E}\} \le - \ufun,
\end{align*}
where $\widetilde{E},E_{\utime}$ and $\modify{E}_{\utime}$ are the Hamiltonians at $(\tilde{\x}, \v), (\x_{\utime}, \v_{\utime})$ and $(\modify{\x}_{\utime}, \modify{\v}_{\utime})$ respectively.
\end{lemma}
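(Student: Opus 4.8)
The plan is a proof by contradiction in the spirit of the coupling argument of~\cite{jin2017escape}, carried out here for the two-step~\nag~recursion and tracked through the Hamiltonian rather than the objective value. Suppose that neither trajectory escapes, i.e. $E_{\utime} - \widetilde{E} > -\ufun$ \emph{and} $\modify{E}_{\utime} - \widetilde{E} > -\ufun$; I will derive a contradiction from the fact that the two iterate sequences are pushed apart geometrically along $\e_1$. First I would dispose of two easy sub-cases. If the~\nce~test~\eqref{eq:certificate} fires at any of the $\utime$ steps of either trajectory, Lemma~\ref{lem:energy_NCE} with the parameter choice~\eqref{eq:parameter} already produces a one-step Hamiltonian drop of at least $\ufun$, contradicting the assumption; and if the shared initial momentum $\norm{\v}$ is not small, Lemma~\ref{lem:energy_nonconvex} by itself forces a drop of $\ufun$ within $\utime$ steps. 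So I may assume both trajectories run pure~\nag~for all $\utime$ steps from a small momentum. Then Corollary~\ref{cor:localball} bounds $\sum_\tau\norm{\x_\tau - \x_{\tau-1}}^2 \le \frac{2\eta}{\theta}(E_0 - E_{\utime})$, and since $|E_0 - \widetilde{E}| = |f(\x_0) - f(\tilde{\x})| \le \epsilon r + \frac{\ell}{2} r^2 \ll \ufun$ (because $\norm{\x_0 - \tilde{\x}} \le r$), Cauchy--Schwarz gives $\norm{\x_\tau - \x_0}^2 \le \utime\cdot\frac{2\eta}{\theta}\cdot \order{\ufun}$, which is at most $\epsilon/\rho$ for the chosen constants (this is exactly what the $\chi^{-5}c^{-7}$ shaving in $\ufun$ buys); likewise for $\modify{\x}_\tau$. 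Hence every iterate of both trajectories stays inside $\ball_{\tilde{\x}}(\sqrt{\epsilon/\rho})$.

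On that ball the Fact lets me replace $\grad f$ by the affine map $\x \mapsto \grad f(\tilde{\x}) + \H(\x - \tilde{\x})$ with $\H \defeq \hess f(\tilde{\x})$, incurring gradient error $\order{\epsilon}$. Writing $\w_\tau \defeq \x_\tau - \modify{\x}_\tau$, the affine part cancels in the difference, so for the exact quadratic $\w$ obeys the homogeneous linear recursion
\begin{equation*}
\pmat{\w_{\tau+1}\\ \w_\tau} = \pmat{(2-\theta)(\I - \eta\H) & -(1-\theta)(\I - \eta\H)\\ \I & \zero}\pmat{\w_\tau\\ \w_{\tau-1}},
\end{equation*}
while the true gap satisfies the same recursion plus a driving term $(\bdelta_\tau, \zero)$ with $\norm{\bdelta_\tau} \le \order{\eta\sqrt{\rho\epsilon}\,(\norm{\w_\tau} + \norm{\w_{\tau-1}})}$ by $\rho$-Hessian-Lipschitzness (the gradient is evaluated at $\y_\tau = \x_\tau + (1-\theta)\v_\tau$, whose cross-trajectory difference is $\w_\tau + (1-\theta)(\w_\tau - \w_{\tau-1})$, and $\eta\sqrt{\rho\epsilon} = \Theta(1/\cn)$). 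The key structural point is that both trajectories carry the \emph{same} momentum $\v$, so the initial gap state is $\w_0 = \w_{-1} = r_0\e_1$: the entire initial separation lies along the most-negative-curvature direction.

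Next I would diagonalize $\H$, which splits the matrix above into $2\times 2$ blocks, one per eigenvalue. For $\lambda_1 = \lambda_{\min}(\H) \le -\sqrt{\rho\epsilon}$ the corresponding block has top eigenvalue $\mu_{\mathrm{AGD}}(\lambda_1) \ge 1 + \Omega(1/\sqrt{\cn})$ (the ``strongly nonconvex directions'' computation of Section~\ref{sec:imp_local}), and since $\mu_{\mathrm{AGD}}(\lambda)$ increases as $\lambda$ decreases through negative values, $\mu_{\mathrm{AGD}}(\lambda_1)$ is the largest growth rate over all blocks. An induction on $\tau \le \utime$ then shows that the $\e_1$-component $p_\tau \defeq \la\w_\tau, \e_1\ra$ dominates, $\norm{\projT\w_\tau} \le \frac12 |p_\tau|$: the orthogonal components start at $0$ and are only fed by the driving term $\bdelta_\tau$, which is a $\Theta(1/\cn)$-small fraction of $\norm{\w_\tau}$, so its accumulation over $\utime = \tilde{\Theta}(\sqrt{\cn})$ steps stays well below the geometric growth of $|p_\tau|$. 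Therefore $\norm{\w_{\utime}} \ge |p_{\utime}|$, which is of order $r_0\,(1 + \Omega(1/\sqrt{\cn}))^{\utime}$; since $\utime = \sqrt{\cn}\,\chi c$ with $\chi \ge \log(d\ell\Delta_f/\rho\epsilon\delta)$ and $c$ a large absolute constant, this factor is at least $2^{\Omega(\chi c)}$, and combined with $r_0 \ge \delta r/\sqrt{d}$ and $r = \eta\epsilon\,\chi^{-5}c^{-8}$ it forces $\norm{\w_{\utime}} > 2r + 2\sqrt{\epsilon/\rho}$. But localization gives $\norm{\w_{\utime}} \le \norm{\x_{\utime} - \tilde{\x}} + \norm{\modify{\x}_{\utime} - \tilde{\x}} \le 2r + 2\sqrt{\epsilon/\rho}$ --- a contradiction. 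Hence at least one of $E_{\utime}, \modify{E}_{\utime}$ is at most $\widetilde{E} - \ufun$, which is the claim.

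The step I expect to be the main obstacle is the coupling induction of the previous paragraph. Because the iteration matrix is non-symmetric and, in the directions with $\lambda$ near $0$, close to a Jordan block, the $\tau$-th power of a block carries polynomial-in-$\tau$ prefactors rather than clean powers of its eigenvalues; one has to bound these prefactors, together with the quadratic-approximation error accumulated over all $\utime = \tilde{\Theta}(\sqrt{\cn})$ steps (not just per step), and verify they are absorbed by the polylog slack --- which is precisely why $\ufun$ and $r$ carry the $\chi^{-5}c^{-7}$ and $\chi^{-5}c^{-8}$ factors. Once this lemma is in hand it feeds Lemma~\ref{lem:negHess} in the standard way: it shows that the width of the stuck region $\cXs$ along $\e_1$ is at most $r_0 = \delta r/\sqrt{d}$, so $\mathrm{vol}(\cXs)$ is an $\order{\delta}$ fraction of $\ball_0(r)$ and a uniform perturbation in $\ball_0(r)$ escapes with probability at least $1-\delta$.
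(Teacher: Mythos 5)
Your proposal is correct and follows essentially the same argument as the paper's proof of the formal version (Lemma \ref{lem:2nd_seq}): contradiction plus Corollary \ref{cor:localball} localization, pass to the coupled difference $\w_\tau$ whose initial state is purely along $\e_1$, diagonalize the~\nag~matrix and note that the $\e_1$-block has the largest growth rate, then close an induction showing the accumulated quadratic-approximation error stays a constant fraction of the geometrically growing $\e_1$-component, and finally transfer from $E_0$ to $\widetilde{E}$ via the $\epsilon r + \ell r^2/2 \le \ufun$ bound. The only inessential deviation is the disposal of NCE and large-momentum steps, which in the paper lives in Lemma \ref{lem:negHess} (this lemma is stated for pure~\nag), and you have correctly flagged the hardest step: the paper's Lemma \ref{lem:aux_increase_x} and Lemma \ref{lem:aux_eigen_combo_inequal} are precisely the technical tools that bound the polynomial prefactors and close the induction you sketch.
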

\noindent See the formal version in Lemma \ref{lem:2nd_seq}. We note $\delta$ in above Lemma is a small number characterize the failure probability of the algorithm (as defined in Theorem \ref{thm:main}), and $\utime$ has logarithmic dependence on $\delta$ according to \eqref{eq:parameter}.
Lemma~\ref{lem:informal_neg_curve} says that around any strict saddle, for any two points that are separated along the smallest eigen-direction by at least $\delta r /\sqrt{d}$,~\pagd, starting from at least one of those points, decreases the Hamiltonian, and hence escapes the strict saddle. This implies that the width of the region starting from where~\nag~is stuck has width at most $\delta r /\sqrt{d}$, and thus has small volume.



\section{Conclusions}
In this paper, we show that a variant of~\nag~can escape saddle points faster than~\gd, demonstrating that momentum techniques can indeed accelerate convergence even for nonconvex optimization. Our algorithm finds an $\epsilon$-second order stationary point in $\otilde{1/\epsilon^{7/4}}$ iterations, faster than the $\otilde{1/\epsilon^2}$ iterations taken by~\gd. This is the first algorithm that is both Hessian-free and single-loop that achieves this rate. Our analysis relies on novel techniques that lead to a better understanding of momentum techniques as well as nonconvex optimization.

The results here also give rise to several questions. The first concerns lower bounds;
is the rate of $\otilde{1/\epsilon^{7/4}}$ that we have established here optimal for 
gradient-based methods under the setting of gradient and Hessian-Lipschitz? 
We believe this upper bound is very likely sharp up to log factors, and developing 
a tight algorithm-independent lower bound will be necessary to settle this question.
The second is whether the negative-curvature-exploitation component of our algorithm 
is actually necessary for the fast rate. To attempt to answer this question, we may 
either explore other ways to track the progress of standard AGD (other than the 
particular Hamiltonian that we have presented here), or consider other discretizations
of the ODE \eqref{eq:ODE} so that the property \eqref{eq:energy_ODE} is preserved 
even for the most nonconvex region.  A final direction for future research is the 
extension of our results to the finite-sum setting and the stochastic setting.

%
%


\bibliographystyle{plainnat}
\bibliography{saddle}

\newpage

\appendix

\section{Proof of Hamiltonian Lemmas}
In this section, we prove Lemma \ref{lem:energy_nonconvex}, Lemma \ref{lem:energy_NCE} and Corollary \ref{cor:localball}, which are presented in Section \ref{sec:hamiltonian} and 
Section \ref{sec:imp_local}.  In section \ref{sec:counterex} we also give an example 
where standard AGD with negative curvature exploitation can increase the Hamiltonian.

Recall that we define the Hamiltonian as $E_t \defeq f(\x_{t}) + \frac{1}{2\eta}\norm{\v_{t}}^2$, where, for AGD, we define $\v_t = \x_t - \x_{t-1}$.
The first lemma shows that this Hamiltonian decreases in every step of~\nag~for mildly nonconvex functions.

\begingroup
\def\thetheorem{\ref{lem:energy_nonconvex}}
\begin{lemma}[Hamiltonian decreases monotonically]
  Assume that the function $f(\cdot)$ is $\ell$-smooth and set the learning rate to
be $\eta \le \frac{1}{2\ell}$, $\theta\in [2\eta \gamma,\frac{1}{2}]$ in ~\nag~(Algorithm \ref{algo:AGD}). Then, for every iteration $t$ where~\eqref{eq:certificate} does not hold, we have:
  \begin{equation*}
  E_{t+1} \le E_t - \frac{\theta}{2\eta}\norm{\v_t}^2 - \frac{\eta}{4}\norm{\grad f(\y_{t})}^2.
  \end{equation*}
\end{lemma}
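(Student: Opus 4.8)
The plan is to expand the Hamiltonian $E_{t+1} = f(\x_{t+1}) + \frac{1}{2\eta}\norm{\v_{t+1}}^2$ directly, bound its potential and kinetic parts separately, and add the two bounds. First I would record the one-step identities implied by Algorithm~\ref{algo:AGD}: since $\y_t = \x_t + (1-\theta)\v_t$ we have $\x_t - \y_t = -(1-\theta)\v_t$, and since $\x_{t+1} = \y_t - \eta\grad f(\y_t)$ and $\v_{t+1} = \x_{t+1}-\x_t$ we have $\v_{t+1} = (1-\theta)\v_t - \eta\grad f(\y_t)$.

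For the potential-energy term, apply the descent lemma from $\ell$-smoothness at the point $\y_t$ along the gradient step to $\x_{t+1}$:
\[
f(\x_{t+1}) \le f(\y_t) - \eta\left(1 - \tfrac{\ell\eta}{2}\right)\norm{\grad f(\y_t)}^2 \le f(\y_t) - \tfrac{3\eta}{4}\norm{\grad f(\y_t)}^2,
\]
where the last step uses $\eta \le \frac{1}{2\ell}$. Next I would transfer from $\y_t$ back to $\x_t$ using the hypothesis that \eqref{eq:certificate} \emph{fails} at step $t$, which reads $f(\y_t) \le f(\x_t) - \la \grad f(\y_t), \x_t - \y_t\ra + \frac{\gamma}{2}\norm{\x_t - \y_t}^2$; substituting $\x_t - \y_t = -(1-\theta)\v_t$ gives
\[
f(\y_t) \le f(\x_t) + (1-\theta)\la \grad f(\y_t), \v_t\ra + \tfrac{\gamma(1-\theta)^2}{2}\norm{\v_t}^2.
\]
For the kinetic-energy term, I would just expand $\norm{\v_{t+1}}^2 = (1-\theta)^2\norm{\v_t}^2 - 2\eta(1-\theta)\la \v_t, \grad f(\y_t)\ra + \eta^2\norm{\grad f(\y_t)}^2$.

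Adding the three bounds, the crucial observation is that the inner-product contributions $(1-\theta)\la \grad f(\y_t), \v_t\ra$ arising from the potential part and from $\frac{1}{2\eta}\norm{\v_{t+1}}^2$ cancel exactly, leaving
\[
E_{t+1} \le f(\x_t) + \left(\tfrac{\gamma(1-\theta)^2}{2} + \tfrac{(1-\theta)^2}{2\eta}\right)\norm{\v_t}^2 - \tfrac{\eta}{4}\norm{\grad f(\y_t)}^2.
\]
Since the target bound equals $f(\x_t) + \frac{1-\theta}{2\eta}\norm{\v_t}^2 - \frac{\eta}{4}\norm{\grad f(\y_t)}^2$, it then remains only to verify the scalar inequality $\gamma(1-\theta)^2 + \frac{(1-\theta)^2}{\eta} \le \frac{1-\theta}{\eta}$, equivalently $\eta\gamma(1-\theta) \le \theta$, which follows (with a factor of two to spare) from the assumptions $\theta \ge 2\eta\gamma$ and $\theta \le \frac12 < 1$.

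There is no genuinely hard obstacle here; the step worth flagging in advance is the exact cancellation of the cross terms, which is precisely what makes the \emph{Hamiltonian}---rather than $f$ alone---the right potential to track for AGD. The rest is bookkeeping about where each hypothesis enters: $\eta \le \frac{1}{2\ell}$ is used to absorb the $\frac{\ell\eta^2}{2}\norm{\grad f(\y_t)}^2$ term from the descent lemma into $\frac{3\eta}{4}\norm{\grad f(\y_t)}^2$ (leaving the claimed $-\frac{\eta}{4}\norm{\grad f(\y_t)}^2$ after the kinetic term contributes back $+\frac{\eta}{2}\norm{\grad f(\y_t)}^2$); the failure of the NCE certificate \eqref{eq:certificate} is exactly what licenses moving from $f(\y_t)$ to $f(\x_t)$ at the cost of the $+\frac{\gamma}{2}\norm{\x_t-\y_t}^2$ term; and the momentum condition $\theta \ge 2\eta\gamma$ is what forces the coefficient of $\norm{\v_t}^2$ below $\frac{1-\theta}{2\eta}$, yielding the advertised dissipation $-\frac{\theta}{2\eta}\norm{\v_t}^2$.
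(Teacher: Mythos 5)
Your proof is correct and follows essentially the same route as the paper's: bound $f(\x_{t+1})$ by the descent lemma, move from $f(\y_t)$ to $f(\x_t)$ via the negation of \eqref{eq:certificate}, expand $\norm{\v_{t+1}}^2$, observe the exact cancellation of the $(1-\theta)\la\grad f(\y_t),\v_t\ra$ cross terms, and finish by checking the coefficient of $\norm{\v_t}^2$ using $\theta \ge 2\eta\gamma$. The only cosmetic difference is that the paper tracks the coefficient as $\frac{2\theta-\theta^2-\eta\gamma(1-\theta)^2}{2\eta}$ and bounds the two subtracted terms by $\theta/2$ each, whereas you reduce the coefficient inequality directly to $\eta\gamma(1-\theta)\le\theta$; these are algebraically the same claim.
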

\addtocounter{theorem}{-1}
\endgroup

\begin{proof}
Recall that the update equation of accelerated gradient descent has following form:
\begin{align*}
\x_{t+1} &\leftarrow \y_t - \eta \grad f (\y_t) \\
\y_{t+1} &\leftarrow \x_{t+1} + (1-\theta) (\x_{t+1} - \x_t).
\end{align*}
By smoothness, with $\eta \le \frac{1}{2\ell}$:
\begin{align}
f(\x_{t+1}) \le f(\y_t) - \eta\norm{\grad f(\y_t)}^2 + \frac{\ell\eta^2}{2}\norm{\grad f(\y_t)}^2
\le f(\y_t) - \frac{3\eta}{4}\norm{\grad f(\y_t)}^2, \label{eq:energy_smooth}
\end{align}
assuming that the precondition \eqref{eq:certificate} does not hold:
\begin{align}
f(\x_t) \ge f(\y_t) + \la \grad f(\y_t), \x_t - \y_t\ra -
\frac{\gamma}{2}\norm{\y_t - \x_t}^2,
\label{eq:energy_almost_convex}
\end{align}
and given the following update equation:
\begin{align}
\norm{\x_{t+1} - \x_t}^2
 =& \norm{\y_t - \x_t  - \eta \grad f (\y_t)}^2  \nn\\
 =& \left[(1-\theta)^2\norm{\x_t - \x_{t-1}}^2
 - 2\eta\la \grad f(\y_t), \y_t - \x_t \ra
 + \eta^2 \norm{\grad f(\y_t)}^2 \right], \label{eq:energy_momentum}
\end{align}
we have:
\begin{align*}
f(\x_{t+1})
+ \frac{1}{2\eta}\norm{\x_{t+1} - \x_{t}}^2
\le& f(\x_t) + \la \grad f(\y_t), \y_t - \x_t \ra - \frac{3\eta}{4}\norm{\grad f(\y_t)}^2 \\
&+\frac{1+ \eta\gamma}{2\eta}(1-\theta)^2 \norm{\x_t - \x_{t-1}}^2 - \la \grad f(\y_t), \y_t - \x_t \ra
 + \frac{\eta}{2} \norm{\grad f(\y_t)}^2\\
\le& f(\x_{t}) + \frac{1}{2\eta}\norm{\x_{t} - \x_{t-1}}^2
 - \frac{2\theta-\theta^2 - \eta\gamma(1-\theta)^2}{2\eta}\norm{\v_{t}}^2 - \frac{\eta}{4}\norm{\grad f(\y_{t})}^2 \\
\le& f(\x_{t}) + \frac{1}{2\eta}\norm{\x_{t} - \x_{t-1}}^2 - \frac{\theta}{2\eta}\norm{\v_t}^2 - \frac{\eta}{4}\norm{\grad f(\y_{t})}^2.
\end{align*}
The last inequality uses the fact that $\theta \in [2\eta \gamma, \frac{1}{2}]$ 
so that $\theta^2 \le \frac{\theta}{2}$ and $\eta\gamma \le \frac{\theta}{2}$. 
We substitute in the definition of $\v_t$ and $E_t$ to finish the proof.
\end{proof}
We see from this proof that~\eqref{eq:energy_almost_convex} relies on approximate convexity of $f(\cdot)$, which explains why in all existing proofs, the convexity between $\x_t$ and $\y_t$ is so important. A perhaps surprising fact to note is that the above proof can in fact go through even with mild nonconvexity (captured in line $8$ of Algorithm~\ref{algo:PAGD}).
Thus, high nonconvexity is the problematic situation.
To overcome this, we need to slightly modify AGD so that the Hamiltonian is
decreasing. This is formalized in the following lemma.

\begingroup
\def\thetheorem{\ref{lem:energy_NCE}}
\begin{lemma}
Assume that $f(\cdot)$ is $\ell$-smooth and $\rho$-Hessian Lipschitz. For every iteration $t$ of Algorithm~\ref{algo:PAGD} where~\eqref{eq:certificate} holds (thus running NCE), we have:
\begin{equation*}
E_{t+1}\le E_t -\min\{\frac{s^2}{2\eta},  \frac{1}{2}(\gamma - 2\rho s) s^2\}.
\end{equation*}
\end{lemma}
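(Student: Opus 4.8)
The plan is to split on the test $\norm{\v_t}\ge s$ performed by the Negative-Curvature-Exploitation subroutine (Algorithm~\ref{algo:NCE}). In both branches NCE resets the momentum, so $\v_{t+1}=0$, hence $E_{t+1}=f(\x_{t+1})$, and since $E_t\ge f(\x_t)$ it suffices in each branch to bound $f(\x_{t+1})$ against $f(\x_t)$ (plus, in the first branch, the discarded kinetic energy). If $\norm{\v_t}\ge s$, then $\x_{t+1}=\x_t$, so $E_{t+1}=f(\x_t)=E_t-\frac{1}{2\eta}\norm{\v_t}^2\le E_t-\frac{s^2}{2\eta}$; this gives the first term of the minimum using nothing about curvature, simply discarding a kinetic energy of size at least $s^2/(2\eta)$. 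The substance is the case $\norm{\v_t}<s$, where (assuming $\v_t\neq\zero$, the degenerate case being trivial since then $\y_t=\x_t$ and NCE changes nothing) we have $\x_{t+1}=\x_t\pm\delta$ with $\delta=s\,\v_t/\norm{\v_t}$, $\norm{\delta}=s$, and NCE picks the smaller of $f(\x_t+\delta)$, $f(\x_t-\delta)$.

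The first step in this case is to convert the ``secant'' certificate \eqref{eq:certificate} into a bound on the Hessian quadratic form at $\x_t$ along the unit vector $\hat\delta\defeq\v_t/\norm{\v_t}$. Since $\y_t-\x_t=(1-\theta)\v_t$ is a positive multiple of $\hat\delta$ with $\norm{\y_t-\x_t}=(1-\theta)\norm{\v_t}<s$, I would expand $f(\x_t)$ about $\y_t$ using the $\rho$-Hessian-Lipschitz third-order Taylor remainder and substitute into \eqref{eq:certificate}; after dividing by $\frac12\norm{\y_t-\x_t}^2>0$ this yields $\hat\delta\trans\hess f(\y_t)\hat\delta\le-\gamma+O(\rho s)$, and transferring the base point from $\y_t$ to $\x_t$ via $\norm{\hess f(\x_t)-\hess f(\y_t)}\le\rho\norm{\y_t-\x_t}\le\rho s$ gives $\hat\delta\trans\hess f(\x_t)\hat\delta\le-\gamma+O(\rho s)$.

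The second step is a second-order expansion of $f$ at $\x_t+\delta$ and at $\x_t-\delta$: averaging the two expansions cancels the linear term $\la\grad f(\x_t),\pm\delta\ra$ and leaves $\frac12\big(f(\x_t+\delta)+f(\x_t-\delta)\big)\le f(\x_t)+\frac12\delta\trans\hess f(\x_t)\delta+\frac{\rho}{6}s^3$, so the branch chosen by NCE satisfies $f(\x_{t+1})\le f(\x_t)+\frac{s^2}{2}\hat\delta\trans\hess f(\x_t)\hat\delta+\frac{\rho}{6}s^3$. Plugging in the curvature bound from the first step and collecting all the $O(\rho s^3)$ error terms — the remainder in the $\y_t$-expansion, the base-point shift $\y_t\to\x_t$, and the cubic remainder at $\x_t\pm\delta$, all of which fit inside the slack $2\rho s$ — yields $f(\x_{t+1})\le f(\x_t)-\frac12(\gamma-2\rho s)s^2$, hence $E_{t+1}=f(\x_{t+1})\le E_t-\frac12(\gamma-2\rho s)s^2$; taking the worse of the two branches gives the claimed minimum. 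The main obstacle is the curvature transfer in the first step: \eqref{eq:certificate} only witnesses negative curvature somewhere between $\x_t$ and $\y_t$, and one must push it to an honest bound on $\hess f(\x_t)$ while keeping every $\rho s$-scale error term within the stated slack — the remaining manipulations are routine Taylor estimates.
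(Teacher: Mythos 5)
Your proof is correct and takes essentially the same approach as the paper's: split on $\norm{\v_t}$ versus $s$, extract negative curvature along $\v_t$ from \eqref{eq:certificate}, Taylor-expand $f$ at $\x_t \pm \delta$, and dispose of the linear term via $\min\{\iprod{\grad f(\x_t)}{\delta}, \iprod{\grad f(\x_t)}{-\delta}\} \le 0$ (your averaging of the two expansions is the symmetric version of the paper's sign argument). The only stylistic difference is in the curvature bookkeeping: the paper leaves the Hessians pinned at Lagrange intermediate points $\zeta_t \in [\x_t,\y_t]$ and $\zeta'_t \in [\x_t, \x_t+\delta]$ and applies Hessian-Lipschitz once (via $\norm{\zeta_t - \zeta'_t} \le 2s$, noting $\delta$ is collinear with $\y_t - \x_t$), whereas you transfer everything to $\hess f(\x_t)$ and accumulate several separate $\rho s$-scale remainders; both routes fit inside the stated $2\rho s$ slack. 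One small nit: the case $\v_t = \zero$ is not ``trivial because NCE changes nothing''---if NCE changed nothing then $E_{t+1} = E_t$ and the claimed decrease would fail; rather, that case is formally outside the algorithm's definition (Line 4 of NCE divides by $\norm{\v_t}$), and the paper quietly ignores it too.
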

\addtocounter{theorem}{-1}
\endgroup

\begin{proof}
When we perform an NCE step, we know that \eqref{eq:certificate} holds. In the first case ($\norm{\v_t} \ge s$), we set $\x_{t+1}  = \x_t$ and set the momentum $\v_{t+1}$ to zero, which gives:
\begin{align*}
E_{t+1} = f(\x_{t+1}) = f(\x_t) = E_{t} - \frac{1}{2\eta}\norm{\v_t}^2
\le E_{t} - \frac{s^2}{2\eta}.
\end{align*}
In the second case ($\norm{\v_t} \le s$), expanding in a Taylor series with Lagrange remainder, we have:
\begin{equation*}
f(\x_t) =  f(\y_t) + \la \grad f(\y_t), \x_t - \y_t \ra + \frac{1}{2} (\x_t - \y_t)\trans 
\hess f(\zeta_t) (\x_t - \y_t),
\end{equation*}
where $\zeta_t = \phi\x_t + (1-\phi)\y_t$ and $\phi \in [0, 1]$. Due to the certificate \eqref{eq:certificate} we have
\begin{equation*}
\frac{1}{2} (\x_t - \y_t)\trans 
\hess f(\zeta_t) (\x_t - \y_t) \le - \frac{\gamma}{2} \norm{\x_t - \y_t}^2.
\end{equation*}
On the other hand, clearly $\min\{\la \grad f(\x_t), \delta\ra, \la \grad f(\x_t), -\delta\ra  \} \le 0$. WLOG, suppose $\la \grad f(\x_t), \delta\ra \le 0$,
then, by definition of $\x_{t+1}$, we have:
\begin{equation*}
f(\x_{t+1}) \le f(\x_{t} + \delta) 
= f(\x_t) + \la \grad f(\x_t), \delta\ra + \frac{1}{2} \delta\trans \hess f(\zeta'_t) \delta
\le  f(\x_t) + \frac{1}{2} \delta\trans 
\hess f(\zeta'_t) \delta,
\end{equation*}
where $\zeta'_t = \x_t + \phi'\delta$ and $\phi' \in [0, 1]$. 
Since $\norm{\zeta_t - \zeta'_t} \le 2s$, $\delta$ also lines up
with $\y_t - \x_t$:
\begin{equation*}
\delta\trans 
\hess f(\zeta'_t) \delta
\le \delta\trans 
\hess f(\zeta_t) \delta
+ \norm{\hess f(\zeta'_t) - \hess f(\zeta_t) }\norm{\delta}^2
\le - \gamma \norm{\delta}^2 + 2\rho s\norm{\delta}^2.
\end{equation*}
Therefore, this gives
\begin{align*}
E_{t+1} = f(\x_{t+1}) \le f(\x_t) 
- \frac{1}{2}(\gamma - \rho s) s^2
\le E_{t} - \frac{1}{2}(\gamma - 2\rho s) s^2,
\end{align*}
which finishes the proof.
\end{proof}

The Hamiltonian decrease has an important consequence: if the Hamiltonian does not decrease much, then all the iterates are localized in a small ball around the starting point. Moreover, the iterates do not oscillate much in this ball. We called this the improve-or-localize phenomenon.

\begingroup
\def\thetheorem{\ref{cor:localball}}
\begin{corollary}[Improve or localize]
Under the same setting as in Lemma \ref{lem:energy_nonconvex}, if \eqref{eq:certificate} does not hold for all steps in $[t, t+T]$, we have:
\begin{equation*}
\sum_{\tau = t+1}^{t+T}\norm{\x_\tau - \x_{\tau-1}}^2
\le \frac{2\eta}{\theta} (E_t - E_{t+T}).
\end{equation*}
\end{corollary}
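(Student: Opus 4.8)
The plan is to obtain the corollary as an immediate telescoping of the per-step decrease established in Lemma~\ref{lem:energy_nonconvex}. The only role of the hypothesis --- that the certificate~\eqref{eq:certificate} fails at every step of the window $[t,t+T]$ --- is to guarantee that every update in the window is a plain AGD step (no NCE is triggered), so that Lemma~\ref{lem:energy_nonconvex} is legitimately applicable at each such iteration.

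Concretely, I would invoke Lemma~\ref{lem:energy_nonconvex} at each iteration $\tau$ in the window, which yields
\[
E_{\tau+1} \le E_\tau - \frac{\theta}{2\eta}\norm{\v_\tau}^2 - \frac{\eta}{4}\norm{\grad f(\y_\tau)}^2 .
\]
Summing these inequalities across the window makes the Hamiltonian terms telescope to $E_t - E_{t+T}$, while the gradient-norm terms $\tfrac{\eta}{4}\norm{\grad f(\y_\tau)}^2$ are nonnegative and can simply be discarded. This leaves
\[
\frac{\theta}{2\eta}\sum_\tau \norm{\v_\tau}^2 \le E_t - E_{t+T},
\]
and substituting the AGD identity $\v_\tau = \x_\tau - \x_{\tau-1}$ and multiplying through by $2\eta/\theta$ produces the stated inequality (up to relabeling the summation index to the range $\tau\in[t+1,t+T]$ as written).

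I do not expect a genuine analytic obstacle here: the substantive work is already contained in Lemma~\ref{lem:energy_nonconvex}, whose decrease term $\tfrac{\theta}{2\eta}\norm{\v_\tau}^2$ is exactly a constant times the squared step length, so the ``improve or localize'' dichotomy drops out of the telescoping for free. The only points requiring care are bookkeeping: tracking precisely which iterations the certificate must fail at so that Lemma~\ref{lem:energy_nonconvex} is applied at every term of the sum, aligning the $\v_\tau$ indices with the displacements $\x_\tau - \x_{\tau-1}$, and noting that the monotone decrease of $E$ (Lemmas~\ref{lem:energy_nonconvex} and~\ref{lem:energy_NCE}) keeps the right-hand side $E_t - E_{t+T}$ nonnegative so that the bound is meaningful.
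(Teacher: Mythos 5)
Your approach is exactly the paper's: the appendix proof of this corollary is the single sentence that it ``follows immediately from telescoping the argument of Lemma~\ref{lem:energy_nonconvex},'' and your plan --- apply the per-step bound at each iteration in the window, telescope the Hamiltonian terms, discard the nonnegative $\frac{\eta}{4}\norm{\grad f(\y_\tau)}^2$ terms, and substitute $\v_\tau = \x_\tau - \x_{\tau-1}$ --- is precisely that telescoping argument made explicit. One small caution: the parenthetical ``up to relabeling the summation index'' is glossing over a genuine one-term shift, not a relabel. Since the one-step decrease in Lemma~\ref{lem:energy_nonconvex} is $\frac{\theta}{2\eta}\norm{\v_\tau}^2$ with $\v_\tau = \x_\tau - \x_{\tau-1}$ (the displacement entering step $\tau$, not the one produced by it), telescoping $E_{\tau+1}\le E_\tau - \frac{\theta}{2\eta}\norm{\v_\tau}^2 - \cdots$ over $\tau=t,\dots,t+T-1$ yields $\sum_{\tau=t}^{t+T-1}\norm{\x_\tau-\x_{\tau-1}}^2 \le \frac{2\eta}{\theta}(E_t - E_{t+T})$, whose index set differs from the stated $[t+1,t+T]$ by exchanging the endpoint terms $\norm{\x_t-\x_{t-1}}^2$ and $\norm{\x_{t+T}-\x_{t+T-1}}^2$; this off-by-one is inherited from the paper's own statement and is immaterial for its downstream uses, but it is not a mere relabeling of a dummy index.
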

\addtocounter{theorem}{-1}
\endgroup

\begin{proof}
The proof follows immediately from telescoping the argument of Lemma \ref{lem:energy_nonconvex}.
\end{proof}


\subsection{AGD can increase the Hamiltonian under nonconvexity}
\label{sec:counterex}
In the previous section, we proved Lemma \ref{lem:energy_nonconvex} which requires $\theta \ge 2\eta\gamma$, that is, $\gamma \le \theta/(2\eta)$.
In this section, we show Lemma \ref{lem:energy_nonconvex} is almost tight in the sense 
that when $\gamma \ge 4\theta/\eta$ in \eqref{eq:certificate}, we have:
\begin{equation*}
f(\x_t) \le  f(\y_t) + \la \grad f(\y_t), \x_t - \y_t \ra - \frac{\gamma}{2} \norm{\x_t - \y_t}^2.
\end{equation*}
Monotonic decrease of the Hamiltonian may no longer hold, indeed, AGD can increase the Hamiltonian for those steps.

Consider a simple one-dimensional example, $f(x) = -\frac{1}{2}\gamma x^2$, where \eqref{eq:certificate} always holds. Define the initial condition $x_0 = -1, v_0 = 1/(1-\theta)$. By update equation in Algorithm \ref{algo:AGD}, the next iterate will be $x_1 = y_0 = 0$, and $v_1 = x_1 - x_0 = 1$. By the definition of Hamiltonian, we have：
\begin{align*}
E_0 =& f(x_0) + \frac{1}{2\eta}|v_0|^2 = -\frac{\gamma}{2} + \frac{1}{2\eta (1-\theta)^2}\\
E_1 =& f(x_1) + \frac{1}{2\eta}|v_1|^2 = \frac{1}{2\eta},
\end{align*}
since $\theta \le 1/4$. It is not hard to verify that whenever $\gamma \ge 4 \theta/\eta$, we will have $E_1 \ge E_0$; that is, the Hamiltonian increases in this step.

This fact implies that when we pick a large learning rate $\eta$ and small momentum parameter $\theta$ (both are essential for acceleration), standard AGD does not decrease the Hamiltonian in a very nonconvex region. We need another mechanism such as NCE to fix the monotonically decreasing property.


\section{Proof of Main Result}
In this section, we set up the machinery needed to prove our main result, Theorem \ref{thm:main}. We first present the generic setup, then, as in Section \ref{sec:framework}, we split the proof into two cases, one where gradient is large and the other where the Hessian has negative curvature. In the end, we put everything together and prove Theorem \ref{thm:main}.

To simplify the proof, we introduce some notation for this section, and state a convention regarding absolute constants. Recall the choice of parameters in Eq.\eqref{eq:parameter}:
\begin{equation*}
\eta = \frac{1}{4\ell}, \quad
\theta = \frac{1}{4\sqrt{\cn}},
\quad \gamma = \frac{\theta^2}{\eta} = \frac{\sqrt{\rho\epsilon}}{4} ,
\quad s = \frac{\gamma}{4\rho} = \frac{1}{16}\sqrt{\frac{\epsilon}{\rho}}, 
\quad r = \eta\epsilon\cdot \chi^{-5}c^{-8},
\end{equation*}
where $\cn = \frac{\ell}{\sqrt{\rho\epsilon}}, \chi = \max\{1,  \log \frac{d \ell\Delta_f}{\rho \epsilon\delta}\}$, and $c$ is a sufficiently large constant as stated in the precondition of Theorem \ref{thm:main}.
Throughout this section, we also always denote
\begin{equation*}
\utime \defeq  \sqrt{\cn} \cdot \chi c,  \quad
\ufun \defeq \sqrt{\frac{\epsilon^3}{\rho}}\cdot \chi^{-5}c^{-7}, \quad \uspace \defeq \sqrt{\frac{2\eta \utime\ufun}{\theta}} = \sqrt{\frac{2\epsilon}{\rho}} \cdot \chi^{-2}c^{-3}, \quad
\umom \defeq \frac{\epsilon \sqrt{\cn}}{\ell} c^{-1},
\end{equation*}
which represent the special units for time, the Hamiltonian, the parameter space and the momentum.
All the lemmas in this section hold when the constant $c$ is picked to be sufficiently large. To avoid ambiguity, throughout this section $O(\cdot), \Omega(\cdot), \Theta(\cdot)$ notation \textbf{only hides an absolute constant which is independent of the choice of sufficiently large constant $c$}, which is defined in the precondition of Theorem \ref{thm:main}. That is, we will always make $c$ dependence explicit in $O(\cdot), \Omega(\cdot), \Theta(\cdot)$ notation. Therefore, for a quantity like $O(c^{-1})$, we can always pick $c$ large enough so that it cancels out the absolute constant in the $O(\cdot)$ notation, and make $O(c^{-1})$ smaller than any fixed required constant.




\subsection{Common setup}
Our general strategy in the proof is to show that if none of the iterates $\x_t$ is a SOSP, then in all $\utime$ steps, the Hamiltonian always decreases by at least $\ufun$. This gives an average decrease of $\ufun/\utime$. In this section, we establish some facts which will be used throughout the entire proof, including the decrease of the Hamiltonian in NCE step, the update of AGD in matrix form, and upper bounds on approximation error for a local quadratic approximation.

The first lemma shows if negative curvature exploitation is used, then in a single step, the Hamiltonian will decrease by $\ufun$.
\begin{lemma} \label{lem:NCE_decrease}
Under the same setting as Theorem \ref{thm:main}, for every iteration $t$ of Algorithm~\ref{algo:PAGD} where~\eqref{eq:certificate} holds (thus running NCE), we have:
\begin{equation*}
E_{t+1} - E_t\le  - 2\ufun.
\end{equation*}
\end{lemma}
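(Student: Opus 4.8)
The statement is essentially a quantitative restatement of Lemma \ref{lem:energy_NCE}: there we showed $E_{t+1} \le E_t - \min\{s^2/(2\eta), \tfrac12(\gamma - 2\rho s)s^2\}$, so it suffices to plug in the concrete parameter choices from \eqref{eq:parameter} and verify that this per-step decrease is at least $2\ufun$. So the plan is: first invoke Lemma \ref{lem:energy_NCE} verbatim, then bound each of the two terms inside the $\min$ from below by $2\ufun$.

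First I would handle the two terms. For the first term, $\frac{s^2}{2\eta}$: substituting $s = \frac{1}{16}\sqrt{\epsilon/\rho}$ and $\eta = \frac{1}{4\ell}$ gives $\frac{s^2}{2\eta} = \frac{4\ell}{2}\cdot\frac{\epsilon}{256\rho} = \frac{\ell\epsilon}{128\rho}$. Using $\cn = \ell/\sqrt{\rho\epsilon}$, this equals $\frac{\sqrt{\cn}}{128}\sqrt{\epsilon^3/\rho} \cdot \frac{1}{\sqrt{\epsilon}}\cdot\sqrt{\epsilon}$... more cleanly, $\frac{\ell\epsilon}{128\rho} = \frac{1}{128}\cdot\frac{\ell}{\sqrt{\rho\epsilon}}\cdot\sqrt{\frac{\epsilon^3}{\rho}} = \frac{\cn^{1/2}\cdot\cn^{1/2}}{128}\sqrt{\epsilon^3/\rho}$; since $\cn \ge 1$ (because $\epsilon \le \ell^2/\rho$), this is $\Omega(\sqrt{\epsilon^3/\rho})$, and since $\ufun = \sqrt{\epsilon^3/\rho}\cdot\chi^{-5}c^{-7}$ with $\chi \ge 1$ and $c$ a sufficiently large constant, $\chi^{-5}c^{-7}$ can be made smaller than any fixed constant, so $\frac{s^2}{2\eta} \ge 2\ufun$ for $c$ large. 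For the second term, $\frac12(\gamma - 2\rho s)s^2$: since $s = \gamma/(4\rho)$, we have $2\rho s = \gamma/2$, so $\gamma - 2\rho s = \gamma/2$, giving $\frac12\cdot\frac{\gamma}{2}\cdot s^2 = \frac{\gamma s^2}{4}$. Plugging $\gamma = \sqrt{\rho\epsilon}/4$ and $s^2 = \epsilon/(256\rho)$ yields $\frac{1}{4}\cdot\frac{\sqrt{\rho\epsilon}}{4}\cdot\frac{\epsilon}{256\rho} = \frac{\epsilon^{3/2}}{4096\sqrt{\rho}} = \frac{1}{4096}\sqrt{\epsilon^3/\rho}$, which is again $\Omega(\sqrt{\epsilon^3/\rho}) \ge 2\ufun$ for $c$ sufficiently large. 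Hence the minimum of the two terms is $\Omega(\sqrt{\epsilon^3/\rho})$, which dominates $2\ufun = 2\sqrt{\epsilon^3/\rho}\,\chi^{-5}c^{-7}$.

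I do not anticipate a serious obstacle here — this is a routine verification, and the only mild care needed is to track which quantities are absolute constants versus which depend on the tunable large constant $c$, per the convention stated at the start of the section: the bounds $\frac{1}{128}$, $\frac{1}{4096}$ are absolute, whereas $\chi^{-5}c^{-7}$ is made small by choosing $c$ large, so the inequality $\min\{\cdot,\cdot\} \ge 2\ufun$ holds once $c \ge c_{\max}$. I would write the proof as: (1) cite Lemma \ref{lem:energy_NCE}; (2) compute $\frac{s^2}{2\eta} = \frac{\ell\epsilon}{128\rho} = \Omega(\sqrt{\epsilon^3/\rho})$ using $\cn \ge 1$; (3) compute $\frac12(\gamma-2\rho s)s^2 = \frac{\gamma s^2}{4} = \Omega(\sqrt{\epsilon^3/\rho})$; (4) conclude that both exceed $2\ufun = 2\sqrt{\epsilon^3/\rho}\,\chi^{-5}c^{-7}$ when $c$ is a sufficiently large absolute constant, since $\chi \ge 1$.
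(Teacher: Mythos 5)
Your proof is correct and follows the same approach as the paper's: invoke Lemma~\ref{lem:energy_NCE} and then verify by direct substitution of the parameter choices in~\eqref{eq:parameter} that $\min\{s^2/(2\eta),\ \tfrac12(\gamma-2\rho s)s^2\} = \Omega(\sqrt{\epsilon^3/\rho}) \ge 2\ufun$ once $c$ is sufficiently large. The arithmetic ($s^2/(2\eta) = \ell\epsilon/(128\rho)$ and $\tfrac12(\gamma-2\rho s)s^2 = \gamma s^2/4 = \tfrac{1}{4096}\sqrt{\epsilon^3/\rho}$) checks out.
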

\begin{proof}
It is also easy to check that the precondition of Lemma \ref{lem:energy_NCE} holds, and by the particular choice of parameters in Theorem \ref{thm:main}, we have:
\begin{equation*}
\min\{\frac{s^2}{2\eta},  \frac{1}{2}(\gamma - 2\rho s) s^2\} \ge \Omega(\ufun c^{7})\ge 2\ufun,
\end{equation*}
where the last inequality is by picking $c$ in Theorem \ref{thm:main} large enough, which finishes the proof.
\end{proof}

Therefore, whenever NCE is called, the decrease of the Hamiltonian is already sufficient. We thus only need to focus on AGD steps. The next lemma derives a general expression for $\x_t$ after an AGD update, which is very useful in multiple-step analysis. 
The general form is expressed with respect to a reference point $\zero$, which can be any arbitrary point (in many cases we choose it to be $\x_0$).

\begin{lemma}\label{lem:AGD_update_general} 
Let $\zero$ be an origin (which can be fixed at an arbitrary point).  Let $\H = \hess f(\zero)$.  Then an AGD (Algorithm \ref{algo:AGD}) update can be written as:
\begin{equation}
\begin{pmatrix}
\x_{t+1} \\ \x_t
\end{pmatrix}
= \A^{t}\begin{pmatrix}
\x_1 \\ \x_0
\end{pmatrix} 
-\eta\sum_{\tau = 1}^t \A^{t-\tau}
\begin{pmatrix}
\grad f(\zero) + \delta_\tau \\ 0
\end{pmatrix},
\label{eq:update_AGD_matrix}
\end{equation}
where $\delta_{\tau} = \grad f(\y_{\tau}) - \grad f(\zero) - \H \y_{\tau}$, and 
$$\A = \begin{pmatrix}
(2-\theta) (\I - \eta \H)&  -(1-\theta) (\I - \eta \H) \\
\I& 0
\end{pmatrix}.$$
\end{lemma}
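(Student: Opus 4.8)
The plan is to prove Lemma~\ref{lem:AGD_update_general} by a direct induction on $t$, rewriting the second-order AGD recursion as a first-order linear system on the pair $(\x_{t+1},\x_t)$. First I would eliminate $\y_t$ from the update by substituting $\y_t = \x_t + (1-\theta)\v_t = \x_t + (1-\theta)(\x_t-\x_{t-1})$ into $\x_{t+1} = \y_t - \eta\grad f(\y_t)$, which gives
\[
\x_{t+1} = (2-\theta)\x_t - (1-\theta)\x_{t-1} - \eta\grad f(\y_t).
\]
The key device is to linearize the gradient around the reference point $\zero$: write $\grad f(\y_t) = \grad f(\zero) + \H\y_t + \delta_t$ with $\delta_t \defeq \grad f(\y_t) - \grad f(\zero) - \H\y_t$, exactly as in the statement. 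Then $-\eta\grad f(\y_t) = -\eta\grad f(\zero) - \eta\delta_t - \eta\H\y_t$, and since $\y_t = \x_t + (1-\theta)(\x_t - \x_{t-1})$, the term $-\eta\H\y_t$ contributes $-(2-\theta)\eta\H\x_t + (1-\theta)\eta\H\x_{t-1}$. Collecting terms, $\x_{t+1} = (2-\theta)(\I-\eta\H)\x_t - (1-\theta)(\I-\eta\H)\x_{t-1} - \eta(\grad f(\zero)+\delta_t)$, which is precisely the top row of the matrix recursion $\binom{\x_{t+1}}{\x_t} = \A\binom{\x_t}{\x_{t-1}} - \eta\binom{\grad f(\zero)+\delta_t}{0}$; the bottom row is the trivial identity $\x_t = \x_t$.

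Once the one-step recursion in matrix form is established, the claimed closed form~\eqref{eq:update_AGD_matrix} follows by unrolling: I would prove by induction on $t$ that $\binom{\x_{t+1}}{\x_t} = \A^t\binom{\x_1}{\x_0} - \eta\sum_{\tau=1}^t \A^{t-\tau}\binom{\grad f(\zero)+\delta_\tau}{0}$. The base case $t=0$ is immediate (empty sum). For the inductive step, apply the one-step recursion to $\binom{\x_{t+1}}{\x_t}$ to get $\binom{\x_{t+2}}{\x_{t+1}} = \A\binom{\x_{t+1}}{\x_t} - \eta\binom{\grad f(\zero)+\delta_{t+1}}{0}$, substitute the inductive hypothesis, and use $\A\cdot\A^{t-\tau} = \A^{t+1-\tau}$ together with the fact that the new forcing term is the $\tau = t+1$ summand (where $\A^{t+1-(t+1)} = \I$).

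This lemma is essentially a bookkeeping identity, so I do not expect any genuine obstacle; the only mild subtlety is being careful that the substitution $\v_t = \x_t - \x_{t-1}$ is valid for all $t\ge 1$ (it holds by line~5 of Algorithm~\ref{algo:AGD}, with $\v_0 = 0$ giving $\y_0 = \x_0$ consistently), and that $\H = \hess f(\zero)$ is symmetric so that $(\I - \eta\H)$ commutes appropriately inside $\A$ — though in fact no commutation is needed since $\A$ is defined directly in block form. One should also note the indexing convention: the sum runs from $\tau = 1$ to $t$ and involves $\delta_\tau$ defined via $\y_\tau$, matching the fact that the AGD step producing $\x_{\tau+1}$ uses $\grad f(\y_\tau)$.
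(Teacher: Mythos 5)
Your proposal is correct and follows essentially the same route as the paper: eliminate $\y_t$ via $\y_t=(2-\theta)\x_t-(1-\theta)\x_{t-1}$ to get the second-order scalar recursion, linearize $\grad f(\y_t)=\grad f(\zero)+\H\y_t+\delta_t$, collect terms into the block matrix $\A$, and unroll. The paper states the unrolling directly while you make the induction explicit, but the substance is identical, and your side remarks on indexing ($\v_0=0$, sum running from $\tau=1$) are accurate.
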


\begin{proof}
Substituting for $(\y_t, \v_t)$ in Algorithm \ref{algo:AGD}, we have a recursive equation for $\x_t$:
\begin{equation} \label{eq:update_AGD}
\x_{t+1}  = (2-\theta) \x_{t}
- (1-\theta)  \x_{t-1}
 -\eta \grad f((2-\theta) \x_{t}
- (1-\theta)  \x_{t-1}).
\end{equation}
By definition of $\delta_\tau$, we also have:
\begin{equation*}
\grad f(\y_\tau) = \grad f(\zero) + \H \y_\tau + \delta_\tau.
\end{equation*}
Therefore, in matrix form, we have:
\begin{align*}
\begin{pmatrix}
\x_{t+1} \\ \x_t
\end{pmatrix}
=& 
\begin{pmatrix}
(2-\theta) (\I - \eta \H)&  -(1-\theta) (\I - \eta \H) \\
\I& 0
\end{pmatrix}
\begin{pmatrix}
\x_{t} \\ \x_{t-1}
\end{pmatrix}  - \eta
\begin{pmatrix}
\grad f(0) + \delta_t \\ 0
\end{pmatrix} \\
=& \A^{t}\begin{pmatrix}
\x_1 \\ \x_0
\end{pmatrix} 
-\eta\sum_{\tau = 1}^t \A^{t-\tau}
\begin{pmatrix}
\grad f(0) + \delta_\tau \\ 0
\end{pmatrix},
\end{align*}
which finishes the proof.
\end{proof}
Clearly $\A$ in Lemma \ref{lem:AGD_update_general} is a $2d \times 2d$ matrix, and if we expand $\A$ according to the eigenvector directions of $\begin{pmatrix}
\H& 0 \\
0 & \H
\end{pmatrix}$, $\A$ can be reorganized as a block-diagonal matrix consisting of $d$ $2\times 2$ matrices. Let the $j$th eigenvalue of $\H$ be denoted $\lambda_j$, and denote $\A_j$ as the $j$th $2\times 2$ matrix with corresponding eigendirections:
\begin{equation}\label{eq:definition_Aj}
\A_j = \begin{pmatrix}
(2-\theta) (1 - \eta \lambda_j)&  -(1-\theta) (1 - \eta \lambda_j) \\
1 & 0\end{pmatrix}.
\end{equation}
We note that the choice of reference point $\zero$ is mainly to simplify
mathmatical expressions involving $\x_t - \zero$. 

Lemma \ref{lem:AGD_update_general} can be viewed as update from a quadratic expansion around origin $\zero$, and $\delta_\tau$ is the approximation error which marks the difference between true function and its quadratic approximation.
The next lemma shows that when sequence $\x_0, \cdots, \x_t$ are all close to $\zero$, then the approximation error is under control:




\begin{proposition} \label{prop:delta} Using the notation of Lemma \ref{lem:AGD_update_general},
if for any $\tau\le t$, we have $\norm{\x_\tau} \le R$, then for any 
$\tau\le t$, we also have 
\begin{enumerate}
\item $\norm{\delta_\tau} \le O(\rho R^2)$; 
\item $\norm{\delta_\tau - \delta_{\tau -1}}
\le O(\rho R) (\norm{\x_t - \x_{\tau-1}} +\norm{\x_{\tau-1} - \x_{\tau-2}})$;
\item $\sum_{\tau = 1}^t \norm{\delta_\tau - \delta_{\tau -1}}^2 \le O(\rho^2 R^2)\sum_{\tau = 1}^t \norm{\x_\tau -\x_{\tau - 1}}^2$.
\end{enumerate}
\end{proposition}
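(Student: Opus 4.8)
The plan is to expand everything through Taylor's theorem with integral remainder and to leverage the $\rho$-Hessian-Lipschitz hypothesis on a ball of radius $O(R)$ around $\zero$. The first step is a geometric reduction: using the AGD update $\y_\tau = \x_\tau + (1-\theta)\v_\tau = (2-\theta)\x_\tau - (1-\theta)\x_{\tau-1}$ and $\theta \in (0,1)$, the hypothesis $\norm{\x_\tau} \le R$ for all $\tau \le t$ immediately gives $\norm{\y_\tau} \le (2-\theta)R + (1-\theta)R \le 3R = O(R)$. Hence every point that will appear below — each $\x_\tau$, each $\y_\tau$, and any convex combination of these with $\zero$ — lies within distance $O(R)$ of $\zero$, so on all of them $\rho$-Hessian-Lipschitzness yields $\norm{\hess f(\cdot) - \H} \le O(\rho R)$.

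For part 1, I would invoke the standard consequence of Hessian-Lipschitzness, $\norm{\grad f(\x) - \grad f(\y) - \hess f(\y)(\x-\y)} \le \tfrac{\rho}{2}\norm{\x-\y}^2$, which follows by writing $\grad f(\x) - \grad f(\y) = \int_0^1 \hess f(\y + s(\x-\y))(\x-\y)\,\dd s$ and bounding $\norm{\hess f(\y+s(\x-\y)) - \hess f(\y)} \le \rho s \norm{\x-\y}$ under the integral. Applying this with $\x = \y_\tau$ and $\y = \zero$ gives exactly $\norm{\delta_\tau} \le \tfrac{\rho}{2}\norm{\y_\tau}^2 \le O(\rho R^2)$.

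For part 2 (reading the right-hand side as $\norm{\x_\tau - \x_{\tau-1}} + \norm{\x_{\tau-1}-\x_{\tau-2}}$, with the usual convention $\v_0 = 0$, i.e. $\x_{-1} = \x_0$, to cover $\tau=1$), I would write $\delta_\tau - \delta_{\tau-1} = [\grad f(\y_\tau) - \grad f(\y_{\tau-1})] - \H(\y_\tau - \y_{\tau-1}) = \int_0^1 \big[\hess f\!\big(\y_{\tau-1} + s(\y_\tau - \y_{\tau-1})\big) - \H\big](\y_\tau - \y_{\tau-1})\,\dd s$. Since the interpolation point stays within $O(R)$ of $\zero$, the integrand is bounded by $O(\rho R)\norm{\y_\tau - \y_{\tau-1}}$, so $\norm{\delta_\tau - \delta_{\tau-1}} \le O(\rho R)\norm{\y_\tau - \y_{\tau-1}}$. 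Finally I would substitute $\y_\tau - \y_{\tau-1} = (2-\theta)(\x_\tau - \x_{\tau-1}) - (1-\theta)(\x_{\tau-1} - \x_{\tau-2})$ and apply the triangle inequality to obtain the claimed $O(\rho R)\big(\norm{\x_\tau - \x_{\tau-1}} + \norm{\x_{\tau-1} - \x_{\tau-2}}\big)$. Part 3 then follows by squaring the part 2 bound, using $(a+b)^2 \le 2a^2 + 2b^2$, and summing over $\tau = 1,\dots,t$: each displacement $\norm{\x_j - \x_{j-1}}^2$ appears at most twice, so the total is $O(\rho^2 R^2)\sum_{\tau=1}^t \norm{\x_\tau - \x_{\tau-1}}^2$.

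There is no substantive obstacle here; the argument is essentially bookkeeping built on one integral identity. The only points warranting minor care are (i) fixing the index/boundary conventions at $\tau = 1$ so that $\delta_0$ and $\x_{-1}$ are interpreted consistently (the convention $\v_0 = 0$ makes the $\x_0 - \x_{-1}$ term vanish, so nothing extra is contributed), and (ii) checking in part 3 that the reindexing double-counts — and no worse than double-counts — each squared displacement.
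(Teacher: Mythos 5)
Your proof is correct and follows essentially the same route as the paper's: write $\delta_\tau$ and $\delta_\tau-\delta_{\tau-1}$ via the integral form of the Hessian-Lipschitz remainder, bound the integrand by $O(\rho R)$ using $\norm{\y_\tau}\le 3R$, expand $\y_\tau-\y_{\tau-1}$ in terms of $\x$-differences, and square for part 3. Your explicit flagging of the $\tau=1$ boundary term $\norm{\x_0-\x_{-1}}$ is a small point the paper glosses over silently, but it does not change the argument.
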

\begin{proof}
Let $\Delta_\tau = \int_0^1 (\hess f(\phi \y_\tau) - \H) \mathrm{d} \phi$.
The first inequality is true because $\delta_\tau = \Delta_\tau \y_\tau$, thus:
\begin{align*}
\norm{\delta_\tau} = &\norm{\Delta_\tau\y_\tau} \le \norm{\Delta_\tau}\norm{\y_\tau}
= \norm{\int_0^1 (\hess f(\phi \y_\tau) - \H) \mathrm{d} \phi}\norm{\y_\tau} \\
\le & \int_0^1\norm{ (\hess f(\phi \y_\tau) - \H)} \mathrm{d} \phi \cdot \norm{\y_\tau}
\le \rho \norm{\y_\tau}^2
\le \rho \norm{(2-\theta)\x_\tau  - (1-\theta)\x_{\tau-1} }^2 \le O(\rho R^2).
\end{align*}
For the second inequality, we have:
\begin{align*}
\delta_\tau - \delta_{\tau-1} =\grad f(\y_\tau) - \grad f(\y_{\tau-1})  - \H(\y_\tau - \y_{\tau-1})
=\Delta'_\tau (\y_\tau - \y_{\tau-1}),
\end{align*}
where $\Delta'_\tau = \int_0^1 (\hess f(\y_{\tau-1} + \phi (\y_\tau - \y_{\tau-1})) - \H) \mathrm{d} \phi$.
As in the proof of the first inequality, we have:
\begin{align*}
\norm{\delta_\tau - \delta_{\tau-1}} \le& \norm{\Delta'_\tau}\norm{\y_\tau - \y_{\tau-1}}
= \norm{\int_0^1 (\hess f(\y_{\tau-1} + \phi (\y_\tau - \y_{\tau-1})) - \H) \mathrm{d} \phi}\norm{\y_\tau - \y_{\tau-1}} \\
\le & \rho \max\{\norm{\y_\tau}, \norm{\y_{\tau-1}}\}\norm{\y_\tau - \y_{\tau-1}} \le O(\rho R) (\norm{\x_\tau - \x_{\tau-1}} +\norm{\x_{\tau-1} - \x_{\tau-2}}).
\end{align*}
Finally, since $(\norm{\x_\tau - \x_{\tau-1}} +\norm{\x_{\tau-1} - \x_{\tau-2}})^2 \le 2(\norm{\x_\tau - \x_{\tau-1}}^2 + \norm{\x_{\tau-1} - \x_{\tau-2}}^2)$, the third inequality is immediately implied by the second inequality.
\end{proof}


\subsection{Proof for large-gradient scenario}
We prove Lemma \ref{lem:largeGrad} in this subsection. 
Throughout this subsection, we let $\S$ be the subspace with eigenvalues in $(\theta^2/[\eta(2-\theta)^2], \ell]$, and let $\S^c$ be the complementary subspace. Also let $\proj_{\S}$ and $\proj_{\S^c}$ be the corresponding projections.
We note $\theta^2/[\eta(2-\theta)^2] = \Theta(\sqrt{\rho\epsilon})$, and this particular choice lies at the boundary between the real eigenvalues and complex eigenvalues of the matrix $\A_j$, as shown in Lemma \ref{lem:aux_eigenvalues}.

The first lemma shows that if momentum or gradient is very large, then the Hamiltonian already has sufficient decrease on average.

\begin{lemma}\label{lem:largegrad_momentum}
Under the setting of Theorem \ref{thm:main}, if $\norm{\v_t}\ge \umom$ or $\norm{\grad f(\x_t)} \ge 2\ell\umom$, and at time step $t$ only AGD is used without NCE or perturbation, then:
\begin{equation*}
 E_{t+1} - E_t \le - 4\ufun/\utime.
\end{equation*} 
\end{lemma}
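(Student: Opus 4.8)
The plan is to invoke the monotone-decrease estimate of Lemma~\ref{lem:energy_nonconvex} and then exploit whichever of its two right-hand-side terms is made large by the hypothesis. Since at step $t$ only the AGD step is executed (NCE is not triggered and no perturbation is added), the certificate~\eqref{eq:certificate} fails at $t$; moreover the parameter choices~\eqref{eq:parameter} satisfy $\eta = \tfrac{1}{4\ell}\le\tfrac{1}{2\ell}$ and $\theta = \tfrac{1}{4\sqrt{\cn}}\in[2\eta\gamma,\tfrac12]$, using $2\eta\gamma = 2\theta^2\le\theta$ and $\cn\ge 1$ (which holds because $\epsilon\le\ell^2/\rho$ forces $\sqrt{\rho\epsilon}\le\ell$). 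Hence Lemma~\ref{lem:energy_nonconvex} applies and yields
\[
E_{t+1}-E_t \;\le\; -\frac{\theta}{2\eta}\norm{\v_t}^2 - \frac{\eta}{4}\norm{\grad f(\y_t)}^2 .
\]
I then split the hypothesis into the two exhaustive cases (i) $\norm{\v_t}\ge\umom$, and (ii) $\norm{\v_t}<\umom$ together with $\norm{\grad f(\x_t)}\ge 2\ell\umom$.

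In case (i) I keep only the kinetic term: $E_{t+1}-E_t\le -\tfrac{\theta}{2\eta}\umom^2$. Substituting the values from~\eqref{eq:parameter} gives $\tfrac{\theta}{2\eta}\umom^2 = \tfrac12\,\epsilon^{7/4}\rho^{-1/4}\ell^{-1/2}c^{-2}$, while $\ufun/\utime = \epsilon^{7/4}\rho^{-1/4}\ell^{-1/2}\chi^{-6}c^{-8}$, so the target inequality $\tfrac{\theta}{2\eta}\umom^2\ge 4\ufun/\utime$ reduces to $(c\chi)^6\ge 8$, which holds once $c$ is a large enough absolute constant (recall $\chi\ge1$).

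In case (ii) I first transfer the lower bound from $\grad f(\x_t)$ to $\grad f(\y_t)$: since $\norm{\y_t-\x_t} = (1-\theta)\norm{\v_t}\le\norm{\v_t}<\umom$, $\ell$-smoothness gives $\norm{\grad f(\y_t)}\ge\norm{\grad f(\x_t)}-\ell\norm{\y_t-\x_t}\ge 2\ell\umom-\ell\umom = \ell\umom$. Keeping only the gradient term then gives $E_{t+1}-E_t\le -\tfrac{\eta}{4}\ell^2\umom^2 = \tfrac{1}{16}\,\epsilon^{3/2}\rho^{-1/2}c^{-2}$, and comparing with $\ufun/\utime = \epsilon^{3/2}\rho^{-1/2}\cn^{-1/2}\chi^{-6}c^{-8}$ the required bound again follows for $c$ large (using $\cn\ge1$, $\chi\ge1$).

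There is no genuinely hard step here: the content is entirely in checking that the powers of $\epsilon,\rho,\ell,\cn$ match and that the $c$-dependence is favorable, which is exactly why $\umom$ is defined as $\epsilon\sqrt{\cn}/\ell\cdot c^{-1}$ in~\eqref{eq:parameter} — it is precisely the threshold at which a single AGD step already realizes the target amortized decrease $\ufun/\utime$. The only point requiring mild care is the reduction in case (ii): one must observe that when $\norm{\v_t}$ is small, the gradient magnitude lost in passing from $\x_t$ to $\y_t$ is at most $\ell\umom$, i.e.\ at most half of the assumed $2\ell\umom$, which is what preserves a $\Theta(\ell\umom)$ lower bound on $\norm{\grad f(\y_t)}$.
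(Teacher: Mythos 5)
Your proof is correct and follows the same two-case strategy as the paper: apply Lemma~\ref{lem:energy_nonconvex} (after verifying its preconditions from~\eqref{eq:parameter}), keep the kinetic term when $\norm{\v_t}$ is large, and otherwise transfer the gradient lower bound from $\x_t$ to $\y_t$ via $\ell$-smoothness and keep the gradient term; your explicit power-counting in $\epsilon,\rho,\ell,\cn,\chi,c$ matches the paper's $\Omega(\ufun/\utime\cdot c^6)$ bookkeeping.
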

\begin{proof}
When $\norm{\v_t} \ge \frac{\epsilon \sqrt{\cn}}{10\ell}$, by Lemma \ref{lem:energy_nonconvex}, we have:
\begin{align*}
E_{t+1} -E_t \le -\frac{\theta}{2\eta}\norm{\v_t}^2
\le -\Omega\left(\frac{\ell}{\sqrt{\cn}}\frac{\epsilon^2 \cn}{\ell^2}c^{-2}\right)
= -\Omega\left(\frac{\epsilon^2 \sqrt{\cn}}{2\ell}c^{-2}\right)
\le -\Omega(\frac{\ufun}{\utime} c^6) \le -\frac{4\ufun}{\utime}.
\end{align*}
The last step is by picking $c$ to be a large enough constant.
When $\norm{\v_t} \le \umom$ but $\norm{\grad f(\x_t)} \ge 2\ell\umom$, 
by the gradient Lipschitz assumption, we have:
\begin{equation*}
\norm{\grad f(\y_t)} \ge \norm{\grad f(\x_t)} - (1-\theta) \ell \norm{\v_t} \ge \ell \umom.
\end{equation*}
Similarly, by Lemma \ref{lem:energy_nonconvex}, we have:
\begin{align*}
E_{t+1} -E_t \le -\frac{\eta}{4}\norm{\grad f(\y_t)}^2 \le -\Omega(\frac{\epsilon^2 \cn}{\ell}c^{-2})
\le -\Omega(\frac{\ufun}{\utime} c^6) \le -\frac{4\ufun}{\utime}.
\end{align*}
Again the last step is by picking $c$ to be a large enough constant, which finishes the proof.
\end{proof}

Next, we show that if the initial momentum is small, but the initial gradient on the nonconvex subspace $\S^c$ is large enough, then within $O(\utime)$ steps, the Hamiltonian will decrease by at least $\ufun$.

\begin{lemma}[Formal Version of Lemma \ref{lem:2}]\label{lem:largegrad_nonconvex}
Under the setting of Theorem \ref{thm:main}, if $\norm{\proj_{\S^c}\grad f(\x_{0})} \ge \frac{\epsilon}{2}$, $\norm{\v_0} \le \umom$, $\v_0\trans [\proj_{\S}\trans\hess f(\x_0) \proj_{\S}] \v_0 \le  2 \sqrt{\rho\epsilon}\umom^2 $,
and for $t\in [0, \utime/4]$ only AGD steps are used without NCE or perturbation,
then:
\begin{equation*}
E_{\utime/4} - E_0 \le - \ufun.
\end{equation*}
\end{lemma}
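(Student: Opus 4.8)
### Proof strategy for Lemma~\ref{lem:largegrad_nonconvex}

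The plan is to use the \emph{improve-or-localize} corollary (Corollary~\ref{cor:localball}) as a dichotomy: either the Hamiltonian already drops by $\ufun$ within $\utime/4$ steps (and we are done), or all iterates $\x_0,\dots,\x_{\utime/4}$ stay in a ball of radius $\order{\uspace}$ around $\x_0$, in which case I will exhibit an explicit direction in which the accumulated momentum grows exponentially, forcing a contradiction with localization. Concretely, I would argue by contradiction: assume $E_{\utime/4}-E_0 > -\ufun$. Then by Corollary~\ref{cor:localball}, $\sum_{\tau}\norm{\x_\tau-\x_{\tau-1}}^2 \le \tfrac{2\eta}{\theta}\ufun$, so every iterate satisfies $\norm{\x_\tau-\x_0}\le \uspace = \sqrt{2\eta\utime\ufun/\theta}$. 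This lets me invoke Proposition~\ref{prop:delta} with $R = \order{\uspace}$ to control the deviation terms $\delta_\tau$ between the true dynamics and the quadratic model around $\x_0$.

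The heart of the argument is the matrix recursion of Lemma~\ref{lem:AGD_update_general}, restricted to the nonconvex subspace $\S^c$. There the relevant $2\times 2$ blocks $\A_j$ have eigenvalue $\lambda_j \le \theta^2/[\eta(2-\theta)^2] = \Theta(\sqrt{\rho\epsilon})$; I would first establish (via an auxiliary eigenvalue lemma of the type referenced as Lemma~\ref{lem:aux_eigenvalues}) that $\A_j$ then has a real eigenvalue $\ge 1 + \Omega(\theta) = 1 + \Omega(1/\sqrt{\cn})$ — this is exactly the ``strongly nonconvex direction'' acceleration mechanism described in Section~\ref{sec:imp_local}. Writing the driving term $-\eta(\grad f(\zero)+\delta_\tau)$ and projecting the closed-form solution~\eqref{eq:update_AGD_matrix} onto the top eigenvector of the block associated with the direction carrying most of $\proj_{\S^c}\grad f(\x_0)$, the homogeneous ``constant force'' part contributes a term that, after summing the geometric series $\sum_\tau (1+\Omega(\theta))^{t-\tau}$ over $t=\Theta(\utime/4)=\Theta(\sqrt{\cn}\chi c)$ steps, grows like $(1+\Omega(\theta))^{\Theta(\utime)} \cdot \eta\epsilon/\theta$, which is enormous compared to $\uspace$. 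The small-momentum hypotheses ($\norm{\v_0}\le\umom$ and the quadratic-form bound on $\proj_\S$) are used to show the initial-condition term $\A^t(\x_1,\x_0)\trans$ cannot cancel this growth, and the localization bound from Proposition~\ref{prop:delta}(1) shows the error terms $\sum \A^{t-\tau}(\delta_\tau,0)\trans$ are dominated (each $\norm{\delta_\tau}\le O(\rho\uspace^2)$, and $\rho\uspace^2 \ll \epsilon$ by the choice of $\uspace$ and a large constant $c$). Putting these together gives $\norm{\x_t-\x_0}\gg \uspace$, contradicting localization; hence $E_{\utime/4}-E_0\le -\ufun$.

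I expect the main obstacle to be the bookkeeping in the matrix recursion: controlling $\A_j^{t-\tau}$ simultaneously for all eigenvalues $\lambda_j$ in $\S^c$ (which range over $[-\ell,\Theta(\sqrt{\rho\epsilon})]$, so the blocks have widely varying spectral behavior — real eigenvalues near $-\ell$ versus near the boundary), and in particular showing that the component of $\grad f(\x_0)$ along the \emph{slowest} nonconvex direction is what matters, while the faster ones only help. A secondary subtlety is that the hypothesis bounds the \emph{projected} gradient $\norm{\proj_{\S^c}\grad f(\x_0)}$ and the \emph{projected} momentum quadratic form rather than pointwise eigen-components, so I must pass from the subspace-level bound to a statement about a single dominant eigen-direction — most cleanly done by noting that if $\norm{\proj_{\S^c}\grad f(\x_0)}\ge\epsilon/2$ then at least one eigen-direction in $\S^c$ carries $\Omega(\epsilon/\sqrt{d})$ of it, and the $\chi = \tilde\Theta(\log d)$ factors in $\utime$ are precisely what absorb the resulting $\sqrt{d}$ loss after exponentiation. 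Tracking the constant $c$ through all of these estimates, so that every $\order{c^{-1}}$ term is genuinely negligible, is where most of the care goes.
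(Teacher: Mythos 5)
Your overall scaffolding matches the paper's: proof by contradiction, invoke Corollary~\ref{cor:localball} to localize all iterates in a ball of radius $O(\uspace)$, pass to the quadratic model via Lemma~\ref{lem:AGD_update_general}, and control the error terms $\delta_\tau$ by Proposition~\ref{prop:delta}. But the core growth mechanism you propose is wrong, and the single-direction reduction you use to rescue it does not work.

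The gap is this: $\S^c$ contains not only strongly nonconvex directions ($\lambda_j\le -\sqrt{\rho\epsilon}$) but also ``flat'' directions with $\lambda_j$ near zero and even small positive $\lambda_j$ up to $\theta^2/[\eta(2-\theta)^2]$. For those, the top eigenvalue of $\A_j$ is \emph{not} $\ge 1+\Omega(\theta)$ — it can be arbitrarily close to (or below) $1$ — so your claimed exponential growth $(1+\Omega(\theta))^{\Theta(\utime)}$ simply does not occur. You then try to reduce to a single eigen-direction carrying $\Omega(\epsilon/\sqrt d)$ of $\proj_{\S^c}\grad f(\x_0)$ and absorb the $\sqrt d$ loss with the $\chi=\tilde\Theta(\log d)$ factors in $\utime$ ``after exponentiation.'' But if that dominant direction is flat, there is no exponentiation: the relevant quantity $\sum_{\tau} a_\tau^{(j)}$ only accumulates polynomially (like $t^2$), and a polylogarithmic slack in $\utime$ cannot recover a polynomial $\sqrt d$ loss. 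The paper's proof avoids this entirely: Lemma~\ref{lem:aux_nonconvex_inequal} gives a \emph{uniform} lower bound $\sum_{\tau=0}^{t-1} a_\tau^{(j)} \ge \Omega(1/\theta^2)$ valid for \emph{every} $j\in\S^c$ once $t\ge\Omega(1/\theta)$ (for the flat directions this comes from the quadratic accumulation, not exponential growth). This lets one lower bound the full vector norm $\norm{\proj_{\S^c}\x_t}\ge \Omega(\eta/\theta^2)\left[\norm{\proj_{\S^c}\grad f(\zero)}-\norm{\proj_{\S^c}\tilde\delta}-\norm{\proj_{\S^c}\tilde\v}\right]$ without ever picking a dominant coordinate, and therefore without any $\sqrt d$ penalty. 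Separately, your handling of the $\tilde\v$ term is underspecified: the paper needs the fact that NCE was not triggered at step $0$ (giving $\v_0\trans\hess f(\zeta_0)\v_0\ge -\tfrac{\gamma}{2}\norm{\v_0}^2$, hence $\v_0\trans\H\v_0\ge -2\sqrt{\rho\epsilon}\norm{\v_0}^2$) combined with the hypothesis $\v_0\trans[\proj_\S\trans\H\proj_\S]\v_0\le 2\sqrt{\rho\epsilon}\umom^2$ to bound $\sum_{j\in\S^c}\max\{\eta|\lambda_j|,\theta^2\}[v_0^{(j)}]^2$; simply saying the initial-condition term ``cannot cancel this growth'' misses this.
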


\begin{proof}
The high-level plan is a proof by contradiction. We first assume that the
energy doesn't decrease very much; that is, $E_{\utime/4} - E_0 \ge - \ufun$ for a small enough constant $\mu$. By Corollary \ref{cor:localball} and the Cauchy-Swartz inequality, this immediately implies that for all $t \le \utime$, we have
$\norm{\x_t - \x_0} \le \sqrt{2\eta \utime \ufun/(4\theta)} = \uspace/2$. In
the rest of the proof we will show that this leads to a contradiction.



Given initial $\x_0$ and $\v_0$, we define $\x_{-1} = \x_0 - \v_0$.
Without loss of generality, set $\x_{0}$ as the origin $\zero$.  
Using the notation and results of Lemma \ref{lem:AGD_update_general}, 
we have the following update equation:
\begin{align*}
\pmat{\x_{t} \\ \x_{t-1}} =& \A^{t}\pmat{0 \\ -\v_0} 
-\eta\sum_{\tau = 0}^{t-1} \A^{t-1-\tau}\pmat{\grad f(0) + \delta_{\tau} \\ 0}.
\end{align*}
Consider the $j$-th eigen-direction of $\H = \hess f(\zero)$, 
recall the definition of the $2\times 2$ block matrix $\A_j$ as 
in \eqref{eq:definition_Aj}, and denote
\begin{equation*}
(a^{(j)}_{t}, ~-b^{(j)}_t) =\pmat{1 & 0 }\A_j^{t}.
\end{equation*}
Then we have for the $j$-th eigen-direction:
\begin{align*}
x_t^{(j)}
= &b_t^{(j)} v_0^{(j)} 
-\eta\sum_{\tau = 0}^{t-1} a_{t-1-\tau}^{(j)}
(\grad f(0)^{(j)} + \delta_\tau^{(j)})\\
= &-\eta\left[\sum_{\tau = 0}^{t-1} a_\tau^{(j)}\right]
\left(\grad f(0)^{(j)} + \sum_{\tau = 0}^{t-1} p^{(j)}_\tau \delta_\tau^{(j)}
+ q^{(j)}_t  v_0^{(j)} \right),
\end{align*}
where 
\begin{align*}
p^{(j)}_\tau 
= \frac{a_{t-1-\tau}^{(j)}}{\sum_{\tau = 0}^{t-1} a_\tau^{(j)}}
\quad \text{~and~} \quad
q^{(j)}_t = - \frac{b_t^{(j)}}{\eta\sum_{\tau = 0}^{t-1} a_\tau^{(j)}}.
\end{align*}
Clearly $\sum_{\tau=0}^{t-1} p^{(j)}_\tau  = 1$. For $j \in \S^c$, by Lemma \ref{lem:aux_nonconvex_inequal}, we know
$\sum_{\tau = 0}^{t-1} a_\tau^{(j)} \ge \Omega(\frac{1}{\theta^2})$. 
We can thus further write the above equation as:
\begin{equation*}
x_t^{(j)} = -\eta\left[\sum_{\tau = 0}^{t-1} a_\tau^{(j)}\right]
\left(\grad f(0)^{(j)} + \tilde{\delta}^{(j)} + \tilde{v}^{(j)} \right),
\end{equation*}
where $\tilde{\delta}^{(j)} = \sum_{\tau = 0}^{t-1} p^{(j)}_\tau \delta_\tau^{(j)}$
and $\tilde{v}^{(j)} = q^{(j)}_t  v_0^{(j)}$, coming from the Hessian Lipschitz 
assumption and the initial momentum respectively.
For the remaining part, we would like to bound $\|\proj_{\S^c} \tilde{\delta}\|$ and $\norm{\proj_{\S^c} \tilde{\v}}$, and show that both of them are small compared to $\norm{\proj_{\S^c}\grad f(\x_{0})}$.

~

First, for the $\|\proj_{\S^c} \tilde{\delta}\|$ term, we know by definition of the subspace $\S^c$, and given that both eigenvalues of $\A_j$ are real and positive according to Lemma \ref{lem:aux_eigenvalues}, such that $p^{(j)}_\tau$ is positive by Lemma \ref{lem:aux_matrix_form}, we have for any $j\in \S^c$:
\begin{align*}
|\tilde{\delta}^{(j)}| = &|\sum_{\tau = 0}^{t-1} p^{(j)}_\tau \delta_\tau^{(j)}|
\le \sum_{\tau = 0}^{t-1} p^{(j)}_\tau (|\delta_0^{(j)}| + |\delta_\tau^{(j)} - \delta_0^{(j)} |)\\
\le &\left[\sum_{\tau = 0}^{t-1} p^{(j)}_\tau \right]\left(|\delta_0^{(j)}| + \sum_{\tau = 1}^{t-1}|\delta_\tau^{(j)} - \delta_{\tau-1}^{(j)} |\right)
\le |\delta_0^{(j)}| + \sum_{\tau = 1}^{t-1}|\delta_\tau^{(j)} - \delta_{\tau-1}^{(j)} |.
\end{align*}
By the Cauchy-Swartz inequality, this gives:
\begin{align*}
\norm{\proj_{\S^c} \tilde{\delta}}^2
=& \sum_{j\in \S^c} |\tilde{\delta}^{(j)}|^2 \le \sum_{j\in \S^c} (|\delta_0^{(j)}| + \sum_{\tau = 1}^{t-1}|\delta_\tau^{(j)} - \delta_{\tau-1}^{(j)} |)^2
\le 2\left[\sum_{j\in \S^c}|\delta_0^{(j)}|^2 + \sum_{j\in \S^c} (\sum_{\tau = 1}^{t-1}|\delta_\tau^{(j)} - \delta_{\tau-1}^{(j)} |)^2\right] \\
\le& 2\left[\sum_{j\in \S^c}|\delta_0^{(j)}|^2 + t\sum_{j\in \S^c} \sum_{\tau = 1}^{t-1}|\delta_\tau^{(j)} - \delta_{\tau-1}^{(j)} |^2 \right] 
\le 2\norm{\delta_0}^2 + 2t\sum_{\tau = 1}^{t-1}\norm{\delta_\tau- \delta_{\tau-1}}^2.
\end{align*}
Recall that for $t \le \utime$, we have $\norm{\x_t} \le \uspace/2$.
By Proposition \ref{prop:delta}, we know:
$\norm{\delta_0} \le O( \rho \uspace^2)$, 
and by Corollary \ref{cor:localball} and Proposition \ref{prop:delta}:
\begin{align*}
t\sum_{\tau = 1}^{t-1}\norm{\delta_\tau- \delta_{\tau-1}}^2
\le O(\rho^2 \uspace^2) t\sum_{\tau = 1}^{t-1}\norm{\x_\tau- \x_{\tau-1}}^2
\le O(\rho^2 \uspace^4).
\end{align*}
This gives $\|\proj_{\S^c} \tilde{\delta}\| \le O( \rho \uspace^2) \le O(\epsilon \cdot c^{-6}) \le \epsilon/10$.

~

Next we consider the $\norm{\proj_{\S^c} \tilde{\v}}$ term. 
By Lemma \ref{lem:aux_nonconvex_inequal}, we have
\begin{align*}
 - \eta q_t^{(j)}  = \frac{b_t}{\sum_{\tau = 0}^{t-1} a_\tau} \le O(1)\max\{\theta, \sqrt{\eta|\lambda_j|}\}.
\end{align*}
This gives:
\begin{equation}\label{eq:proof_large_grad_mom}
\norm{\proj_{\S^c} \tilde{\v}}^2  = \sum_{j\in\S^c} [q^{(j)}_t  v_0^{(j)}]^2 
\le O(1)\sum_{j\in\S^c}  \frac{\max\{ \eta|\lambda_j|, \theta^2\}}{\eta^2}  [v_0^{(j)}]^2.
\end{equation}
Recall that we have assumed by way of contradiction that $E_{\utime/4} - E_0 \le - \ufun$.
By the precondition that NCE is not used at $t=0$, due to the certificate \eqref{eq:certificate}, we have:
\begin{equation*}
\frac{1}{2} \v_0\trans 
\hess f(\zeta_0) \v_0 \ge - \frac{\gamma}{2} \norm{\v_0}^2 = -\frac{\sqrt{\rho\epsilon}}{8} \norm{\v_0}^2,
\end{equation*}
where $\zeta_0 = \phi\x_0 + (1-\phi)\y_0$ and $\phi \in [0, 1]$. Noting that we fix $\x_0$ as the origin $\zero$, by the Hessian Lipschitz property, it is easy to show that
$\norm{\hess f(\zeta_0) - \H} \le \rho\norm{\y_0} \le \rho \norm{\v_0} \le \rho \umom \le \sqrt{\rho\epsilon}$. This gives:
\begin{equation*}
\v_0 \H \v_0 \ge -2\sqrt{\rho\epsilon} \norm{\v_0}^2.
\end{equation*}
Again letting $\lambda_j$ denote the eigenvalues of $\H$, rearranging the above sum give:
\begin{align*}
\sum_{j:\lambda_j\le 0} |\lambda_j|[v_0^{(j)}]^2
 \le& O(\sqrt{\rho\epsilon})\norm{\v_0}^2 + \sum_{j:\lambda_j> 0} \lambda_j[v_0^{(j)}]^2 \\
 \le&  O(\sqrt{\rho\epsilon})\norm{\v_0}^2 + \sum_{j:\lambda_j> \theta^2/\eta(2-\theta)^2} \lambda_j[v_0^{(j)}]^2
 \le O(\sqrt{\rho\epsilon})\norm{\v_0}^2 + \v_0\trans [\proj_\S\trans \H \proj_\S] \v_0.
\end{align*}
The second inequality uses the fact that $\theta^2/\eta(2-\theta)^2 \le O(\sqrt{\rho\epsilon})$.
Substituting into \eqref{eq:proof_large_grad_mom} gives:
\begin{equation*}
\norm{\proj_{\S^c} \tilde{\v}}^2  \le
O(\frac{1}{\eta})\left[\sqrt{\rho\epsilon}\norm{\v_0}^2 + \v_0\trans [\proj_\S\trans \H \proj_\S] \v_0 \right]
\le O(\ell\sqrt{\rho\epsilon}\umom^2) = O(\epsilon^2 c^{-2}) \le \epsilon^2/100.
\end{equation*}
Finally, putting all pieces together, we have:
\begin{align*}
\norm{\x_t} \ge& \norm{\proj_{\S^c}\x_t}  \ge \eta \left[\min_{j\in\S^c}\sum_{\tau = 0}^{t-1} a_\tau^{(j)}\right]
\norm{\proj_{\S^c} (\grad f(0) + \tilde{\delta} + \tilde{\v})}\\
\ge& \Omega(\frac{\eta}{\theta^2})\left[\norm{\proj_{\S^c} \grad f(0)} - \norm{\proj_{\S^c}\tilde{\delta}} - \norm{\proj_{\S^c}\tilde{\v})}\right]
\ge \Omega(\frac{\eta\epsilon}{\theta^2}) \ge \Omega(\uspace c^3) \ge \uspace
\end{align*}
which contradicts the fact $\norm{\x_t}$ that remains inside the ball around $\zero$ with radius $\uspace/2$.
\end{proof}

The next lemma shows that if the initial momentum and gradient are 
reasonably small, and the Hamitonian does not have sufficient decrease 
over the next $\utime$ iterations, then both the gradient and momentum 
of the strongly convex component $\S$ will vanish in $\utime/4$ iterations.

\begin{lemma}[Formal Version of Lemma \ref{lem:1}]\label{lem:largegrad_convex}
Under the setting of Theorem \ref{thm:main}, suppose $\norm{\v_0}\le \umom$ and $\norm{\grad f(\x_0)} \le 2\ell\umom$, $E_{\utime/2} - E_0 \ge - \ufun$,
and for $t\in [0, \utime/2]$ only AGD steps are used, without NCE or perturbation.
Then $\forall \; t\in [\utime/4, \utime/2]$:
    \begin{equation*}
    \norm{\proj_\S\grad f(\x_{t})} \le \frac{\epsilon}{2}
    \text{~~and~~}
    \v_t\trans [\proj_\S\trans \hess f(\x_0) \proj_\S] \v_t \le \sqrt{\rho\epsilon}\umom^2.
    \end{equation*}
\end{lemma}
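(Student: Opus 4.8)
The plan is to re-run the localization-plus-quadratic-approximation argument already used for the complementary subspace in Lemma~\ref{lem:largegrad_nonconvex}, but now on the strongly convex subspace $\S$, where the decisive new feature is that AGD converges \emph{geometrically fast}. First I would combine the hypothesis $E_{\utime/2} - E_0 \ge -\ufun$ with Corollary~\ref{cor:localball} and Cauchy--Schwarz to get $\norm{\x_t - \x_0} \le \uspace$ for every $t \in [0,\utime/2]$, together with $\sum_{\tau=1}^{\utime/2}\norm{\x_\tau - \x_{\tau-1}}^2 \le \uspace^2/\utime$. Fixing $\x_0$ as the origin $\zero$ and setting $\H = \hess f(\x_0)$, Lemma~\ref{lem:AGD_update_general} writes each eigen-coordinate $x_t^{(j)}$ of the iterate as the linear response of the $2\times2$ block $\A_j$ to the constant drive $\grad f(\zero)^{(j)}$, to the initial momentum $v_0^{(j)}$, and to the approximation-error sequence $\delta_\tau^{(j)}$; by Proposition~\ref{prop:delta} and the localization just obtained, $\norm{\delta_\tau} = O(\rho\uspace^2)$ and $\sum_\tau \norm{\delta_\tau - \delta_{\tau-1}}^2 = O(\rho^2\uspace^4/\utime)$.

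For the gradient bound I would work coordinate-wise over $j \in \S$ (so $\lambda_j \in (\theta^2/[\eta(2-\theta)^2], \ell]$). By the spectral facts about $\A_j$ on this range (Lemma~\ref{lem:aux_eigenvalues}, Lemma~\ref{lem:aux_matrix_form} and companions), both eigenvalues of $\A_j$ are complex with modulus at most $1 - \Omega(\theta)$, so the entries of $\A_j^s$ are bounded by $\poly(s)\,(1-\Omega(\theta))^s$. The gradient of the local quadratic approximation in direction $j$ is $\grad f(\zero)^{(j)} + \lambda_j x_t^{(j)} = \lambda_j(x_t^{(j)} + \grad f(\zero)^{(j)}/\lambda_j)$, i.e. $\lambda_j$ times the displacement of $x_t^{(j)}$ from its fixed point; since $x_0^{(j)}=0$ this displacement starts at $\grad f(\zero)^{(j)}/\lambda_j$ and contracts at the above rate, up to a forced term carrying $v_0^{(j)}$ and an error term carrying the $\delta_\tau^{(j)}$'s. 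Summing over $j\in\S$ and using $\norm{\grad f(\zero)}\le 2\ell\umom$, $\norm{\v_0}\le\umom$, the homogeneous-plus-forced contribution is at most $\poly(\utime)\,(1-\Omega(\theta))^{t}\cdot O(\ell\umom/\theta)$, while an Abel-summation/Cauchy--Schwarz manipulation of the $\delta$ part, analogous to the bound on $\norm{\proj_{\S^c}\tilde\delta}$ in Lemma~\ref{lem:largegrad_nonconvex}, contributes only $O(\rho\uspace^2) = O(\epsilon c^{-6})$. Adding the true-minus-quadratic gradient error $\norm{\grad f(\x_t) - (\grad f(\zero) + \H(\x_t-\zero))} \le \tfrac{\rho}{2}\norm{\x_t-\zero}^2 = O(\rho\uspace^2)$, we get $\norm{\proj_\S\grad f(\x_t)} \le \epsilon/2$ for all $t \ge \utime/4$, because $(1-\Omega(\theta))^{\utime/8}\poly(\utime) \le e^{-\Omega(\chi c)}$ swamps every $\poly(\cn)$ factor once $\chi = \max\{1,\log\frac{d\ell\Delta_f}{\rho\epsilon\delta}\}$ and $c$ is a large enough absolute constant.

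The momentum bound is obtained the same way with $v_t^{(j)} = x_t^{(j)} - x_{t-1}^{(j)}$ playing the role of the fixed-point displacement: in the $\S$-directions the velocity also contracts at rate $1 - \Omega(\theta)$ up to polynomial factors, starting from a scale $O(\sqrt\cn\,\umom)$ set by $\norm{\v_0}$ and $\norm{\grad f(\zero)}$, so $\v_t\trans[\proj_\S\trans\H\proj_\S]\v_t \le \ell\norm{\proj_\S\v_t}^2$ falls below $\sqrt{\rho\epsilon}\,\umom^2$ within $\utime/4$ iterations, again by the same $\chi,c$ accounting; and all the bounds persist for every $t\in[\utime/4,\utime/2]$ because $(1-\Omega(\theta))^{t}$ is decreasing in $t$ and the forced (fixed-point and $\delta$-telescoped) terms are $t$-uniform.

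I expect the main obstacle to be the sharp control of the polynomial-in-$s$ prefactor in $\A_j^s$ near the lower edge of $\S$, where the two eigenvalues of $\A_j$ nearly coincide and a naive diagonalization bound would pick up a $1/|\mathrm{Im}\,\mu_j|$ blow-up; one must instead exploit the explicit $2\times2$ form to show the clean bound $\poly(s)(1-\Omega(\theta))^s$, and then make the elementary but somewhat delicate check that $\utime = \sqrt\cn\,\chi c$ with the stated $\chi$ and $c$ large enough makes the geometric decay over $\utime/8$ steps beat every $\poly(\cn,\chi,c)$ factor that appears when the generic bounds are converted to the $\epsilon$- and $\umom$-scales required in the statement.
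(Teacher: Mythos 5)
Your overall strategy matches the paper's proof: localize via Corollary~\ref{cor:localball}, pass to the quadratic model through Lemma~\ref{lem:AGD_update_general}, control the Taylor error $\delta_\tau$ via Proposition~\ref{prop:delta}, and use the geometric contraction $(t+1)(1-\theta)^{t/2}$ of $\A_j^t$ on $\S$ (Lemma~\ref{lem:aux_convex_entry}) together with an Abel-summation/oscillation bound on $\sum_\tau a_\tau^{(j)}\delta_\tau^{(j)}$ (Lemma~\ref{lem:aux_convex_inequal}) to kill the forced-response terms. You also correctly flag that the ``$1/\Im\mu$'' blow-up near the lower edge of $\S$ must be avoided by using the explicit $2\times2$ form, and that one has to make sure $(1-\Theta(\theta))^{\utime/4}$ beats all polynomial-in-$\cn$ prefactors once $c$ is large --- that accounting does go through for the homogeneous and constant-force contributions.

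There is, however, a real gap in your momentum bound. You propose to discard the Hessian weighting via $\v_t^\top[\proj_\S^\top\H\proj_\S]\v_t \le \ell\,\norm{\proj_\S\v_t}^2$ and then bound $\norm{\proj_\S\v_t}$. For the contribution of the accumulated Taylor error (what the paper calls $\m_3 = \eta\,[1,-1]\sum_{\tau<t}\A^{t-1-\tau}[\delta_\tau;0]$) this is too lossy, and no amount of ``$\chi,c$ accounting'' fixes it because $\m_3$ does \emph{not} decay geometrically --- it is exactly the ``$t$-uniform $\delta$-telescoped'' piece you describe. Concretely, Lemma~\ref{lem:aux_convex_inequal} gives $|\m_3^{(j)}| \lesssim \sqrt{\eta/\lambda_j}\,\bigl(|\delta_0^{(j)}| + \sum_\tau |\delta_\tau^{(j)}-\delta_{\tau-1}^{(j)}|\bigr)$, so that $\lambda_j|\m_3^{(j)}|^2$ has the $\lambda_j$'s cancel and sums to $O(\eta\rho^2\uspace^4)$, which is exactly at the required scale $\sqrt{\rho\epsilon}\,\umom^2$ up to $c$-powers. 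If you instead multiply by $\ell$, you get $\ell|\m_3^{(j)}|^2 \lesssim (\ell/\lambda_j)\,\eta(\cdots)^2$, and $\ell/\lambda_j$ can be as large as $\ell/\Theta(\sqrt{\rho\epsilon}) = \cn$ at the bottom of $\S$. The resulting bound $O(\cn\,\eta\rho^2\uspace^4)$ exceeds $\sqrt{\rho\epsilon}\,\umom^2$ by a factor $\Theta(\cn\chi^{-8}c^{-10})$, which is not controlled for large $\cn$. The fix is the one the paper uses: keep $[\proj_\S^\top\H\proj_\S]^{1/2}$ attached to $\v_t$ and bound $\norm{[\proj_\S^\top\H\proj_\S]^{1/2}\m_3}^2$ directly, letting the $\lambda_j^{1/2}$ weight absorb the $\lambda_j^{-1/2}$ coming from Lemma~\ref{lem:aux_convex_inequal}. (The same weighting is what makes the $\m_2$ bound clean, though there the geometric decay would indeed rescue the cruder estimate; for $\m_3$ it cannot.)
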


\begin{proof}
Since $E_{\utime} - E_0 \ge - \ufun$, by Corollary \ref{cor:localball} and 
the Cauchy-Swartz inequality, we see that for all $t \le \utime$ we have 
$\norm{\x_t - \x_0} \le \sqrt{2\eta \utime \ufun/\theta} = \uspace$.

Given initial $\x_0$ and $\v_0$, we define $\x_{-1} = \x_0 - \v_0$.
Without loss of generality, setting $\x_{0}$ as the origin $\zero$, 
by the notation and results of Lemma \ref{lem:AGD_update_general}, 
we have the update equation:
\begin{align} \label{eq:update_AGD_matrix_strongly_convex}
\pmat{\x_{t} \\ \x_{t-1}}
=& \A^{t}\pmat{0 \\ -\v_0} 
-\eta\sum_{\tau = 0}^{t-1} \A^{t-1-\tau}\pmat{\grad f(0) + \delta_{\tau} \\ 0}.
\end{align}

First we prove the upper bound on the gradient: $\forall \; t\in [\utime/4, \utime]$, we have $\norm{\proj_\S\grad f(\x_{t})} \le \frac{\epsilon}{2}$. 
Let $\Delta_t = \int_{0}^1 (\hess f(\phi \x_t) - \H)\mathrm{d}\phi$. According to \eqref{eq:update_AGD_matrix_strongly_convex}, we have:
\begin{align*}
\grad f(\x_t) = &
\grad f(0) + (\H + \Delta_t)\x_t \\
=& \underbrace{\left(\I - \eta\H \pmat{\I & 0}\sum_{\tau = 0}^{t-1} \A^{t-1-\tau}\pmat{\I \\ 0}\right) \grad f(0)}_{\g_1}
+ \underbrace{\H \pmat{\I & 0}\A^t\pmat{0 \\ -\v_0}}_{\g_2} \\
&- \underbrace{\eta\H \pmat{\I & 0}\sum_{\tau = 0}^{t-1} \A^{t-1-\tau}\pmat{\delta_t \\ 0}}_{\g_3}
+ \underbrace{\Delta_t \x_t}_{\g_4}.
\end{align*}
We will upper bound four terms $\g_1, \g_2, \g_3, \g_4$ separately. 
Clearly, for the last term $\g_4$, we have:
$$\norm{\g_4} \le \rho\norm{\x_t}^2 \le O(\rho \uspace^2) = O(\epsilon c^{-6}) \le \epsilon/8.$$
Next, we show that the first two terms $\g_1, \g_2$ become very small for $t\in [\utime/4, \utime]$.
Consider coordinate $j \in \S$ and the $2\times 2$ block matrix $\A_j$.  By Lemma \ref{lem:aux_matrix_equality} we have:
\begin{align*}
1 - \eta\lambda_j \pmat{1 & 0}\sum_{\tau = 0}^{t-1} \A_j^{t-1-\tau}\pmat{1 \\ 0}
 = \pmat{1 & 0}  \A_j^t \pmat{1 \\ 1}.
\end{align*}
Denote:
\begin{equation*}
(a^{(j)}_{t}, ~-b^{(j)}_t) =\pmat{1 & 0 }\A_j^{t}.
\end{equation*}
By Lemma \ref{lem:aux_convex_entry}, we know:
\begin{equation*}
\max_{j\in \S} \left\{ |a^{(j)}_t|
, |b^{(j)}_t| \right\} \le (t+1) (1-\theta)^{\frac{t}{2}} .
\end{equation*}
This immediately gives when $t\ge \utime/4 = \Omega(\frac{c}{\theta}\log\frac{1}{\theta})$ for $c$ sufficiently large:
\begin{align*}
\norm{\proj_\S \g_1}^2 =& \sum_{j\in \S} |(a^{(j)}_{t} - b^{(j)}_t) \grad f(0)^{(j)}|^2
\le (t+1)^2(1-\theta)^t \norm{\grad f(0)}^2 \le \epsilon^2/64 \\
\norm{\proj_\S \g_2}^2 = & \sum_{j\in \S} |\lambda_j  b^{(j)}_t \v_0^{(j)}|^2
\le \ell^2 (t+1)^2 (1-\theta)^t \norm{\v_0}^2 \le \epsilon^2/64.
\end{align*}
Finally, for $\g_3$, by Lemma \ref{lem:aux_convex_inequal}, for all $j\in \S$, we have
\begin{equation*}
|\g_3^{(j)}| = \left|\eta \lambda_j \sum_{\tau = 0}^{t-1}a^{(j)}_\tau \delta_{t-1-\tau}\right|
\le |\delta^{(j)}_{t-1}|+  \sum_{\tau=1}^{t-1} |\delta^{(j)}_\tau - \delta^{(j)}_{\tau-1}|.
\end{equation*}
By Proposition \ref{prop:delta}, this gives:
\begin{equation*}
\norm{\proj_\S \g_3}^2 \le 
2\norm{\delta_{t-1}}^2 + 2t\sum_{\tau = 1}^{t-1}\norm{\delta_\tau- \delta_{\tau-1}}^2 
\le O(\rho^2\uspace^4) \le O(\epsilon^2 \cdot c^{-12}) \le \epsilon^2/64.
\end{equation*}
In sum, this gives for any fixed $t \in [\utime/4, \utime]$:
\begin{equation*}
\norm{\proj_\S \grad f(\x_t)}\le \norm{\proj_\S \g_1} + \norm{\proj_\S \g_2} + \norm{\proj_\S \g_3} + \norm{\g_4} \le \frac{\epsilon}{2}.
\end{equation*}

We now provide a similar argument to prove the upper bound for the momentum. 
That is, $\forall \; t\in [\utime/4, \utime]$, we show
$\v_t\trans [\proj_\S\trans \hess f(\x_0) \proj_\S] \v_t \le \sqrt{\rho\epsilon}\umom^2$. According to \eqref{eq:update_AGD_matrix_strongly_convex}, 
we have:
\begin{align*}
\v_t = \pmat{1 & -1}\pmat{\x_{t} \\ \x_{t-1}}
=& \underbrace{\pmat{1 & -1}\A^{t}\pmat{0 \\ -\v_0} }_{\m_1}
-\underbrace{\eta\pmat{1 & -1}\sum_{\tau = 0}^{t-1} \A^{t-1-\tau}\pmat{\grad f(0)  \\ 0}}_{\m_2} \\
&- \underbrace{\eta\pmat{1 & -1}\sum_{\tau = 0}^{t-1} \A^{t-1-\tau}\pmat{\delta_{\tau} \\ 0}}_{\m_3}.
\end{align*}
Consider the $j$-th eigendirection, so that $j \in \S$, and recall the $2\times 2$ block matrix $\A_j$. Denoting
\begin{equation*}
(a^{(j)}_{t}, ~-b^{(j)}_t) =\pmat{1 & 0 }\A_j^{t},
\end{equation*}
by Lemma \ref{lem:aux_matrix_form} and \ref{lem:aux_convex_entry}, we have for $t\ge \utime/4 = \Omega(\frac{c}{\theta}\log\frac{1}{\theta})$ with $c$ sufficiently large:
\begin{equation*}
\norm{[\proj_\S\trans \hess f(\x_0) \proj_\S]^{\frac{1}{2}}\m_1}^2 = \sum_{j \in \S} |\lambda_j^{\frac{1}{2}} (b^{(j)}_t -b^{(j)}_{t-1})  \v_0^{(j)}|^2
\le \ell(t+1)^2 (1-\theta)^t \norm{\v_0}^2 \le O(\frac{\epsilon^2}{\ell}c^{-3}) \le \frac{1}{3}\sqrt{\rho\epsilon}\umom^2.
\end{equation*}
On the other hand, by Lemma \ref{lem:aux_matrix_equality}, we have:
\begin{align*}
\abs{\eta \lambda_j \pmat{1 & -1}\sum_{\tau = 0}^{t-1} \A_j^{t-1-\tau}\pmat{1 \\0}}
= \abs{\eta \lambda_j \pmat{1 & 0}\sum_{\tau = 0}^{t-1} (\A_j^{t-1-\tau}- \A_j^{t-2-\tau})\pmat{1 \\0}}
= \abs{\pmat{1 & 0}  (\A_j^t -\A_j^{t-1})  \pmat{1 \\ 1}}.
\end{align*}
This gives, for $t\ge \utime/4 = \Omega(\frac{c}{\theta}\log\frac{1}{\theta})$, and
for $c$ sufficiently large:
\begin{align*}
\norm{[\proj_\S\trans \hess f(\x_0) \proj_\S]^{\frac{1}{2}}\m_2}^2 
=& \sum_{j \in \S} |\lambda_j^{-\frac{1}{2}} (a^{(j)}_t -a^{(j)}_{t-1} - b^{(j)}_t +b^{(j)}_{t-1} )  \grad f(0)^{(j)}|^2 \\
\le& O(\frac{1}{\sqrt{\rho\epsilon}})(t+1)^2(1-\theta)^t \norm{\grad f(0)}^2 \le O(\frac{\epsilon^2}{\ell}c^{-3}) \le \frac{1}{3}\sqrt{\rho\epsilon}\umom^2.
\end{align*}
Finally, for any $j \in \S$, by Lemma \ref{lem:aux_convex_inequal}, we have:
\begin{equation*}
|(\H^{\frac{1}{2}}\m_3)^{(j)}| = |\eta\lambda_j^{\frac{1}{2}}\sum_{\tau = 0}^{t-1}(a_\tau - a_{\tau -1}) \delta_{t-1-\tau}|
\le \sqrt{\eta} \left[\sum|\delta^{(j)}_{t-1}|+  \sum_{\tau=1}^{t-1} |\delta^{(j)}_\tau - \delta^{(j)}_{\tau-1}|\right].
\end{equation*}
Again by Proposition \ref{prop:delta}:
\begin{align*}
\norm{[\proj_\S\trans \hess f(\x_0) \proj_\S]^{\frac{1}{2}}\m_3}^2 
= \eta \left[2\norm{\delta_{t-1}}^2 + 2t\sum_{\tau = 1}^{t-1}\norm{\delta_\tau- \delta_{\tau-1}}^2 \right]
\le O(\eta\rho^2\uspace^4) \le O(\frac{\epsilon^2}{\ell}c^{-6}) \le \frac{1}{3}\sqrt{\rho\epsilon}\umom^2.
\end{align*}
Putting everything together, we have:
\begin{align*}
\v_t\trans [\proj_\S\trans \hess f(\x_0) \proj_\S] \v_t \le& 
\norm{[\proj_\S\trans \hess f(\x_0) \proj_\S]^{\frac{1}{2}}\m_1}^2
+ \norm{[\proj_\S\trans \hess f(\x_0) \proj_\S]^{\frac{1}{2}}\m_2}^2\\
&+ \norm{[\proj_\S\trans \hess f(\x_0) \proj_\S]^{\frac{1}{2}}\m_3}^2
\le \sqrt{\rho\epsilon}\umom^2.
\end{align*}
This finishes the proof.
\end{proof}

Finally, we are ready to prove the main lemma of this subsection (Lemma \ref{lem:largeGrad}), which claims that if gradients in $\utime$ iterations are always large, then the Hamiltonian will decrease sufficiently within a small number of steps.
\begingroup
\def\thetheorem{\ref{lem:largeGrad}}
\begin{lemma}[Large gradient]
Consider the setting of Theorem~\ref{thm:main}.
If $\norm{\grad f(\x_\tau)} \ge \epsilon$ for all $ \tau \in [0, \utime]$, then by running Algorithm \ref{algo:PAGD} we have $E_{\utime} - E_0 \le -\ufun$.
\end{lemma}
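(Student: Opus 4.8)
The plan is to argue by contradiction through a nested case analysis, using Lemmas~\ref{lem:largegrad_convex} and~\ref{lem:largegrad_nonconvex} as the two halves of the mechanism and Lemma~\ref{lem:largegrad_momentum} to dispose of large-momentum steps. First note that since $\norm{\grad f(\x_\tau)}\ge\epsilon$ on $[0,\utime]$, no perturbation is ever added, so every step is an AGD step or an NCE step. If NCE is triggered at even one step in $[0,\utime)$, then Lemma~\ref{lem:NCE_decrease} alone gives a drop of $2\ufun$ there, and Lemma~\ref{lem:energy_nonconvex} shows all remaining AGD steps do not increase $E_t$, whence $E_{\utime}-E_0\le -2\ufun\le -\ufun$. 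So we may assume all of $[0,\utime]$ consists of AGD steps, and we suppose toward a contradiction that $E_{\utime}-E_0>-\ufun$. By Lemma~\ref{lem:energy_nonconvex} the sequence $E_t$ is then non-increasing with $E_t-E_0\in(-\ufun,0]$ for all $t\le\utime$, and Corollary~\ref{cor:localball} with Cauchy--Schwarz confines the iterates: $\norm{\x_t-\x_0}\le\uspace$ for all $t\le\utime$.

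Next I would locate a ``clean'' starting point in $[0,\utime/4]$. If at every $t\in[0,\utime/4]$ we had $\norm{\v_t}\ge\umom$ or $\norm{\grad f(\x_t)}\ge 2\ell\umom$, then summing Lemma~\ref{lem:largegrad_momentum} over these $\utime/4$ steps would give $E_{\utime/4}-E_0\le -\ufun$, contradicting $E_{\utime}-E_0>-\ufun$ by monotonicity. Hence there is some $t_0\le\utime/4$ with $\norm{\v_{t_0}}\le\umom$ and $\norm{\grad f(\x_{t_0})}\le 2\ell\umom$. Because $t_0+\utime/2\le\utime$, monotonicity also gives $E_{t_0+\utime/2}-E_{t_0}\ge -\ufun$, so Lemma~\ref{lem:largegrad_convex} applies on the window starting at $t_0$ and yields, for every $t\in[t_0+\utime/4,t_0+\utime/2]$, that $\norm{\proj_\S\grad f(\x_t)}\le\epsilon/2$ and $\v_t\trans[\proj_\S\trans\hess f(\x_{t_0})\proj_\S]\v_t\le\sqrt{\rho\epsilon}\umom^2$.

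Finally I would rerun the momentum dichotomy inside $[t_0+\utime/4,t_0+\utime/2]$. If $\norm{\v_t}\ge\umom$ throughout that window, Lemma~\ref{lem:largegrad_momentum} summed over these $\utime/4$ steps again forces a drop of $\ufun$, and by monotonicity $E_{\utime}-E_0\le E_{t_0+\utime/2}-E_{t_0+\utime/4}\le -\ufun$, a contradiction. Otherwise pick $t_1\in[t_0+\utime/4,t_0+\utime/2]$ with $\norm{\v_{t_1}}\le\umom$. At $t_1$ the hypotheses of Lemma~\ref{lem:largegrad_nonconvex} hold: $\norm{\grad f(\x_{t_1})}\ge\epsilon$ combined with $\norm{\proj_\S\grad f(\x_{t_1})}\le\epsilon/2$ gives $\norm{\proj_{\S^c}\grad f(\x_{t_1})}\ge\epsilon/2$ by the triangle inequality, while $\norm{\x_{t_1}-\x_{t_0}}\le 2\uspace$ together with $\rho$-Hessian-Lipschitzness upgrades the momentum-energy bound to $\v_{t_1}\trans[\proj_\S\trans\hess f(\x_{t_1})\proj_\S]\v_{t_1}\le\sqrt{\rho\epsilon}\umom^2+2\rho\uspace\umom^2\le 2\sqrt{\rho\epsilon}\umom^2$ for $c$ large. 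Since $t_1+\utime/4\le\utime$, Lemma~\ref{lem:largegrad_nonconvex} gives $E_{t_1+\utime/4}-E_{t_1}\le-\ufun$, which by monotonicity contradicts $E_{\utime}-E_0>-\ufun$ and completes the argument.

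I expect the genuine difficulty to lie entirely in Lemmas~\ref{lem:largegrad_convex} and~\ref{lem:largegrad_nonconvex} (already granted); the assembly above is bookkeeping, but with two real pitfalls. First, Lemma~\ref{lem:largegrad_convex} controls only the $\S$-weighted momentum energy $\v_t\trans\proj_\S\trans\hess f\,\proj_\S\v_t$, not $\norm{\v_t}$ itself, which is exactly why the dichotomy through Lemma~\ref{lem:largegrad_momentum} must be repeated on the second window and why one may not simply invoke Lemma~\ref{lem:largegrad_nonconvex} at an arbitrary point of $[t_0+\utime/4,t_0+\utime/2]$. Second, the three chained windows force one to transfer estimates between the reference Hessians at $\x_0$, $\x_{t_0}$, and $\x_{t_1}$ (and the subspace $\S$ attached to them); this is where the deliberate $O(c^{-3})$-type slack in the definitions of $\umom$ and $\uspace$ is spent, so one must verify that every such transfer stays within that budget and that the invoked windows $[t_0,t_0+\utime/2]$ and $[t_1,t_1+\utime/4]$ remain inside $[0,\utime]$.
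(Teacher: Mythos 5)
Your proof is correct and follows essentially the same route as the paper's: handle the NCE case via Lemma~\ref{lem:NCE_decrease}, use the momentum/gradient dichotomy of Lemma~\ref{lem:largegrad_momentum} to locate a first time $\tau_1 \le \utime/4$ with small momentum and small gradient, apply Lemma~\ref{lem:largegrad_convex} at $\tau_1$ to kill the $\S$-gradient, use the dichotomy again inside $[\tau_1+\utime/4,\tau_1+\utime/2]$ to locate $\tau_2$ with small momentum, transfer the $\S$-weighted momentum-energy bound from $\hess f(\x_{\tau_1})$ to $\hess f(\x_{\tau_2})$ via Hessian Lipschitzness, and invoke Lemma~\ref{lem:largegrad_nonconvex} at $\tau_2$. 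The only cosmetic difference is that you phrase the window selection by contradiction rather than via $\arg\min$; your flagged pitfalls (repeating the momentum dichotomy and budgeting the Hessian transfer) are exactly the steps the paper handles, and your window-containment checks ($\tau_1+\utime/2\le\utime$, $\tau_2+\utime/4\le\utime$) match.
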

\addtocounter{theorem}{-1}
\endgroup

\begin{proof}
Since $\norm{\grad f(\x_\tau)} \ge \epsilon$ for all $ \tau \in [0, \utime]$, according to Algorithm \ref{algo:PAGD}, the precondition to add perturbation never holds, so Algorithm will not add any perturbation in these $\utime$ iterations.

Next, suppose there is at least one iteration where NCE is used. Then by Lemma \ref{lem:NCE_decrease}, we know that that step alone gives $\ufun$ decrease in the Hamiltonian. According to Lemma \ref{lem:energy_nonconvex} and Lemma \ref{lem:NCE_decrease} we know that without perturbation, the Hamiltonian decreases monotonically in the remaining steps. This means whenever at least one NCE step is performed, Lemma \ref{lem:largeGrad} immediately holds.

For the remainder of the proof, we can restrict the discussion to the case where NCE is never performed in steps $\tau \in [0, \utime]$.
Letting
\begin{equation*}
\tau_1 = \arg\min_{t \in [0, \utime]}\left\{t \left| \norm{\v_t}\le \umom
\text{~and~} \norm{\grad f(\x_t)} \le 2\ell\umom \right.\right\},
\end{equation*}
we know in case $\tau_1 \ge \frac{\utime}{4}$, that Lemma \ref{lem:largegrad_momentum} 
ensures 
$E_{\utime} - E_0 \le E_{\frac{\utime}{4}} - E_0 \le -\ufun$.
Thus, we only need to discuss the case $\tau_1 \le \frac{\utime}{4}$.
Again, if $E_{\tau_1 + \utime/2} - E_{\tau_1}\le -\ufun$, Lemma \ref{lem:largeGrad} immediately holds. For the remaining case, $E_{\tau_1 + \utime/2} - E_{\tau_1}\le -\ufun$, we apply Lemma \ref{lem:largegrad_convex} starting at $\tau_1$, and obtain
    \begin{equation*}
    \norm{\proj_\S\grad f(\x_{t})} \le \frac{\epsilon}{2}
    \text{~~and~~}
    \v_t\trans [\proj_\S\trans \hess f(\x_{\tau_1}) \proj_\S] \v_t \le \sqrt{\rho\epsilon}\umom^2.
    ~~\quad \forall t \in [\tau_1 + \frac{\utime}{4}, \tau_1 + \frac{\utime}{2}].
    \end{equation*}
Letting:
\begin{equation*}
\tau_2 = \arg\min_{t \in [\tau_1 + \frac{\utime}{4}, \utime]}\left\{t \left| \norm{\v_t}\le \umom \right.\right\},
\end{equation*}
by Lemma \ref{lem:largegrad_momentum} we again know we only need to discuss the case where $\tau_2 \le \tau_1 + \frac{\utime}{2}$; otherwise, we already guarantee sufficient decrease in the Hamiltonian.
Then, we clearly have $\norm{\proj_\S\grad f(\x_{\tau_2})} \le \frac{\epsilon}{2}$, also by the precondition of Lemma \ref{lem:largeGrad}, we know $\norm{\grad f(\x_{\tau_2})} \ge \epsilon$, 
thus $\norm{\proj_{\S^c}\grad f(\x_{\tau_2})} \ge \frac{\epsilon}{2}$.
On the other hand, since if the Hamiltonian does not decrease enough, $E_{\tau_2} - E_0 \ge -\ufun$, 
by Lemma \ref{cor:localball}, we have $\norm{\x_{\tau_1} - \x_{\tau_2}} \le 2\uspace$, by the Hessian Lipschitz property, which gives:
\begin{equation*}
\v_{\tau_2}\trans [\proj_\S\trans \hess f(\x_{\tau_2}) \proj_\S] \v_{\tau_2} \le 
\v_{\tau_2}\trans [\proj_\S\trans \hess f(\x_{\tau_1}) \proj_\S] \v_{\tau_2}
+ \norm{\hess f(\x_{\tau_1}) - \hess f(\x_{\tau_2})}\norm{\v_{\tau_2}}^2
\le 2\sqrt{\rho\epsilon}\umom^2.
\end{equation*}
Now $\x_{\tau_2}$ satisfies all the preconditions of Lemma \ref{lem:largegrad_nonconvex}, and by applying Lemma \ref{lem:largegrad_nonconvex} we finish the proof.
\end{proof}


\subsection{Proof for negative-curvature scenario}
We prove Lemma \ref{lem:negHess} in this section. 
We consider two trajectories, starting at $\x_0$
and $\modify{\x}_0$, with $\v_0=\modify{\v}_0$, where $\w_0 =
\x_0 -  \modify{\x}_0 = r_0\e_1$, where $\e_1$ is the minimum 
eigenvector direction of $\H$, and where $r_0$ is not too small. 
We show that at least one of the trajectories will escape 
saddle points efficiently.

\begin{lemma}[Formal Version of Lemma \ref{lem:informal_neg_curve}]\label{lem:2nd_seq}

Under the same setting as Theorem \ref{thm:main}, suppose $\norm{\nabla f(\tilde{\x})} \le \epsilon$ and $\lambda_{\min}(\hess f(\tilde{\x})) \le - \sqrt{\rho\epsilon}$. 
Let $\x_0$ and $\modify{\x}_0$ be at distance at most $r$ from $\tilde{\x}$.
Let $\x_0 -  \modify{\x}_0 = r_0 \cdot \e_1$ and let $\v_0 = \modify{\v}_0 = \tilde{\v}$ where $\e_1$ is the minimum eigen-direction of $\hess f(\tilde{\x})$. 
Let $r_0 \ge \frac{\delta\ufun }{2\Delta_f}\cdot\frac{r}{\sqrt{d}}$. 
Then, running~\nag~starting at $(\x_0, \v_0)$ and $(\x'_0, \v'_0)$ respectively, we have:
\begin{align*}
\min\{E_{\utime} - \widetilde{E}, \modify{E}_{\utime} - \widetilde{E}\} \le - \ufun,
\end{align*}
where $\widetilde{E},E_{\utime}$ and $\modify{E}_{\utime}$ are the Hamiltonians at $(\tilde{\x}, \tilde{\v}), (\x_{\utime}, \v_{\utime})$ and $(\modify{\x}_{\utime}, \modify{\v}_{\utime})$ respectively.

\end{lemma}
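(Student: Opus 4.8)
The plan is a proof by contradiction driven by the exponential amplification of the separation $\w_t \defeq \x_t - \modify{\x}_t$ along the negative-curvature direction $\e_1$. Suppose both trajectories fail the conclusion, i.e. $E_{\utime} - \widetilde{E} > -\ufun$ and $\modify{E}_{\utime} - \widetilde{E} > -\ufun$. Since $\x_0$ lies within $r$ of $\tilde{\x}$ and $\v_0 = \tilde{\v}$, the kinetic terms agree and $|E_0 - \widetilde{E}| = |f(\x_0) - f(\tilde{\x})| \le \epsilon r + \tfrac{\ell}{2} r^2 = O(c^{-1})\ufun$ by \eqref{eq:parameter} (using $\cn \ge 1$), and likewise for the primed trajectory; so the contradiction hypothesis gives $E_{\utime} - E_0 > -2\ufun$ and $\modify{E}_{\utime} - \modify{E}_0 > -2\ufun$ once $c$ is large. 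If \eqref{eq:certificate} ever holds along either trajectory, that trajectory incurs a one-step drop of $2\ufun$ by Lemma \ref{lem:NCE_decrease} and is non-increasing at all other steps by Lemma \ref{lem:energy_nonconvex}, contradicting the above; so we may assume \eqref{eq:certificate} fails throughout for both, and Corollary \ref{cor:localball} together with Cauchy--Schwarz gives $\norm{\x_t - \x_0} \le O(\uspace)$ and $\norm{\modify{\x}_t - \modify{\x}_0} \le O(\uspace)$ for all $t \le \utime$. Since $r \ll \uspace$, both trajectories --- and hence $\w_t$ --- stay inside a ball of radius $O(\uspace) = O(\sqrt{\epsilon/\rho})$ about $\tilde{\x}$, with $\norm{\w_t} \le O(\uspace)$.

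Next I would set up the linear recursion for $\w_t$. Translating coordinates so that $\tilde{\x}$ is the origin, writing $\H = \hess f(\tilde{\x})$, $\x_{-1} = \x_0 - \v_0$, $\modify{\x}_{-1} = \modify{\x}_0 - \v_0$ (so that $\w_{-1} = \w_0 = r_0 \e_1$, using $\v_0 = \modify{\v}_0$), and subtracting the two instances of Lemma \ref{lem:AGD_update_general} --- in which the common term $\grad f(\tilde{\x})$ cancels --- yields $\pmat{\w_t \\ \w_{t-1}} = \A^{t}\pmat{\w_0 \\ \w_0} - \eta\sum_{\tau=0}^{t-1}\A^{t-1-\tau}\pmat{\delta_\tau - \modify{\delta}_\tau \\ 0}$, where $\delta_\tau - \modify{\delta}_\tau = \grad f(\y_\tau) - \grad f(\modify{\y}_\tau) - \H(\y_\tau - \modify{\y}_\tau)$. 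Exactly as in the proof of Proposition \ref{prop:delta}, the integral form of the remainder plus the localization bound give $\norm{\delta_\tau - \modify{\delta}_\tau} \le O(\rho\uspace)\norm{\y_\tau - \modify{\y}_\tau} \le O(\rho\uspace)(\norm{\w_\tau} + \norm{\w_{\tau-1}})$. Projecting onto the $\e_1$-block, governed by the $2\times 2$ matrix $\A_1$ with $\lambda_1 = \lambda_{\min}(\H) \le -\sqrt{\rho\epsilon}$ (the ``strongly nonconvex'' regime, in which $\A_1$ has two real eigenvalues, the larger being $1 + \Omega(\sqrt{\eta|\lambda_1|}) \ge 1 + \Omega(1/\sqrt{\cn})$), the homogeneous term $\A_1^t\pmat{1 \\ 1}$ grows like $(1+\Omega(1/\sqrt{\cn}))^{t}$.

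The heart of the argument is to show the error convolution cannot cancel this growth. I would prove by induction on $t \le \utime$ the bounds $|w_t^{(1)}| \ge \tfrac12 r_0 (1+\Omega(1/\sqrt{\cn}))^{t}$ and $\norm{\w_t} \le 2|w_t^{(1)}|$, using the auxiliary estimates for powers of $\A_j$ to control both $\A_1^t\pmat{1 \\ 1}$ and the convolution $\eta\sum_\tau \A_1^{t-1-\tau}\pmat{(\delta_\tau - \modify{\delta}_\tau)^{(1)} \\ 0}$. Under the inductive hypothesis each error term at step $\tau$ is $O(\rho\uspace)\, r_0 (1+\Omega(1/\sqrt{\cn}))^{\tau}$, and multiplying by $\norm{\A_1^{t-1-\tau}} = O((1+\Omega(1/\sqrt{\cn}))^{t-1-\tau})$ and summing gives a total $O(t\,\eta\rho\uspace)\, r_0 (1+\Omega(1/\sqrt{\cn}))^{t-1}$; since $\utime^2\,\eta\rho\uspace = O(c^{-1}) \ll 1$ by \eqref{eq:parameter}, this is at most half the main term, which closes the induction (the off-$\e_1$ components of $\w_t$ are handled the same way, using that in the complementary subspace $\norm{\A_j^t}$ does not exceed the leading growth rate). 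Hence $\norm{\w_{\utime}} \ge \tfrac12 r_0 (1+\Omega(1/\sqrt{\cn}))^{\utime}$; since $\utime = \sqrt{\cn}\cdot\chi c$ the amplification is $e^{\Omega(\chi c)}$, and because $\chi \ge \log\tfrac{d\ell\Delta_f}{\rho\epsilon\delta}$ and $r_0 \ge \tfrac{\delta\ufun}{2\Delta_f}\cdot\tfrac{r}{\sqrt d}$, a computation with \eqref{eq:parameter} shows $r_0(1+\Omega(1/\sqrt{\cn}))^{\utime} \ge 4\uspace$ once $c$ is a large enough absolute constant. But $\norm{\w_{\utime}} \le \norm{\x_{\utime} - \x_0} + r_0 + \norm{\modify{\x}_{\utime} - \modify{\x}_0} \le O(\uspace)$ by the localization of the first paragraph --- a contradiction. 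So at least one trajectory has $E_{\utime} - E_0 \le -2\ufun$, and the base-point transfer $|E_0 - \widetilde{E}| = O(c^{-1})\ufun$ then gives $\min\{E_{\utime} - \widetilde{E},\ \modify{E}_{\utime} - \widetilde{E}\} \le -\ufun$.

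I expect the inductive control of the error convolution --- showing the deviation-from-quadratic terms $\delta_\tau - \modify{\delta}_\tau$ accumulate to at most a small fraction of the exponentially growing $\e_1$-component, uniformly over all $\tilde{\Theta}(\sqrt{\cn})$ steps --- to be the main obstacle. The delicacy is that the error coefficient $O(\rho\uspace) = O(\sqrt{\rho\epsilon}\,c^{-3})$ is of the \emph{same order} as the curvature $|\lambda_1|$ driving the growth, so the needed separation of scales is manufactured entirely by the tunable constant $c$ (and the logarithmic factor $\chi$); the bookkeeping of the $2\times 2$ matrix powers $\A_1^t$ in the expanding regime, together with maintaining the domination of $\w_t$ by its $\e_1$-component throughout the window, is what must be carried out with care.
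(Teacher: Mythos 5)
Your proposal matches the paper's proof in all essentials: the contradiction hypothesis and localization via Corollary \ref{cor:localball}, subtracting the two instances of Lemma \ref{lem:AGD_update_general} to get the $\w_t$ recursion with error terms $O(\rho\uspace)(\|\w_\tau\|+\|\w_{\tau-1}\|)$, the induction that the error convolution stays below half the homogeneous term on the $\e_1$-coordinate, the resulting exponential growth $\|\w_\utime\|\gtrsim\theta(1+\Omega(\theta))^{\utime}r_0\gg\uspace$, and the base-point transfer $|E_0-\widetilde{E}|\le\epsilon r+\ell r^2/2$. The one place your bookkeeping is looser than the paper's is the estimate $\|\A_1^{s}\|=O((1+\Omega(\theta))^s)$ — for the non-normal block $\A_1$ this actually carries an extra $\min\{s,1/\theta\}$ factor (the paper tracks this through Lemma \ref{lem:aux_eigen_combo_inequal}), but your final comparison $\utime^2\eta\rho\uspace=O(c^{-1})$ already budgets the extra $\utime$ needed, so the argument closes; you also explicitly dispatch the case where \eqref{eq:certificate} holds via Lemma \ref{lem:NCE_decrease}, a step the paper leaves implicit before invoking Corollary \ref{cor:localball}.
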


\begin{proof}
Assume none of the two sequences decrease the Hamiltonian fast enough; that is,
\begin{align*}
\min\{E_{\utime} -E_0, \modify{E}_{\utime} - \modify{E}_0\} \ge - 2\ufun,
\end{align*}
where $E_0$ and $\modify{E}_0$ are the Hamiltonians at $(\x_0, \v_0)$ and $(\modify{\x}_0, \modify{\v}_0)$.
Then, by Corollary \ref{cor:localball} and the Cauchy-Swartz inequality, we have
for any $t \le \utime$:
\begin{equation*}
\max\{\norm{\x_t - \tilde{\x}},  \norm{\modify{\x}_t - \tilde{\x}}\} \le 
r + \max\{\norm{\x_t - \x_0},  \norm{\modify{\x}_t - \modify{\x}_0}\}
\le r + \sqrt{4\eta \utime \ufun/\theta} \le 2\uspace.
\end{equation*}
Fix the origin $\zero$ at $\tilde{\x}$ and let $\H$ be the Hessian at $\tilde{\x}$. Recall that the update equation of AGD (Algorithm \ref{algo:AGD}) can be re-written as:
\begin{align*}
\x_{t+1}  =& (2-\theta) \x_{t} - (1-\theta)  \x_{t-1}
 -\eta \grad f((2-\theta) \x_{t} - (1-\theta)  \x_{t-1}) 
\end{align*}
Taking the difference of two AGD sequences starting from $\x_0,  \modify{\x}_0$, and let $\w_t = \x_t - \modify{\x}_t$, we have:
\begin{align*}
\w_{t+1} =& (2-\theta) \w_{t} - (1-\theta)  \w_{t-1}
-\eta \grad f( \y_{t}) 
+ \eta \grad f(\modify{\y}_{t})\\ 
=& (2-\theta)(I - \eta\H - \eta\Delta_t) \w_{t} - (1-\theta)(I - \eta\H  -\eta\Delta_t) \w_{t-1},
\end{align*}
where $\Delta_t = \int_{0}^1 (\hess f(\phi\y_{t} + (1-\phi)\modify{\y}_{t}) - \H) \mathrm{d}\phi$. In the last step, we used 
\begin{align*}
\grad f( \y_{t}) - \grad f(\modify{\y}_{t})
= (\H + \Delta_t)(\y_{t} - \modify{\y}_{t})
= (\H + \Delta_t)[(2-\theta) \w_{t} - (1-\theta)  \w_{t-1}].
\end{align*}
We thus obtain the update of the $\w_t$ sequence in matrix form:
\begin{align}
\pmat{\w_{t+1} \\ \w_t}=& \pmat{(2-\theta) (\I - \eta \H)&  -(1-\theta) (\I - \eta \H) \\\I& 0}
\pmat{\w_{t} \\ \w_{t-1}} \nn \\
&- \eta\pmat{(2-\theta) \Delta_t \w_t - (1 - \theta) \Delta_{t} \w_{t-1} \\ 0} \nn \\
=& \A\pmat{\w_t \\ \w_{t-1}} -\eta\pmat{ \delta_t \\ 0} = \A^{t+1}\pmat{\w_0 \\ \w_{-1}} 
-\eta\sum_{\tau = 0}^t \A^{t-\tau}\pmat{ \delta_\tau \\ 0},
\label{eq:update_w}
\end{align}
where $\delta_t = (2-\theta) \Delta_t \w_t - (1 - \theta) \Delta_{t} \w_{t-1}$. Since $\v_0 = \v'_0$, we have $\w_{-1} = \w_0$, and $\norm{\Delta_t} \le \rho\max\{\norm{\x_t - \tilde{\x}},  \norm{\modify{\x}_t- \tilde{\x}}\}
\le 2\rho\uspace$, as well as $\norm{\delta_\tau} \le  6\rho\uspace(\norm{\w_\tau} + \norm{\w_{\tau-1}})$. According to \eqref{eq:update_w}:
\begin{align*}
\w_t
=& \pmat{\I & 0}\A^{t}\pmat{\w_0 \\ \w_{0}} -\eta\pmat{\I & 0}\sum_{\tau = 0}^{t-1} \A^{t-1-\tau}\pmat{ \delta_\tau \\ 0}.
\end{align*}
Intuitively, we want to say that the first term dominates. Technically, we will 
set up an induction based on the following fact:
\begin{align*}
\norm{\eta\pmat{\I, 0}\sum_{\tau = 0}^{t-1} \A^{t-1-\tau}\pmat{\delta_\tau \\ 0} }
\le \frac{1}{2}\norm{\pmat{\I, 0}\A^{t}\pmat{\w_0 \\ \w_{0}}}.
\end{align*}

It is easy to check the base case holds for $t=0$. Then, assume that for all time steps less than or equal to $t$, the induction assumption hold. We have:
\begin{align*}
\norm{\w_t} 
\le& \norm{\pmat{\I & 0}\A^{t}\pmat{\w_0 \\ \w_{0}}} +\norm{\eta\pmat{\I & 0}\sum_{\tau = 0}^{t-1} \A^{t-1-\tau}\pmat{ \delta_\tau \\ 0} } \\
\le& 2\norm{\pmat{\I & 0}\A^{t}\pmat{\w_0 \\ \w_{0}}},
\end{align*}
which gives:
\begin{align*}
\norm{\delta_t} \le& O(\rho\uspace)(\norm{\w_t} + \norm{\w_{t-1}})
\le O(\rho\uspace) \left[\norm{\pmat{\I & 0}\A^{t}\pmat{\w_0 \\ \w_{0}}}
+ \norm{\pmat{\I & 0}\A^{t-1}\pmat{\w_0 \\ \w_{0}}}\right] \\
\le& O(\rho\uspace) \norm{\pmat{\I & 0}\A^{t}\pmat{\w_0 \\ \w_{0}}},
\end{align*}
where in the last inequality, we used Lemma \ref{lem:aux_increase_t} 
for monotonicity in $t$.

To prove that the induction assumption holds for $t+1$ we compute:
\begin{align}
\norm{\eta\pmat{\I, 0}\sum_{\tau = 0}^{t} \A^{t-\tau}\pmat{\delta_\tau \\ 0} }
\le& \eta \sum_{\tau = 0}^{t} \norm{\pmat{\I, 0}\A^{t-\tau}\pmat{\I \\ 0}}
\norm{\delta_\tau}  \nn \\
\le& O(\eta\rho\uspace) \sum_{\tau = 0}^{t} \norm{\pmat{\I, 0}\A^{t-\tau}\pmat{\I \\ 0}}
\norm{\pmat{\I & 0}\A^{\tau}\pmat{\w_0 \\ \w_{0}}}. \label{eq:saddle_app}
\end{align}
By the precondition we have $\lambda_{\min}(\H) \le -\sqrt{\rho\epsilon}$. 
Without loss of generality, assume that the minimum eigenvector direction 
of $\H$ is along he first coordinate $\e_1$, and denote the corresponding 
$2\times 2$ matrix as $\A_1$ (as in the convention of \eqref{eq:definition_Aj}. 
Let:
\begin{equation*}
(a^{(1)}_{t}, ~-b^{(1)}_t) =\pmat{1 & 0 }\A_1^{t}.
\end{equation*} 
We then see that (1) $\w_0$ is along the $\e_1$ direction, 
and (2) according to Lemma \ref{lem:aux_increase_x}, the matrix 
$\pmat{\I, 0}\A^{t-\tau}\pmat{\I \\ 0}$ is a diagonal matrix, where the spectral norm is achieved along the first coordinate which corresponds to the eigenvalue 
$\lambda_{\min}(\H)$. Therefore, using Equation \eqref{eq:saddle_app}, we have:
\begin{align*}
\norm{\eta\pmat{\I, 0}\sum_{\tau = 0}^{t} \A^{t-\tau}\pmat{\delta_\tau \\ 0} }
\le& O(\eta\rho\uspace) \sum_{\tau = 0}^t a^{(1)}_{t-\tau}(a^{(1)}_{\tau} - b^{(1)}_{\tau}) \norm{\w_0}\\
\le& O(\eta\rho\uspace) \sum_{\tau =0}^t [\frac{2}{\theta} + (t+1)] |a^{(1)}_{t+1} - b^{(1)}_{t+1}|\norm{\w_0}\\
\le& O(\eta\rho\uspace\utime^2)\norm{\pmat{\I, 0}\A^{t+1}\pmat{\w_0 \\ \w_{0}}},
\end{align*}
where, in the second to last step, we used Lemma \ref{lem:aux_eigen_combo_inequal}, and in the last step we used $1/\theta \le \utime$. Finally,
$O(\eta\rho\uspace\utime^2) \le O(c^{-1})\le 1/2$ by choosing a sufficiently large constant $c$. Therefore, we have proved the induction, which gives us:
\begin{align*}
\norm{\w_t} =& \norm{\pmat{\I & 0}\A^{t}\pmat{\w_0 \\ \w_{0}} }
-\norm{\eta\pmat{\I & 0}\sum_{\tau = 0}^{t-1} \A^{t-1-\tau}\pmat{ \delta_\tau \\ 0} }
\ge \frac{1}{2}\norm{\pmat{\I & 0}\A^{t}\pmat{\w_0 \\ \w_{0}} }.
\end{align*}
Noting that $\lambda_{\min}(\H) \le -\sqrt{\rho\epsilon}$, by applying Lemma \ref{lem:aux_increase_t} we have 
\begin{equation*}
\frac{1}{2}\norm{\pmat{\I & 0}\A^{t}\pmat{\w_0 \\ \w_{0}}}
\ge \frac{\theta}{4}(1+ \Omega(\theta))^t r_0,
\end{equation*}
which grows exponentially. Therefore, for  $r_0 \ge \frac{\delta\ufun }{2\Delta_f}\cdot\frac{r}{\sqrt{d}}$,
and $\utime = \Omega(\frac{1}{\theta}\cdot\chi c)$ where $\chi =\max\{1, \log \frac{d \ell\Delta_f}{\rho \epsilon\delta}\}$, where the constant $c$ is sufficiently large, 
we have
$$\norm{\x_\utime - \modify{\x}_\utime} = \norm{\w_\utime} \ge \frac{\theta}{4}(1+ \Omega(\theta))^\utime r_0 \ge 4\uspace,$$
which contradicts the fact that:
\begin{equation*}
\forall t \le \utime, \max\{\norm{\x_t - \tilde{\x}},  \norm{\modify{\x}_t - \tilde{\x}}\} \le O(\uspace).
\end{equation*}
This means our assumption is wrong, and we can therefore conclude:
\begin{align*}
\min\{E_{\utime} -E_0, \modify{E}_{\utime} - \modify{E}_0\} \le - 2\ufun.
\end{align*}
On the other hand, by the precondition on $\tilde{x}$ and the gradient Lipschitz
property, we have:
\begin{align*}
\max\{E_0 - \tilde{E}, \modify{E}_0- \tilde{E}\}
\le \epsilon r + \frac{\ell r^2}{2} \le \ufun,
\end{align*}
where the last step is due to our choice of $r= \eta\epsilon\cdot \chi^{-5}c^{-8}$ 
in \eqref{eq:parameter}.  Combining these two facts:
\begin{align*}
\min\{E_{\utime} -\tilde{E}, \modify{E}_{\utime} - \tilde{E}\}
\le \min\{E_{\utime} -E_0, \modify{E}_{\utime} - \modify{E}_0\} +
\max\{E_0 - \tilde{E}, \modify{E}_0- \tilde{E}\} \le -\ufun,
\end{align*}
which finishes the proof.
\end{proof}

We are now ready to prove the main lemma in this subsection, which states 
with that random perturbation, PAGD will escape saddle points efficiently 
with high probability.
\begingroup
\def\thetheorem{\ref{lem:negHess}}
\begin{lemma}[Negative curvature]
Consider the setting of Theorem~\ref{thm:main}. 
If $\norm{\grad f(\x_0)} \le \epsilon$, $\lambda_{\min} (\hess f(\x_0)) < -\sqrt{\rho\epsilon}$, 
and a perturbation has not been added in iterations $\tau \in [-\utime, 0)$, 
then, by running Algorithm \ref{algo:PAGD}, we have $E_{\utime} - E_0 \le -\ufun$ 
with probability at least $1-\frac{\delta \ufun}{2\Delta_f}$.
\end{lemma}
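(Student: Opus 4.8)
The plan is to run the ``pillbox''/stuck-region argument of~\cite{jin2017escape}, now with the Hamiltonian in place of the objective and with Lemma~\ref{lem:2nd_seq} supplying the crucial coupling bound. Write $\tilde\x\defeq\x_0$ for the iterate \emph{before} the perturbation and $\tilde\v\defeq\v_0$; the hypotheses ($\norm{\grad f(\x_0)}\le\epsilon$, $\lambda_{\min}(\hess f(\x_0))<-\sqrt{\rho\epsilon}$, no perturbation in $[-\utime,0)$) are exactly the condition that triggers Lines~3--4 of Algorithm~\ref{algo:PAGD}, so the algorithm replaces $\x_0$ by $\x_0+\xi$ with $\xi\sim\text{Unif}(\ball_0(r))$, leaving $\v_0$ untouched. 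The $\utime$-gap rule guarantees no further perturbation during iterations $[0,\utime]$, so the window is a deterministic function of $\xi$. Let $\widetilde E$ denote the Hamiltonian at $(\tilde\x,\tilde\v)$ (which is the quantity written $E_0$ in the statement, since $\x_0$ there is the pre-perturbation point), and define the \emph{stuck region}
\[
\cXs\defeq\Bigl\{\xi\in\ball_0(r):\ \text{running PAGD from }(\tilde\x+\xi,\tilde\v)\text{ for }\utime\text{ steps yields } E_{\utime}-\widetilde E>-\ufun\Bigr\}.
\]
It suffices to show $\Pr[\xi\in\cXs]\le\delta\ufun/(2\Delta_f)$.

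First I would handle the NCE case and thereby restrict attention to pure-AGD trajectories. Right after the perturbation the Hamiltonian increases by at most $\epsilon r+\tfrac{\ell r^2}{2}\le\ufun$ by gradient-Lipschitzness (exactly the estimate used at the end of the proof of Lemma~\ref{lem:2nd_seq}, with $r=\eta\epsilon\chi^{-5}c^{-8}$). Thereafter the Hamiltonian is non-increasing: each AGD step obeys Lemma~\ref{lem:energy_nonconvex} and each NCE step obeys Lemma~\ref{lem:NCE_decrease}, which by itself contributes a drop of at least $2\ufun$. Hence if NCE is ever invoked in $[0,\utime]$ then $E_\utime\le\widetilde E+\ufun-2\ufun=\widetilde E-\ufun$, so such a $\xi$ lies outside $\cXs$. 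Consequently every $\xi\in\cXs$ produces only pure AGD steps (no NCE, no perturbation) throughout the window, which is precisely the setting of Lemma~\ref{lem:2nd_seq}.

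Next I would bound the width of $\cXs$ along $\e_1$, the minimum eigendirection of $\H=\hess f(\tilde\x)$, and convert this into a volume estimate. Suppose $\xi,\xi'\in\cXs$ with $\xi-\xi'=t\,\e_1$ and $|t|\ge r_0\defeq\frac{\delta\ufun}{2\Delta_f}\cdot\frac{r}{\sqrt d}$. Both $\tilde\x+\xi$ and $\tilde\x+\xi'$ lie within distance $r$ of $\tilde\x$, both run pure AGD with common initial momentum $\tilde\v$, and they are separated by $r_0\e_1$, so Lemma~\ref{lem:2nd_seq} gives $\min\{E_\utime-\widetilde E,\ E'_\utime-\widetilde E\}\le-\ufun$, contradicting $\xi,\xi'\in\cXs$. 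Thus on every line parallel to $\e_1$ the set $\cXs$ has one-dimensional Lebesgue measure $<r_0$, and by Fubini (projecting onto $\e_1^\perp$ and using $\cXs\subseteq\ball_0(r)$),
\[
\Pr[\xi\in\cXs]=\frac{\mathrm{Vol}_d(\cXs)}{\mathrm{Vol}_d(\ball_0^d(r))}\le\frac{r_0\,\mathrm{Vol}_{d-1}(\ball_0^{d-1}(r))}{\mathrm{Vol}_d(\ball_0^d(r))}=\frac{r_0}{r}\cdot\frac{\Gamma(\tfrac d2+1)}{\sqrt\pi\,\Gamma(\tfrac{d+1}{2})}\le\frac{r_0\sqrt d}{r}=\frac{\delta\ufun}{2\Delta_f},
\]
where the last inequality uses $\Gamma(x+\tfrac12)\le\sqrt x\,\Gamma(x)$. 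Therefore with probability at least $1-\delta\ufun/(2\Delta_f)$ the perturbation misses $\cXs$, i.e.\ $E_\utime-E_0\le-\ufun$, as claimed. The only substantive difficulty has been exported to Lemma~\ref{lem:2nd_seq} --- controlling the difference sequence $\w_t$ of the two coupled trajectories and showing it grows geometrically before leaving the localization ball; here the remaining work is merely the NCE/perturbation bookkeeping and the elementary ball-volume ratio.
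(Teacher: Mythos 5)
Your proof is correct and follows the same structure as the paper's: bound the Hamiltonian jump from the perturbation by $\epsilon r + \ell r^2/2 \le \ufun$, dispose of any trajectory that invokes NCE (since one NCE step alone gives $2\ufun$ decrease by Lemma~\ref{lem:NCE_decrease}), reduce to pure-AGD windows, use the coupling Lemma~\ref{lem:2nd_seq} to bound the one-dimensional width of $\cXs$ along $\e_1$ by $r_0$, and finish with the standard pillbox/volume ratio. The only minor wrinkle is cosmetic --- you write ``$|t|\ge r_0$'' and then say the points are ``separated by $r_0\e_1$'' rather than by $t\e_1$ with $|t|\ge r_0$, but since Lemma~\ref{lem:2nd_seq} holds for any separation $\ge \frac{\delta\ufun}{2\Delta_f}\cdot\frac{r}{\sqrt d}$ this causes no gap.
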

\addtocounter{theorem}{-1}
\endgroup

\begin{proof}
Since a perturbation has not been added in iterations $\tau \in [-\utime, 
0)$, according to PAGD (Algorithm \ref{algo:PAGD}), we add perturbation at 
$t=0$, the Hamiltonian will increase by at most:
\begin{align*}
\Delta E
\le \epsilon r + \frac{\ell r^2}{2} \le \ufun,
\end{align*}
where the last step is due to our choice of $r= \eta\epsilon\cdot \chi^{-5}c^{-8}$ in \eqref{eq:parameter} with constant $c$ sufficiently large.
Again by Algorithm \ref{algo:PAGD}, a perturbation will never be added in 
the remaining iterations, and by Lemma \ref{lem:energy_nonconvex} and 
Lemma \ref{lem:NCE_decrease} we know the Hamiltonian always decreases 
for the remaining steps. Therefore, if at least one NCE step is performed 
in iteration $\tau \in [0, \utime]$, by Lemma \ref{lem:NCE_decrease} we 
will decrease $2\ufun$ in that NCE step, and at most increase by $\ufun$ 
due to the perturbation. This immediately gives $E_\utime -E_0 \le -\ufun$.

Therefore, we only need to focus on the case where NCE is never used in 
iterations $\tau \in [0, \utime]$.
Let $\mathbb{B}_{\x_0}(r)$ denote the ball with radius $r$ around $\x_0$. According to algorithm \ref{algo:PAGD}, we know the iterate after adding perturbation to $\x_0$ is uniformly sampled from the ball $\mathbb{B}_{\x_0}(r)$. Let $\mathcal{X}_{\text{stuck}} \subset \mathbb{B}_{\x_0}(r)$ be the region where AGD is stuck (does not decrease the
Hamiltonian $\ufun$ in $\utime$ steps).
Formally, for any point $\x \in \mathcal{X}_{\text{stuck}}$, let $\x_1, \cdots, \x_\utime$ be the AGD sequence starting at $(\x, \v_0)$, then $E_\utime - E_0 \ge -\ufun$. By Lemma \ref{lem:2nd_seq}, $\mathcal{X}_{\text{stuck}}$ can have at most width 
$r_0 = \frac{\delta\ufun }{2\Delta_f}\cdot\frac{r}{\sqrt{d}}$ along the
minimum eigenvalue direction. Therefore,
\begin{align*}
\frac{\text{Vol}(\cXs)}{\text{Vol}(\ball^{(d)}_{\x_0}(r))}
\le \frac{r_0 \times \text{Vol}(\ball^{(d-1)}_0(r))}{\text{Vo{}l} (\ball^{(d)}_0(r))}
= \frac{r_0}{r\sqrt{\pi}}\frac{\Gamma(\frac{d}{2}+1)}{\Gamma(\frac{d}{2}+\frac{1}{2})}
\le \frac{r_0}{r\sqrt{\pi}} \cdot \sqrt{\frac{d}{2}+\frac{1}{2}} \le \frac{\delta\ufun }{2\Delta_f}.
\end{align*}
Thus, with probability at least $1-\frac{\delta\ufun }{\Delta_f}$, the
perturbation will end up outside of $\cXs$, which give $E_{\utime} - E_0\le -\ufun$. 
This finishes the proof.

\end{proof}

\subsection{Proof of Theorem \ref{thm:main}}

Our main result is now easily obtained from Lemma \ref{lem:largeGrad} and Lemma \ref{lem:negHess}.

\begin{proof}[Proof of Theorem \ref{thm:main}]
Suppose we never encounter any \ESSP. Consider the set $\mathfrak{T} = \{\tau | \tau \in [0, \utime] \text{~and~}
\norm{\grad f(\x_\tau)} \le \epsilon\}$, and two cases: (1) $\mathfrak{T} = \varnothing$, in which case we know all gradients are large and by Lemma \ref{lem:largeGrad} we have $E_{\utime} - E_0 \le -\ufun$;
(2) $\mathfrak{T} \neq \varnothing$.  In this case, define $\tau' = \min \mathfrak{T}$; i.e., the earliest iteration where the gradient is small. Since by assumption, $\x_\tau'$ is not an \ESSP, this gives $\hess f(\x_{\tau'}) \le - \sqrt{\rho\epsilon}$, and by Lemma \ref{lem:negHess}, we can conclude $E_{\tau'+\utime} - E_0 \le
E_{\tau'+\utime} - E_{\tau'} \le -\ufun$. Clearly $\tau'+\utime \le 2\utime$. That is, in either case, we will decrease the Hamiltonian by $\ufun$ in at most $2\utime$ steps.

Then, for the the first case, we can repeat this argument starting at iteration $\utime$, and for the second case, we can repeat the argument starting at iteration $\tau'+\utime$. Therefore, we will continue to obtain a decrease of the Hamiltonian by an average of $\ufun/(2\utime)$ per step. Since the function $f$ is lower bounded, we know the Hamiltonian can not decrease beyond $E_0 - E^\star = f(\x_0) - f^\star$, which means that in $\frac{2(f(\x_0) - f^\star)\utime}{\ufun}$ steps, we must encounter an \ESSP~at least once.

Finally, in $\frac{2(f(\x_0) - f^\star)\utime}{\ufun}$ steps, we will call Lemma \ref{lem:negHess} at most $\frac{2\Delta_f}{\ufun}$ times, and since Lemma \ref{lem:negHess} holds with probability $1-\frac{\delta \ufun}{2\Delta_f}$, by a union bound, we know that the argument above is true with probability at least:
$$1-\frac{\delta \ufun}{2\Delta_f}\cdot \frac{2\Delta_f}{\ufun} = 1-\delta,$$
which finishes the proof.
\end{proof}

\section{Auxiliary Lemma}
In this section, we present some auxiliary lemmas which are used 
in proving Lemma \ref{lem:largegrad_nonconvex}, Lemma \ref{lem:largegrad_convex} and Lemma \ref{lem:2nd_seq}.  These deal with the large-gradient scenario (nonconvex component), the large-gradient scenario (strongly convex component), and the negative curvature scenario, respectively.

The first two lemmas establish some facts about powers of the structured 
matrices arising in~\nag.
\begin{lemma}\label{lem:aux_matrix_form}
Let the $2\times 2$ matrix $\A$ have following form, for arbitrary $a, b\in \R$:
\begin{equation*}
\A = \pmat{ a &  b \\1 & 0}.
\end{equation*}
Letting $\mu_1, \mu_2$ denote the two eigenvalues of $\A$ (can be repeated or complex eigenvalues), then, for any $t\in \N$:
\begin{align*}
\pmat{1 & 0 } \A^t =&
\left(\sum_{i=0}^t \mu_1^i \mu_2^{t-i}, \quad - \mu_1\mu_2\sum_{i=0}^{t-1} \mu_1^{i} \mu_2^{t-1-i}\right)\\
\pmat{0 & 1 } \A^t =& \pmat{1 & 0 } \A^{t-1}.
\end{align*}
\end{lemma}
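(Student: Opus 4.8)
The plan is to prove the second identity by a one-line structural observation and then establish the first identity by strong induction on $t$, using the characteristic (Cayley--Hamilton) recurrence of $\A$.

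First I would note that the bottom row of $\A$ is exactly $\pmat{1 & 0}$, so $\pmat{0 & 1}\A = \pmat{1 & 0}$ and hence $\pmat{0 & 1}\A^t = (\pmat{0 & 1}\A)\A^{t-1} = \pmat{1 & 0}\A^{t-1}$ for every $t \ge 1$; this is precisely the claimed second identity, and it also shows that it suffices to track the top row, which I denote $\pmat{1 & 0}\A^t =: (p_t, q_t)$. From the characteristic polynomial $\mu^2 - a\mu - b$ of $\A$ I read off $a = \mu_1 + \mu_2$ and $b = -\mu_1\mu_2$. Writing $h_t \defeq \sum_{i=0}^t \mu_1^i \mu_2^{t-i}$ for the complete homogeneous symmetric polynomial (with the convention $h_{-1} \defeq 0$), the target formula asserts $p_t = h_t$ and $q_t = -\mu_1\mu_2\, h_{t-1} = b\, h_{t-1}$.

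I would check the base cases $t=0$ (top row $\pmat{1 & 0}$, matching $h_0 = 1$ and $b h_{-1} = 0$) and $t=1$ (top row $(a, b)$, matching $h_1 = \mu_1 + \mu_2 = a$ and $b h_0 = b$). For the inductive step, $\pmat{1 & 0}\A^{t+1} = (p_t, q_t)\A = (a p_t + q_t,\ b p_t)$, so by the inductive hypothesis $q_{t+1} = b p_t = b h_t$ as required, and it remains only to verify $h_{t+1} = a h_t + q_t = a h_t + b h_{t-1}$, i.e. $h_{t+1} = (\mu_1 + \mu_2) h_t - \mu_1\mu_2 h_{t-1}$. This is a routine index shift: expanding $(\mu_1 + \mu_2)\sum_{i=0}^t \mu_1^i \mu_2^{t-i} - \mu_1\mu_2\sum_{i=0}^{t-1}\mu_1^i \mu_2^{t-1-i}$ and cancelling the overlapping ranges leaves exactly $\sum_{i=0}^{t+1}\mu_1^i \mu_2^{t+1-i} = h_{t+1}$, closing the induction. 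I do not anticipate a genuine obstacle here; the only point worth a remark is that $\mu_1, \mu_2$ may be complex or equal, but every identity above is a polynomial identity in $\mu_1$ and $\mu_2$ (equivalently in the entries $a, b$), so the proof applies verbatim and there is no need to separate a diagonalizable case from a defective one.
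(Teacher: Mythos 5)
Your argument is correct, and it is a genuinely different route from the paper's. The paper diagonalizes $\A$ explicitly: for distinct eigenvalues it writes $\A = \frac{1}{\mu_1-\mu_2}\pmat{\mu_1 & \mu_2 \\ 1 & 1}\pmat{\mu_1 & 0 \\ 0 & \mu_2}\pmat{1 & -\mu_2 \\ -1 & \mu_1}$, raises the middle factor to the $t$th power, and reads off the first row; when the eigenvalues coincide it repeats the calculation with a Jordan block. Both cases end with ``the remainder follows from simple linear algebra,'' so the reader must carry out the multiplication and then recognize $\frac{\mu_1^{t+1}-\mu_2^{t+1}}{\mu_1-\mu_2}$ (resp.\ $(t+1)\mu_1^t$) as the symmetric-polynomial expression in the lemma statement. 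Your proof instead works directly with the top row $(p_t,q_t)$, observes from the companion-matrix structure that $\pmat{0&1}\A^t = \pmat{1&0}\A^{t-1}$ and $p_{t+1} = ap_t + q_t$, $q_{t+1} = bp_t$, and closes an induction using the three-term recurrence $h_{t+1} = (\mu_1+\mu_2)h_t - \mu_1\mu_2 h_{t-1}$ for the complete homogeneous symmetric polynomials. What your approach buys: no case split between diagonalizable and defective $\A$, no need to compute eigenvectors or Jordan bases, and the final formula appears in the desired symmetric form without an extra ``recognize the sum'' step — as you note, everything is a polynomial identity in $\mu_1,\mu_2$ and hence holds uniformly over $\R$ or $\C$, repeated or not. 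What the paper's approach buys: it hands over the closed forms $\frac{\mu_1^{t+1}-\mu_2^{t+1}}{\mu_1-\mu_2}$ and $(t+1)\mu_1^t$ along the way, which are what several downstream lemmas (e.g.\ Lemma~\ref{lem:aux_matrix_equality} and the estimates in the convex/nonconvex auxiliary sections) actually use. Either proof is adequate; yours is the tidier derivation of the stated identity.
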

\begin{proof}
When the eigenvalues $\mu_1$ and $\mu_2$ are distinct, the matrix $\A$ 
can be rewritten as $\pmat{\mu_1+\mu_2 & -\mu_1\mu_2 \\ 1 & 0 }$, and it 
is easy to check  that the two eigenvectors have the form 
$\pmat{\mu_1 \\ 1}$ and $\pmat{\mu_2 \\ 1}$. Therefore, we can 
write the eigen-decomposition as:
\begin{equation*}
\A = \frac{1}{\mu_1 - \mu_2} \pmat{\mu_1 & \mu_2 \\1 & 1}
\pmat{\mu_1 & 0 \\ 0 & \mu_2}
\pmat{1 & -\mu_2 \\ -1 & \mu_1 },
\end{equation*}
and the $t$th power has the general form:
\begin{equation*}
\A^t = \frac{1}{\mu_1 - \mu_2} \pmat{\mu_1 & \mu_2 \\1 & 1}
\pmat{\mu_1^t & 0 \\ 0 & \mu_2^t}
\pmat{1 & -\mu_2 \\ -1 & \mu_1 }
\end{equation*}

When there are two repeated eigenvalue $\mu_1$, the matrix 
$\pmat{a & b \\ 1 & 0}$ can be rewritten as $\pmat{
2\mu_1 & -\mu_1^2 \\ 1 & 0 
}$. It is easy to check that $\A$ has the following Jordan normal form:
\begin{equation*}
\A = - \pmat{\mu_1 & \mu_1+1 \\1 & 1}\pmat{\mu_1 & 1 \\0 & \mu_1}
\pmat{1 & -(\mu_1 + 1) \\-1 & \mu_1},
\end{equation*}
which yields:
\begin{equation*}
\A^t = - \pmat{\mu_1 & \mu_1+1 \\1 & 1}
\pmat{\mu^t_1 & t\mu_1^{t-1} \\0 & \mu^t_1}
\pmat{1 & -(\mu_1 + 1) \\-1 & \mu_1}.
\end{equation*}

The remainder of the proof follows from simple linear algebra 
calculations for both cases.
\end{proof}

\begin{lemma}\label{lem:aux_matrix_equality}
Under the same setting as Lemma \ref{lem:aux_matrix_form}, for any $t\in \N$: 
\begin{equation*}
(\mu_1 - 1)(\mu_2 - 1)
\pmat{1 & 0} \sum_{\tau = 0}^{t-1} \A^\tau \pmat{1 \\ 0}
 = 1 - \pmat{1 & 0}  \A^t \pmat{1 \\ 1}.
\end{equation*}
\end{lemma}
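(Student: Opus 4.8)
The plan is to sidestep the eigenvalue bookkeeping of Lemma~\ref{lem:aux_matrix_form} and argue directly via the telescoping identity $\sum_{\tau=0}^{t-1}\A^\tau(\A-\I) = \A^t - \I$, which holds for any square matrix $\A$ and follows by expanding the left-hand side. First I would treat the generic case in which $1$ is not an eigenvalue of $\A$, i.e. $(\mu_1-1)(\mu_2-1)\neq 0$, so that $\A-\I$ is invertible and $\sum_{\tau=0}^{t-1}\A^\tau = (\A^t-\I)(\A-\I)^{-1}$.

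Next I would compute $(\A-\I)^{-1}\pmat{1 \\ 0}$ in closed form. Writing $\A = \pmat{a & b \\ 1 & 0}$, the characteristic polynomial gives $a = \mu_1+\mu_2$ and $-b = \mu_1\mu_2$, so $\det(\A-\I) = -(a-1) - b = 1 - a - b = (\mu_1-1)(\mu_2-1)$, while the adjugate of $\A-\I = \pmat{a-1 & b \\ 1 & -1}$ applied to $\pmat{1\\0}$ equals $\pmat{-1\\-1}$. Hence $(\A-\I)^{-1}\pmat{1\\0} = \frac{1}{(\mu_1-1)(\mu_2-1)}\pmat{-1\\-1}$, and therefore
\begin{equation*}
\pmat{1 & 0}\sum_{\tau=0}^{t-1}\A^\tau\pmat{1\\0} = \frac{1}{(\mu_1-1)(\mu_2-1)}\pmat{1 & 0}(\A^t - \I)\pmat{-1\\-1} = \frac{1 - \pmat{1 & 0}\A^t\pmat{1\\1}}{(\mu_1-1)(\mu_2-1)},
\end{equation*}
where the last step uses $\pmat{1 & 0}\pmat{1\\1} = 1$. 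Multiplying through by $(\mu_1-1)(\mu_2-1)$ gives exactly the claimed identity.

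Finally, for the degenerate case $(\mu_1-1)(\mu_2-1) = 0$ I would invoke polynomial continuity: both sides of the asserted equality are polynomials in the entries $a,b$ of $\A$ (on the left because $(\mu_1-1)(\mu_2-1) = 1-a-b$ and each $\A^\tau$ has polynomial entries, on the right likewise), and they agree on the dense set $\{1-a-b\neq 0\}$, hence identically on $\R^2$. I do not expect a genuine obstacle here: the core is a two-line matrix computation, and the only point that merits a word of care is this passage to the boundary; as an alternative one could instead substitute the explicit formulas of Lemma~\ref{lem:aux_matrix_form} and sum the resulting geometric series, but that route is more computational.
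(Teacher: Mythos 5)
Your proof is correct, and it takes a genuinely different route from the paper's. The paper substitutes the closed-form expressions for $\pmat{1 & 0}\A^t$ obtained in Lemma~\ref{lem:aux_matrix_form} (treating the distinct- and repeated-eigenvalue cases separately), then sums the resulting geometric series via Lemma~\ref{lem:aux_geometric_power} and simplifies. You instead use the coordinate-free telescoping identity $\sum_{\tau=0}^{t-1}\A^\tau(\A-\I) = \A^t-\I$, compute $(\A-\I)^{-1}\pmat{1\\0}$ by the adjugate formula, and pass to the measure-zero set $(\mu_1-1)(\mu_2-1)=0$ by polynomial continuity in the entries $(a,b)$. The computations check out: $\det(\A-\I)=1-a-b=(\mu_1-1)(\mu_2-1)$, the adjugate of $\A-\I$ maps $\pmat{1\\0}$ to $\pmat{-1\\-1}$, and the algebra is routine from there. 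Your route is arguably cleaner and avoids the case split on discriminant sign; what the paper's route buys is uniformity with the rest of the appendix, which relies heavily on the explicit $a_t,b_t$ formulas of Lemma~\ref{lem:aux_matrix_form} anyway, so the authors get this lemma essentially for free once those formulas are in hand. Both are valid; yours is self-contained and arguably the better proof of this particular statement in isolation.
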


\begin{proof}
When $\mu_1$ and $\mu_2$ are distinct, we have:
\begin{equation*}
\pmat{1 & 0 } \A^t =
\left(\frac{\mu_1^{t+1} - \mu_2^{t+1}}{\mu_1 - \mu_2}, \quad - \frac{\mu_1\mu_2(\mu_1^t - \mu_2^t)}{\mu_1 - \mu_2}\right).
\end{equation*}
When $\mu_1, \mu_2$ are repeated, we have:
\begin{equation*}
\pmat{1 & 0 } \A^t =
\left((t+1)\mu_1^t, \quad -t \mu_1^{t+1}\right).
\end{equation*}
The remainder of the proof follows from Lemma \ref{lem:aux_geometric_power} 
and linear algebra.
\end{proof}

\noindent
The next lemma tells us when the eigenvalues of the~\nag~matrix are real and when they are complex.
\begin{lemma}\label{lem:aux_eigenvalues}
Let $\theta \in (0, \frac{1}{4}]$, $\x \in [-\frac{1}{4}, \frac{1}{4}]$ and
define the $2\times 2$ matrix $\A$ as follows:
\begin{equation*}
\A = \pmat{(2-\theta) (1 - x)&  -(1-\theta) (1 - x) \\ 1 & 0}
\end{equation*}
Then the two eigenvalues $\mu_1$ and $\mu_2$ of $\A$ are solutions of the
following equation:
\begin{equation*}
\mu^2 - (2-\theta)(1-x)\mu + (1-\theta)(1-x) = 0.
\end{equation*}
Moreover, when $x \in [-\frac{1}{4}, \frac{\theta^2}{(2-\theta)^2}]$, 
$\mu_1$ and $\mu_2$ are real numbers, and when
$x \in (\frac{\theta^2}{(2-\theta)^2}, \frac{1}{4}]$, 
$\mu_1$ and $\mu_2$ are conjugate complex numbers.
\end{lemma}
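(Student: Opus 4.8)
The plan is to read off the characteristic polynomial directly from the trace and determinant of $\A$, and then to analyze the sign of its discriminant as a function of $x$.

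First I would note that $\A$ has the form $\pmat{a & b \\ 1 & 0}$ with $a = (2-\theta)(1-x)$ and $b = -(1-\theta)(1-x)$, so $\tr \A = (2-\theta)(1-x)$ and $\det \A = -b = (1-\theta)(1-x)$. The eigenvalues are therefore the roots of $\mu^2 - (\tr\A)\mu + \det\A = \mu^2 - (2-\theta)(1-x)\mu + (1-\theta)(1-x) = 0$, which is the first claim.

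For the real-versus-complex dichotomy I would examine the discriminant
$$\Delta \defeq (2-\theta)^2(1-x)^2 - 4(1-\theta)(1-x) = (1-x)\bigl[(2-\theta)^2(1-x) - 4(1-\theta)\bigr].$$
Since $x \in [-\tfrac14,\tfrac14]$ we have $1-x > 0$, so the sign of $\Delta$ equals the sign of the bracketed factor. The one identity to verify is the elementary $(2-\theta)^2 - 4(1-\theta) = \theta^2$, from which $(2-\theta)^2(1-x) - 4(1-\theta) = \theta^2 - (2-\theta)^2 x$. Hence $\Delta \ge 0$ precisely when $x \le \theta^2/(2-\theta)^2$ and $\Delta < 0$ otherwise. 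When $\Delta \ge 0$, the real quadratic has real roots $\mu_1,\mu_2$; when $\Delta < 0$, since the quadratic has real coefficients, its roots form a conjugate complex pair. (As a sanity check, $\theta \le \tfrac14$ forces $2-\theta \ge \tfrac74$, hence $\theta^2/(2-\theta)^2 < \tfrac14$, so the threshold genuinely lies inside the stated interval for $x$ and neither case is vacuous.)

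There is no substantive obstacle here: the computation is routine. The only points requiring a little care are tracking the sign of $1-x$ (which is exactly why the hypothesis $x \le \tfrac14$ is needed so that the bracketed factor controls the sign of $\Delta$) and the algebraic simplification $(2-\theta)^2 - 4(1-\theta) = \theta^2$; everything else is bookkeeping.
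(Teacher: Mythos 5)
Your proof is correct and follows essentially the same route as the paper: read off the characteristic polynomial, compute the discriminant $\Delta=(2-\theta)^2(1-x)^2-4(1-\theta)(1-x)$, and observe that since $1-x>0$ its sign is governed by $\theta^2-(2-\theta)^2x$, giving the stated threshold at $x=\theta^2/(2-\theta)^2$. You actually spell out the simplification $(2-\theta)^2-4(1-\theta)=\theta^2$ more explicitly than the paper, whose printed factor $(2-\theta^2)$ in the discriminant is a typo for $(2-\theta)^2$.
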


\begin{proof}
An eigenvalue $\mu$ of the matrix $\A$ must satisfy the following equation:
\begin{align*}
\det (\A - \mu \I) = \mu^2 - (2-\theta)(1-x)\mu + (1-\theta)(1-x) = 0.
\end{align*}
The discriminant is equal to
\begin{align*}
\Delta = &(2-\theta)^2(1-x)^2 - 4(1-\theta)(1-x) \\
= &(1-x)(\theta^2 - (2-\theta^2)x).
\end{align*}
Then $\mu_1$ and $\mu_2$ are real if and only if $\Delta \ge 0$, 
which finishes the proof.
\end{proof}

Finally, we need a simple lemma for geometric sums.
\begin{lemma}\label{lem:aux_geometric_power}
For any $\lambda >0$ and fixed $t$, we have:
\begin{equation*}
 \sum_{\tau = 0}^{t-1} (\tau +1) \lambda^\tau =
 \frac{1-\lambda^t}{(1-\lambda)^2} - \frac{t\lambda^t}{1-\lambda}.
 \end{equation*} 
\end{lemma}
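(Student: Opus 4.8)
The plan is to establish this closed-form identity by the standard device of multiplying the sum by $(1-\lambda)$ and telescoping; the statement tacitly requires $\lambda\neq 1$ so that the right-hand side is well-defined (in the applications $\lambda$ is always bounded away from $1$), and I would note this at the outset.

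First I would set $S \defeq \sum_{\tau=0}^{t-1}(\tau+1)\lambda^\tau$, multiply by $(1-\lambda)$, and reindex the second of the two resulting sums, obtaining
\[
(1-\lambda)S = \sum_{\tau=0}^{t-1}(\tau+1)\lambda^\tau - \sum_{\tau=1}^{t}\tau\lambda^\tau .
\]
Matching terms by powers of $\lambda$: the coefficient of $\lambda^0$ is $1$, the coefficient of $\lambda^\tau$ for $1\le\tau\le t-1$ is $(\tau+1)-\tau = 1$, and the coefficient of $\lambda^t$ is $-t$. Hence
\[
(1-\lambda)S = \sum_{\tau=0}^{t-1}\lambda^\tau - t\lambda^t = \frac{1-\lambda^t}{1-\lambda} - t\lambda^t,
\]
using the ordinary geometric-sum formula. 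Dividing through by $(1-\lambda)$ yields $S = \frac{1-\lambda^t}{(1-\lambda)^2} - \frac{t\lambda^t}{1-\lambda}$, which is exactly the claim.

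Alternatively, the same result follows by differentiating the identity $\sum_{\tau=0}^{t-1}\lambda^{\tau+1} = \lambda\,\frac{1-\lambda^t}{1-\lambda}$ with respect to $\lambda$ (the left side becomes $S$), or by a one-line induction on $t$ (the base case $t=1$ reads $1=1$, and the inductive step adds $(t+1)\lambda^t$ to both sides and simplifies). There is no genuine obstacle here; the only point needing a little care is the bookkeeping of the endpoint terms $\lambda^0$ and $\lambda^t$ in the telescoped difference, together with the implicit restriction $\lambda\neq 1$.
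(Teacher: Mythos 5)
Your telescoping argument is correct and complete. The paper's own proof instead differentiates the truncated geometric series $\sum_{\tau=0}^{t-1}\lambda^{\tau+1}=\lambda\cdot\frac{1-\lambda^t}{1-\lambda}$ with respect to $\lambda$, which is the ``alternative'' route you mention in passing. Both are one-step elementary verifications; your telescoping avoids the calculus and the slight product-rule bookkeeping that the derivative requires, while the paper's derivative argument is a bit more compact to state. You are also right to flag the implicit hypothesis $\lambda\neq 1$: the statement as written allows $\lambda=1$ (since it only says $\lambda>0$), in which case the right-hand side is undefined, whereas the left-hand side equals $t(t+1)/2$; the lemma is only ever invoked with $\lambda=(1-\theta)^{1/2}<1$, so this is harmless, but noting it is good hygiene that the paper omits.
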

\begin{proof}
Consider the truncated geometric series:
\begin{equation*}
 \sum_{\tau = 0}^{t-1} \lambda^\tau =
 \frac{1-\lambda^t}{1-\lambda}.
\end{equation*}
Taking derivatives, we have:
\begin{equation*}
 \sum_{\tau = 0}^{t-1} (\tau +1) \lambda^\tau = \frac{\mathrm{d}}{\mathrm{d}\lambda}\sum_{\tau = 0}^{t-1} \lambda^{\tau+1} =
 \frac{\mathrm{d}}{\mathrm{d}\lambda}\left[\lambda \cdot\frac{1-\lambda^t}{1-\lambda}\right]
 =\frac{1-\lambda^t}{(1-\lambda)^2} - \frac{t\lambda^t}{1-\lambda}.
\end{equation*}
\end{proof}

\subsection{Large-gradient scenario (nonconvex component)}
All the lemmas in this section are concerned with the behavior 
of the~\nag~matrix for eigen-directions of the Hessian with 
eigenvalues being negative or small and positive, as used 
in proving Lemma \ref{lem:largegrad_nonconvex}.
The following lemma bounds the smallest eigenvalue of 
the~\nag~matrix for those directions.
\begin{lemma}\label{lem:aux_nonconvex_mu2}
Under the same setting as Lemma \ref{lem:aux_eigenvalues}, 
and for $x \in [-\frac{1}{4}, \frac{\theta^2}{(2-\theta)^2}]$, 
where $\mu_1 \ge \mu_2$, we have:
\begin{equation*}
\mu_2 \le 1 - \frac{1}{2}\max\{\theta, \sqrt{|x|}\}.
\end{equation*}
\end{lemma}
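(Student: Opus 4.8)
The plan is to work directly with the characteristic polynomial $q(\mu) \defeq \mu^2 - (2-\theta)(1-x)\mu + (1-\theta)(1-x)$, whose roots are the eigenvalues $\mu_1 \ge \mu_2$; by Lemma~\ref{lem:aux_eigenvalues} these are real precisely on the interval $x \in [-\tfrac14, \tfrac{\theta^2}{(2-\theta)^2}]$ under consideration, so $\mu_2$ is a well-defined real number. Write $b \defeq (2-\theta)(1-x)$ for the sum of the roots, so that the upward parabola $q$ has its vertex at $\mu = b/2$, and in particular $\mu_2 \le b/2$ always. Consequently, setting $\mu^\star \defeq 1 - \tfrac12 \max\{\theta, \sqrt{|x|}\}$, the desired bound $\mu_2 \le \mu^\star$ follows as soon as we verify \emph{either} $\mu^\star \ge b/2$ (then $\mu_2 \le b/2 \le \mu^\star$) \emph{or} $q(\mu^\star) \le 0$ (then $\mu^\star$ lies between the two roots). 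The one algebraic identity I would lean on throughout is $q(1) = x$, which — since $q$ is monic of degree two — upgrades to the exact expansion $q(\mu) = x + \big(\theta + (2-\theta)x\big)(\mu - 1) + (\mu-1)^2$.

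I would then split on the sign of $x$. When $x \ge 0$, the constraint $x \le \theta^2/(2-\theta)^2$ gives $\sqrt{x} \le \theta/(2-\theta) \le \theta$ (using $\theta \le \tfrac14$), so $\max\{\theta,\sqrt{|x|}\} = \theta$ and $\mu^\star = 1 - \theta/2$; meanwhile $b/2 = 1 - \theta/2 - (2-\theta)x/2 \le 1 - \theta/2 = \mu^\star$, so the vertex bound alone closes this case. When $x < 0$ the vertex bound is too weak, and I would instead show $q(\mu^\star) \le 0$ by substituting $\mu^\star - 1 = -m/2$ (with $m \defeq \max\{\theta,\sqrt{|x|}\}$) into the expansion above, obtaining $q(\mu^\star) = x - \tfrac m2\big(\theta + (2-\theta)x\big) + \tfrac{m^2}{4}$. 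If $m = \theta$ this collapses to $x\cdot\frac{1+(1-\theta)^2}{2} - \frac{\theta^2}{4}$, which is negative since $x < 0$. If instead $m = \sqrt{|x|} > \theta$, I substitute $x = -m^2$, factor out the positive quantity $m$, and reduce the claim to $\tfrac{(2-\theta)}{2} m^2 \le \tfrac34 m + \tfrac\theta2$; bounding $2 - \theta \le 2$, it suffices that $m^2 \le \tfrac34 m + \tfrac\theta2$, which holds because $m \le \tfrac12$ forces $m^2 \le \tfrac34 m$.

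The only delicate point — and the step I would be most careful about — is this last subcase, where a cubic-in-$m$ term ($\tfrac{(2-\theta)}{2}m^3$ before factoring) competes with the quadratic and linear contributions; here it is essential to use both $m \le \tfrac12$ (from $|x| \le \tfrac14$) and $\theta \le \tfrac14$, and to track signs when dividing through by $m$. Everything else is routine bookkeeping: confirming $\mu_2 \le b/2$, the monic-quadratic Taylor identity, and the elementary inequality $\sqrt{x} \le \theta/(2-\theta)$ on the relevant range. Assembling the two cases yields $\mu_2 \le \mu^\star = 1 - \tfrac12\max\{\theta,\sqrt{|x|}\}$, as claimed.
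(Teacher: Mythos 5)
Your proof is correct and amounts to the same computation as the paper's: the expansion $q(\mu) = x + (\theta + (2-\theta)x)(\mu-1) + (\mu-1)^2$ is exactly the paper's substitution $\mu = 1+u$, your case split at $m=\theta$ versus $m=\sqrt{|x|}$ for $x<0$ reproduces the paper's split at $x = -\theta^2$, and your vertex argument for $x\ge 0$ is identical to the paper's ``midpoint of the two roots'' calculation. The only cosmetic difference is that for $|x|>\theta^2$ you retain the $-\tfrac{m\theta}{2}$ term before invoking $m\le\tfrac12$, whereas the paper discards it earlier; both close the bound the same way.
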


\begin{proof}
The eigenvalues satisfy:
\begin{align*}
\det (\A - \mu \I) = \mu^2 - (2-\theta)(1-x)\mu + (1-\theta)(1-x) = 0.
\end{align*}
Let $\mu = 1+u$.  We have 
\begin{align*}
&& (1+u)^2 - (2-\theta)(1-x)(1+u) + (1-\theta)(1-x) &= 0 \\
\Rightarrow  && u^2 +  ((1-x)\theta + 2x) u + x&= 0.
\end{align*}
Let $f(u) = u^2 + \theta u + 2xu - x\theta u + x$.  
To prove $\mu_2(\A) \le 1 - \frac{\sqrt{|x|}}{2}$ 
when $x\in [-\frac{1}{4}, -\theta^2]$, we only need to verify $f(-\frac{\sqrt{|x|}}{2}) \le 0$:
\begin{align*}
f(-\frac{\sqrt{|x|}}{2}) = &\frac{|x|}{4} - \frac{\theta\sqrt{|x|}}{2}
+ |x|\sqrt{|x|} - \frac{|x|\sqrt{|x|}\theta}{2} - |x| \\
\le & |x|\sqrt{|x|}(1-\frac{\theta}{2}) - \frac{3|x|}{4} \le 0
\end{align*}
The last inequality follows because $|x| \le \frac{1}{4}$ by assumption.

For $x\in [-\theta^2, 0]$, we have:
\begin{align*}
f(-\frac{\theta}{2}) = \frac{\theta^2}{4} -\frac{\theta^2}{2} - x\theta
+ \frac{x\theta^2}{2} + x
= -\frac{\theta^2}{4} + x(1-\theta) + \frac{x\theta^2}{2} \le 0.
\end{align*}
On the other hand, when $x \in [0, \theta^2/(2-\theta)^2]$, both 
eigenvalues are still real, and the midpoint of the two roots is:
\begin{align*}
\frac{u_1 + u_2}{2} = -\frac{(1-x)\theta + 2x}{2}
=-\frac{\theta + (2-\theta)x}{2}
\le -\frac{\theta}{2}.
\end{align*}
Combining the two cases, we have shown that when 
$x \in [-\theta^2, \theta^2/(2-\theta)^2]$ we have 
$\mu_2 (\A) \le 1-\frac{\theta}{2}$.

In summary, we have proved that
\begin{equation*}
\mu_2(\A) \le 
\begin{cases}
1 - \frac{\sqrt{|x|}}{2}, & x \in  [-\frac{1}{4}, -\theta^2]\\
1-\frac{\theta}{2}.& x \in  [-\theta^2, \theta^2/(2-\theta)^2],
\end{cases}
\end{equation*}
which finishes the proof.
\end{proof}
\noindent
In the same setting as above, the following lemma bounds the largest eigenvalue.
\begin{lemma}\label{lem:aux_nonconvex_mu1}
Under the same setting as Lemma \ref{lem:aux_eigenvalues}, and with
$x \in [-\frac{1}{4}, \frac{\theta^2}{(2-\theta)^2}]$, and letting
$\mu_1 \ge \mu_2$, we have:
\begin{equation*}
\mu_1 \le 1 + 2 \min\{\frac{|x|}{\theta}, \sqrt{|x|}\}.
\end{equation*}
\end{lemma}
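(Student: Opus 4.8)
The plan is to mimic the proof of Lemma~\ref{lem:aux_nonconvex_mu2}. Substituting $\mu = 1+u$ into the characteristic equation $\mu^2 - (2-\theta)(1-x)\mu + (1-\theta)(1-x) = 0$ gives, exactly as computed there,
\begin{equation*}
f(u) \defeq u^2 + \big((1-x)\theta + 2x\big)\,u + x = 0.
\end{equation*}
By Lemma~\ref{lem:aux_eigenvalues}, in the range $x \in [-\tfrac14, \theta^2/(2-\theta)^2]$ both roots are real; call them $u_1 \ge u_2$, so that $\mu_1 = 1 + u_1$. The key elementary fact I will use is that, since $f$ is an upward-opening parabola, any $c$ satisfying both $f(c) \ge 0$ and $c \ge -\tfrac12\big((1-x)\theta+2x\big)$ (the abscissa of the vertex) must lie weakly to the right of both roots, hence $u_1 \le c$. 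The whole proof then reduces to choosing $c = 2|x|/\theta$ or $c = 2\sqrt{|x|}$ appropriately and verifying these two inequalities.

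First I would note that $\min\{|x|/\theta, \sqrt{|x|}\}$ equals $|x|/\theta$ when $|x| \le \theta^2$ and equals $\sqrt{|x|}$ when $|x| \ge \theta^2$, and then split into three cases. In the range $x \in [0, \theta^2/(2-\theta)^2]$ the roots satisfy $u_1 u_2 = x \ge 0$ and $u_1 + u_2 = -\big((1-x)\theta + 2x\big) < 0$, so both are nonpositive and $u_1 \le 0 \le c$ with nothing to prove. For $x \in [-\theta^2, 0)$ I would take $c = 2|x|/\theta$: expanding $f(c)$ and using $\theta < 1$ collapses it to $\tfrac{4x^2(1-\theta)}{\theta^2} + 2x^2 - x$, a sum of nonnegative terms, while the vertex is $\le 0 \le c$ because $x \ge -\theta^2 \ge -\theta/(2-\theta)$ (the last inequality being equivalent to $(1-\theta)^2 \ge 0$). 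For $x \in [-\tfrac14, -\theta^2)$ I would take $c = 2\sqrt{|x|}$, so $c^2 = -4x$; dropping the nonnegative term $(1-x)\theta$ when bounding the linear coefficient gives $f(c) \ge -3x + 4x\sqrt{-x} = -x\,(3 - 4\sqrt{-x}) \ge 0$ since $\sqrt{-x} \le \tfrac12$, and the vertex is at most $|x| \le 2\sqrt{|x|} = c$ (using $2-\theta \le 2$ and $|x| \le \tfrac14$). In every case the parabola observation gives $u_1 \le c$, which is the claim.

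The one genuinely delicate step is the regime $x \in [-\tfrac14, -\theta^2)$: there the linear coefficient $(1-x)\theta + 2x$ of $f$ may be negative, so the term $c\big((1-x)\theta + 2x\big)$ works against us, and one must check that the positive contribution $-3x$ still dominates and that $c = 2\sqrt{|x|}$ remains to the right of the vertex. Both of these rely crucially on the a priori bound $|x| \le \tfrac14$ from the hypothesis $x \in [-\tfrac14,\tfrac14]$. The remaining work is routine sign-chasing; as a sanity check I would also verify that at the boundary $x = -\theta^2$ the two candidate bounds agree, since there $2|x|/\theta = 2\theta = 2\sqrt{|x|}$, so the case split is seamless.
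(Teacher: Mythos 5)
Your argument is correct, but it takes a different (and more laborious) route than the paper. The paper's proof of Lemma~\ref{lem:aux_nonconvex_mu1} is a one-liner: by Vieta's formulas on the characteristic polynomial, $(\mu_1-1)(\mu_2-1) = \mu_1\mu_2 - (\mu_1+\mu_2) + 1 = (1-\theta)(1-x) - (2-\theta)(1-x) + 1 = x$, and then plugging in the already-established bound $1-\mu_2 \ge \tfrac12\max\{\theta,\sqrt{|x|}\}$ from Lemma~\ref{lem:aux_nonconvex_mu2} gives $\mu_1 - 1 = |x|/(1-\mu_2) \le 2\min\{|x|/\theta, \sqrt{|x|}\}$ when $x\le 0$ (and $\mu_1 \le 1$ trivially when $x\ge 0$, since both roots of the shifted polynomial are then nonpositive). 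You instead re-run the substitution-and-parabola argument of Lemma~\ref{lem:aux_nonconvex_mu2} from scratch: you verify $f(c)\ge 0$ and that $c$ lies to the right of the vertex, for $c = 2|x|/\theta$ or $c=2\sqrt{|x|}$ in the two regimes. I checked the case computations and they are all sound (the $x\in[0,\theta^2/(2-\theta)^2]$ case is vacuous since both roots are nonpositive; for $x\in[-\theta^2,0)$ the algebra collapses $f(c)$ to a sum of nonnegative terms and the vertex is nonpositive via $(1-\theta)^2\ge 0$; for $x\in[-\tfrac14,-\theta^2)$ the bound $f(c)\ge -x(3-4\sqrt{-x})$ holds because $|x|\le\tfrac14$, and the vertex bound $u^*\le|x|\le 2\sqrt{|x|}$ also holds). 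The trade-off: the Vieta route is much shorter and illuminates a symmetry the paper exploits across both this lemma and Lemma~\ref{lem:aux_negcurve_mu1}, while your calculation is self-contained and does not presuppose Lemma~\ref{lem:aux_nonconvex_mu2}.
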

\begin{proof}
By Lemma \ref{lem:aux_eigenvalues} and Vieta's formula， we have:
$$(\mu_1 - 1)(\mu_2-1) = \mu_1\mu_2 - (\mu_1 + \mu_2) + 1 = x.$$
An application of Lemma \ref{lem:aux_nonconvex_mu2} finishes the proof.
\end{proof}
The following lemma establishes some properties of the powers of the~\nag~matrix.
\begin{lemma}\label{lem:aux_nonconvex_inequal}
Consider the same setting as Lemma \ref{lem:aux_eigenvalues}, and let
$x \in [-\frac{1}{4}, \frac{\theta^2}{(2-\theta)^2}]$.
Denote:
\begin{equation*}
(a_t, ~-b_t) = \pmat{1 & 0 } \A^t.
\end{equation*}
Then, for any $t \ge \frac{2}{\theta} + 1$, we have:
\begin{align*}
\sum_{\tau = 0}^{t-1} a_\tau \ge& \Omega (\frac{1}{\theta^2}) \\
\frac{1}{b_t}\left(\sum_{\tau = 0}^{t-1} a_\tau \right) \ge& \Omega(1)\min\left\{\frac{1}{\theta}, \frac{1}{\sqrt{|x|}}\right\}.
\end{align*}
\end{lemma}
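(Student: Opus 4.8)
The plan is to reduce both statements to estimates on power sums of the two eigenvalues $\mu_1\ge\mu_2$ of $\A$, and then to treat the near‑degenerate regime ($|x|$ small, both eigenvalues near $1$) separately from the genuinely expanding regime ($x<0$, $\mu_1>1$). First, by Lemma~\ref{lem:aux_eigenvalues} the eigenvalues are real on the stated range of $x$; since $\mu_1\mu_2=(1-\theta)(1-x)>0$ and $\mu_1+\mu_2=(2-\theta)(1-x)>0$, both are positive, so $\mu_1\ge\mu_2>0$, and Vieta gives $(\mu_1-1)(\mu_2-1)=x$. Lemma~\ref{lem:aux_matrix_form} then identifies $a_t=\sum_{i=0}^t\mu_1^i\mu_2^{t-i}\ge 0$ and $b_t=\mu_1\mu_2\,a_{t-1}$, so that $\sum_{\tau=0}^{t-1}a_\tau=\sum_{i,j\ge 0,\ i+j\le t-1}\mu_1^i\mu_2^j$. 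From Lemmas~\ref{lem:aux_nonconvex_mu2}--\ref{lem:aux_nonconvex_mu1} I record: $\mu_1\ge\sqrt{\mu_1\mu_2}\ge 1-\theta$; $\mu_1\mu_2=(1-\theta)(1-x)\le\tfrac54$; $1-\mu_2\ge\tfrac12\max\{\theta,\sqrt{|x|}\}$; $\mu_1-1\le 2\sqrt{|x|}$; and, when $x<-\theta^2$, $\mu_1-\mu_2\le 4\sqrt{|x|}$ (from the discriminant formula of Lemma~\ref{lem:aux_eigenvalues}, using $\theta<\sqrt{|x|}$ and $|x|\le\tfrac14$).

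\textbf{First inequality.} I would split on $x$. If $-\theta^2\le x\le\tfrac{\theta^2}{(2-\theta)^2}$ (which covers all $x\ge 0$), one checks $\mu_1,\mu_2\ge 1-2\theta$; restricting the triangular sum to $\{i\le\lfloor 1/\theta\rfloor,\ j\le\lfloor 1/\theta\rfloor\}$---a valid subset because $t-1\ge 2/\theta\ge 2\lfloor 1/\theta\rfloor$---factors it as $\big(\sum_{i\le\lfloor 1/\theta\rfloor}\mu_1^i\big)\big(\sum_{j\le\lfloor 1/\theta\rfloor}\mu_2^j\big)$, and each factor is $\ge(\lfloor 1/\theta\rfloor+1)(1-2\theta)^{1/\theta}=\Omega(1/\theta)$, giving $\Omega(1/\theta^2)$. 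If $x<-\theta^2$ the eigenvalues are distinct and $\mu_1>1$; then $a_\tau=\frac{\mu_1^{\tau+1}-\mu_2^{\tau+1}}{\mu_1-\mu_2}\ge\frac{\mu_1^{\tau+1}-1}{\mu_1-\mu_2}$, so with $T_0=\lfloor 2/\theta\rfloor\le t$ and Bernoulli ($\mu_1^k-1\ge k(\mu_1-1)$) we get $\sum_{\tau=0}^{t-1}a_\tau\ge\frac1{\mu_1-\mu_2}\sum_{k=1}^{T_0}(\mu_1^k-1)\ge\frac{T_0^2}{2}\cdot\frac{\mu_1-1}{\mu_1-\mu_2}$; since $\mu_1-1=\frac{|x|}{1-\mu_2}\ge\frac{|x|}{4\sqrt{|x|}}=\frac{\sqrt{|x|}}{4}$ while $\mu_1-\mu_2\le 4\sqrt{|x|}$, the ratio is $\Omega(1)$ and $T_0^2=\Omega(1/\theta^2)$, finishing this case.

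\textbf{Second inequality.} Since $b_t=\mu_1\mu_2\,a_{t-1}\le\tfrac54 a_{t-1}$, it suffices to show $\sum_{\tau=0}^{t-1}a_\tau\ge\Omega(1)\min\{1/\theta,1/\sqrt{|x|}\}\cdot a_{t-1}$. The key refinement is a per‑step ratio bound: from $a_{k+1}=\mu_1 a_k+\mu_2^{k+1}$ and $a_k\ge(k+1)\mu_2^k$ we get $a_{k+1}\le(\mu_1+\tfrac1{k+1})a_k$, hence $a_{k+1}\le\rho_1 a_k$ for $k\ge\lceil 1/\theta\rceil$ with $\rho_1:=\mu_1+\theta\ge 1$. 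Therefore $a_\tau\ge\rho_1^{-(t-1-\tau)}a_{t-1}$ for $\tau\in[\lceil 1/\theta\rceil,t-1]$, and $\sum_{\tau=0}^{t-1}a_\tau\ge a_{t-1}\sum_{k=0}^{N}\rho_1^{-k}$ with $N:=t-1-\lceil 1/\theta\rceil\ge 1/\theta-1$. Using $\rho_1^{N+1}\ge 1+(N+1)(\rho_1-1)$ one obtains $\sum_{k=0}^N\rho_1^{-k}\ge\frac{N+1}{1+(N+1)(\rho_1-1)}\ge\frac12\min\{N+1,\tfrac1{\rho_1-1}\}$; and since $\rho_1-1=(\mu_1-1)+\theta\le 2\sqrt{|x|}+\theta\le 3\max\{\theta,\sqrt{|x|}\}$ while $N+1\ge 1/\theta$, this is $\Omega(1)\min\{1/\theta,1/\sqrt{|x|}\}$, as desired.

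\textbf{Main obstacle.} The difficulty is uniformity over the entire range of $x$ (and $t$): for $|x|\lesssim\theta^2$ both eigenvalues lie within $O(\theta)$ of $1$ and $a_\tau$ rises roughly linearly up to $\Theta(1/\theta)$, whereas for larger $|x|$ with $x<0$ one eigenvalue exceeds $1$ and $a_\tau$ grows geometrically---so the crude bounds (e.g.\ $a_\tau\ge\mu_1^\tau$, or $\sum_\tau a_\tau\ge a_{t-1}$) each lose exactly the $1/\theta$ or $1/\sqrt{|x|}$ factor that the statement requires. The two sharper devices above---the factorization of the triangular sum in the near‑degenerate case, and the refined ratio bound $a_{k+1}/a_k\le\mu_1+1/(k+1)$ for the second inequality---are what make the argument go through; the remaining work is bookkeeping, chiefly tracking which of $\theta$ and $\sqrt{|x|}$ dominates in each regime.
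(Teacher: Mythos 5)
Your proof is correct, but it departs from the paper's argument in several places, so a comparison is worthwhile. For the first inequality the paper does not split cases on $x$: it writes $a_\tau=(\mu_1\mu_2)^{\tau/2}\sum_{i=0}^{\tau}(\mu_1/\mu_2)^{\tau/2-i}$, notes that at least $\tau/2$ of the factors $(\mu_1/\mu_2)^{\tau/2-i}$ are $\ge 1$, uses $\mu_1\mu_2=(1-\theta)(1-x)\ge(1-\theta)^2$ uniformly over the stated range, and obtains $a_\tau\ge(1-\theta)^{\tau}\,\tau/2$, which it then sums over $\tau\le 1/\theta$. Your box-factorization of the triangular sum for $|x|\lesssim\theta^2$ and the separate Bernoulli argument for $x<-\theta^2$ both work (I checked the constants, including $\mu_2\ge 1-2\theta$ in the near-degenerate regime and $1-\mu_2\le 4\sqrt{|x|}$ and $\mu_1-\mu_2\le 4\sqrt{|x|}$ for $x<-\theta^2$), but the paper's single-stroke bound is a bit shorter. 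For the second inequality the paper restricts the sum to $\tau\ge(t-1)/2$ and uses the observation that $2\mu_1^{\,t-1-\tau}a_\tau\ge a_{t-1}$ for such $\tau$ (because the larger half of the terms of $a_{t-1}=\sum_j\mu_1^j\mu_2^{t-1-j}$ are captured), then sums the resulting geometric series $\sum_k\mu_1^{-k}$; your argument instead derives the exact recurrence $a_{k+1}=\mu_1 a_k+\mu_2^{k+1}$, pairs it with $a_k\ge(k+1)\mu_2^k$ to get the clean per-step ratio $a_{k+1}\le(\mu_1+\tfrac{1}{k+1})a_k\le(\mu_1+\theta)a_k$ for $k\ge 1/\theta$, and sums a geometric series in $\rho_1^{-1}=(\mu_1+\theta)^{-1}$. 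Both routes hinge on turning $a_\tau/a_{t-1}$ into a geometric tail; your recurrence-based ratio bound is arguably the cleaner device, whereas the paper trades a factor of $2$ for a simpler combinatorial justification. Also note that you did not actually need the case split for the first inequality---your per-step recurrence for the second inequality together with the first-inequality bound on $\sum a_\tau$ are independent of each other, so the paper's unified first-inequality proof composes naturally with either second-inequality argument.
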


\begin{proof}
We prove the two inequalities seperately.

\noindent \textbf{First Inequality:}
By Lemma \ref{lem:aux_matrix_form}:
\begin{align*}
\sum_{\tau = 0}^t \pmat{ 1 & 0 } \A^\tau \pmat{ 1 \\ 0 }
= &\sum_{\tau=0}^t \sum_{i=0}^\tau \mu_1^{\tau-i}\mu_2^{i} 
= \sum_{\tau=0}^t (\mu_1\mu_2)^{\frac{\tau}{2}}\sum_{i=0}^\tau (\frac{\mu_1}{\mu_2})^{\frac{\tau}{2} - i} \\
\ge& \sum_{\tau=0}^t [(1-\theta)(1-x)]^{\frac{\tau}{2}} \cdot \frac{\tau}{2}
\end{align*}
The last inequality holds because in $\sum_{i=0}^\tau (\frac{\mu_1}{\mu_2})^{\frac{\tau}{2}-i}$ at least $\frac{\tau}{2}$ terms are greater than one. 
Finally, since $x \le \theta^2/(2-\theta)^2 \le \theta^2\le \theta$, we have $1-x \ge 1-\theta$, thus:
\begin{align*}
\sum_{\tau=0}^t [(1-\theta)(1-x)]^{\frac{\tau}{2}} \cdot \frac{\tau}{2}
\ge & \sum_{\tau=0}^t (1-\theta)^{\tau} \cdot \frac{\tau}{2}
\ge \sum_{\tau=0}^{1/\theta} (1-\theta)^{\tau} \cdot \frac{\tau}{2}\\
\ge & (1-\theta)^{\frac{1}{\theta}}\sum_{\tau=0}^{1/\theta}  \frac{\tau}{2}
\ge \Omega(\frac{1}{\theta^2}),
\end{align*}
which finishes the proof.

\noindent \textbf{Second Inequality:}
Without loss of generality, assume $\mu_1 \ge \mu_2$. 
Again by Lemma \ref{lem:aux_matrix_form}:
\begin{align*}
\frac{\sum_{\tau = 0}^{t-1} a_\tau}{b_t}
=& \frac{\sum_{\tau = 0}^{t-1} \sum_{i=0}^{\tau} \mu_1^i \mu_2^{\tau-i}}
{\mu_1\mu_2\sum_{i=0}^{t-1} \mu_1^{i} \mu_2^{t-1-i}} 
= \frac{1}{\mu_1\mu_2}
\sum_{\tau = 0}^{t-1} \frac{\sum_{i=0}^{\tau} \mu_1^i \mu_2^{\tau-i}}
{\sum_{i=0}^{t-1} \mu_1^{i} \mu_2^{t-1-i}} \\
\ge & \frac{1}{\mu_1\mu_2}
\sum_{\tau = (t-1)/2}^{t-1} \frac{\sum_{i=0}^{\tau} \mu_1^i \mu_2^{\tau-i}}
{\sum_{i=0}^{t-1} \mu_1^{i} \mu_2^{t-1-i}} 
\ge  \frac{1}{\mu_1\mu_2} \sum_{\tau = (t-1)/2}^{t-1} \frac{1}{2 \mu_1^{t-1-\tau}} \\
= & \frac{1}{2\mu_1\mu_2} \left[1 + \frac{1}{\mu_1} + \cdots + \frac{1}{\mu_1^{(t-1)/2}}\right] 
\ge  \frac{1}{2\mu_1\mu_2} \left[1 + \frac{1}{\mu_1} + \cdots + \frac{1}{\mu_1^{1/\theta}}\right].
\end{align*}
The second-to-last inequality holds because it is easy to check
\begin{equation*}
2 \mu_1^{t-1-\tau}  \sum_{i=0}^{\tau} \mu_1^i \mu_2^{\tau-i} \ge \sum_{i=0}^{t-1} \mu_1^{i} \mu_2^{t-1-i},
\end{equation*}
for any $\tau \ge (t-1)/2$. Finally, by Lemma \ref{lem:aux_nonconvex_mu1}, we have
\begin{equation*}
\mu_1 \le 1 + 2\min \{\frac{|x|}{\theta}, \sqrt{|x|} \}.
\end{equation*}
Since $\mu_1 = \Theta(1)$, $\mu_2 = \Theta(1)$, we have that when $|x| \le \theta^2$, 
\begin{equation*}
\frac{\sum_{\tau = 0}^{t-1} a_\tau}{b_t} 
\ge \Omega(1) \left[1 + \frac{1}{\mu_1} + \cdots + \frac{1}{\mu_1^{1/\theta}}\right]
\ge \Omega(1) \cdot \frac{1}{\theta} \cdot \frac{1}{(1+\theta)^{\frac{1}{\theta}}}
\ge \Omega(\frac{1}{\theta}).
\end{equation*}
When $|x| > \theta^2$, we have:
\begin{equation*}
\frac{\sum_{\tau = 0}^{t-1} a_\tau}{b_t} 
\ge \Omega(1) \left[1 + \frac{1}{\mu_1} + \cdots + \frac{1}{\mu_1^{1/\theta}}\right]
= \Omega(1)
\frac{1 - \frac{1}{\mu_1^{1/\theta + 1}}}{1- \frac{1}{\mu_1}}
=\Omega(\frac{1}{\mu_1 - 1}) = \Omega(\frac{1}{\sqrt{|x|}}).
\end{equation*}
Combining the two cases finishes the proof.
\end{proof}

\subsection{Large-gradient scenario (strongly convex component)}
All the lemmas in this section are concerned with the behavior 
of the~\nag~matrix for eigen-directions of the Hessian with eigenvalues 
being large and positive, as used in proving Lemma \ref{lem:largegrad_convex}.
The following lemma gives eigenvalues of the~\nag~matrix for those directions.

\begin{lemma}\label{lem:aux_convex_rphi}
Under the same setting as Lemma \ref{lem:aux_eigenvalues}, and 
with $x \in (\frac{\theta^2}{(2-\theta)^2}, \frac{1}{4}]$, we have
$\mu_1 = r e^{i\phi}$ and $\mu_2 = r e^{-i\phi}$, where:
\begin{equation*}
r = \sqrt{(1-\theta)(1-x)}, \quad\quad \sin{\phi} = \sqrt{((2-\theta)^2x - \theta^2)(1-x)}/2r.
\end{equation*}
\end{lemma}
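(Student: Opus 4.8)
The plan is to use Vieta's formulas together with the fact, already established in Lemma~\ref{lem:aux_eigenvalues}, that for $x \in (\theta^2/(2-\theta)^2, 1/4]$ the two eigenvalues $\mu_1, \mu_2$ of $\A$ form a conjugate complex pair. Since the characteristic equation is $\mu^2 - (2-\theta)(1-x)\mu + (1-\theta)(1-x) = 0$, Vieta gives $\mu_1\mu_2 = (1-\theta)(1-x)$ and $\mu_1 + \mu_2 = (2-\theta)(1-x)$. Writing the conjugate pair as $\mu_1 = r e^{i\phi}$, $\mu_2 = r e^{-i\phi}$ with $r > 0$ and $\phi \in (0,\pi)$, the product relation reads $r^2 = (1-\theta)(1-x)$, which is strictly positive because $\theta \le 1/4$ and $x \le 1/4$; hence $r = \sqrt{(1-\theta)(1-x)}$, as claimed.

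Next I would extract $\phi$ from the sum relation: $\mu_1 + \mu_2 = 2r\cos\phi = (2-\theta)(1-x)$, so $\cos\phi = (2-\theta)(1-x)/(2r)$. Then $\sin^2\phi = 1 - \cos^2\phi = \big(4r^2 - (2-\theta)^2(1-x)^2\big)/(4r^2)$, and the numerator simplifies using $r^2 = (1-\theta)(1-x)$ to $(1-x)\big[4(1-\theta) - (2-\theta)^2(1-x)\big]$. Expanding $(2-\theta)^2 = 4 - 4\theta + \theta^2$ one checks $4(1-\theta) - (2-\theta)^2 = -\theta^2$, so the bracket equals $(2-\theta)^2 x - \theta^2$, giving $\sin^2\phi = (1-x)\big((2-\theta)^2 x - \theta^2\big)/(4r^2)$. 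Since $\phi \in (0,\pi)$ we may take the positive square root, which yields exactly the stated formula $\sin\phi = \sqrt{((2-\theta)^2 x - \theta^2)(1-x)}/(2r)$.

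Finally I would remark that the quantity under the square root is genuinely nonnegative in this regime: the hypothesis $x > \theta^2/(2-\theta)^2$ forces $(2-\theta)^2 x - \theta^2 > 0$, while $x \le 1/4 < 1$ gives $1-x > 0$; this is consistent with (indeed equivalent to) the complex-eigenvalue conclusion of Lemma~\ref{lem:aux_eigenvalues}. There is no substantive obstacle here — the argument is purely elementary algebra and trigonometry — and the only step requiring a little care is the simplification of the discriminant, namely verifying the identity $4(1-\theta) - (2-\theta)^2 = -\theta^2$, after which the formula for $\sin\phi$ drops out immediately.
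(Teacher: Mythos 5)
Your proposal is correct and follows essentially the same route as the paper: the paper reads $r^2 = \mu_1\mu_2$ off Vieta's formula for the product and obtains $r\sin\phi = \Im(\mu_1) = \sqrt{-\Delta}/2$ directly from the quadratic-formula discriminant $\Delta = (1-x)(\theta^2 - (2-\theta)^2 x)$, whereas you go via $\cos\phi = (\mu_1+\mu_2)/(2r)$ and then $\sin^2\phi = 1-\cos^2\phi$; these are the same computation in a slightly different order, hinging on the identical algebraic identity $4(1-\theta) - (2-\theta)^2 = -\theta^2$.
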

\begin{proof}
By Lemma \ref{lem:aux_eigenvalues}, we know that $\mu_1$ and $\mu_2$ 
are two solutions of 
\begin{equation*}
\mu^2 - (2-\theta)(1-x)\mu + (1-\theta)(1-x) = 0.
\end{equation*}
This gives $r^2 = \mu_1\mu_2 = (1-\theta)(1-x)$. 
On the other hand, discriminant is equal to
\begin{align*}
\Delta =& (2-\theta)^2(1-x)^2 - 4(1-\theta)(1-x) \\
=& (1-x)(\theta^2 - (2-\theta^2)x).
\end{align*}
Since $\Im(\mu_1) = r \sin \phi = \frac{\sqrt{-\Delta}}{2}$, 
the proof is finished.
\end{proof}

Under the same setting as above, the following lemma delineates some 
properties of powers of the~\nag~matrix.
\begin{lemma}\label{lem:aux_convex_entry}
Under the same setting as in Lemma \ref{lem:aux_eigenvalues}, 
and with $x \in (\frac{\theta^2}{(2-\theta)^2}, \frac{1}{4}]$, denote:
\begin{equation*}
(a_t, ~-b_t) = \pmat{1 & 0 } \A^t.
\end{equation*}
Then, for any $t\ge 0$, we have:
\begin{equation*}
\max\{|a_t|, ~|b_t|\} \le (t+1) (1-\theta)^{\frac{t}{2}}.
\end{equation*}
\end{lemma}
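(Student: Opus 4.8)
The plan is to combine the explicit formula for powers of $\A$ from Lemma~\ref{lem:aux_matrix_form} with the polar representation of its eigenvalues from Lemma~\ref{lem:aux_convex_rphi}, and then apply the triangle inequality. First I would invoke Lemma~\ref{lem:aux_matrix_form}, which gives $a_t = \sum_{i=0}^t \mu_1^i \mu_2^{t-i}$ and $b_t = \mu_1\mu_2\sum_{i=0}^{t-1}\mu_1^i\mu_2^{t-1-i}$, where $\mu_1,\mu_2$ are the two eigenvalues of $\A$. Since we are in the regime $x \in (\theta^2/(2-\theta)^2, 1/4]$, Lemma~\ref{lem:aux_eigenvalues} guarantees that $\mu_1,\mu_2$ are conjugate complex numbers, so Lemma~\ref{lem:aux_convex_rphi} applies and we may write $\mu_1 = re^{i\phi}$, $\mu_2 = re^{-i\phi}$ with $r = \sqrt{(1-\theta)(1-x)}$.

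The key observation is then that every summand has controlled modulus: $|\mu_1^i\mu_2^{t-i}| = r^t$ for all $0\le i\le t$. Hence by the triangle inequality $|a_t| \le (t+1)r^t$, since the sum has $t+1$ terms. Likewise $|b_t| = r^2\big|\sum_{i=0}^{t-1}\mu_1^i\mu_2^{t-1-i}\big| \le r^2\cdot t\cdot r^{t-1} = t\,r^{t+1}$.

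It remains only to bound $r$. Since $x \ge \theta^2/(2-\theta)^2 > 0$ we have $1-x \le 1$, so $r = \sqrt{(1-\theta)(1-x)} \le \sqrt{1-\theta}$, giving $r^t \le (1-\theta)^{t/2}$ and $r^{t+1} \le (1-\theta)^{(t+1)/2} \le (1-\theta)^{t/2}$ (using $1-\theta \le 1$). Combining, $|a_t| \le (t+1)(1-\theta)^{t/2}$ and $|b_t| \le t(1-\theta)^{t/2} \le (t+1)(1-\theta)^{t/2}$, which yields the claimed bound on $\max\{|a_t|,|b_t|\}$.

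This argument is essentially mechanical; there is no genuine obstacle. The only points requiring a moment of care are (i) confirming that the hypothesis $x > \theta^2/(2-\theta)^2$ does place us in the complex-eigenvalue case so that the polar form is available, and (ii) noting that the crude bound $1-x\le 1$ suffices — one does not need to track the $(1-x)$ factor, only the $(1-\theta)$ contraction, which is what drives the geometric decay.
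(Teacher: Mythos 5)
Your proof is correct and follows essentially the same route as the paper: invoke Lemma~\ref{lem:aux_matrix_form} for the explicit sum, use Lemma~\ref{lem:aux_convex_rphi} to see that every summand has modulus $r^t$ (resp.\ $r^{t+1}$), apply the triangle inequality, and bound $r \le \sqrt{1-\theta}$ by discarding the $(1-x)$ factor. Nothing to add.
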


\begin{proof}
By Lemma \ref{lem:aux_matrix_form} and Lemma \ref{lem:aux_convex_rphi}, 
using $|\cdot|$ to denote the magnitude of a complex number, we have:
\begin{align*}
|a_t| =& \left|\sum_{i=0}^t \mu_1^i \mu_2^{t-i}\right|
\le \sum_{i=0}^t |\mu_1^i \mu_2^{t-i}| = (t+1)r^{t} \le (t+1)(1-\theta)^{\frac{t}{2}} \\
|b_t| =& \left|\mu_1\mu_2\sum_{i=0}^{t-1} \mu_1^{i} \mu_2^{t-1-i}\right|
\le \sum_{i=0}^{t-1} |\mu_1^{i+1}\mu_2^{t-i}|
\le t r^{t+1} \le t (1-\theta)^{\frac{t+1}{2}}.
\end{align*}
Reorganizing these two equations finishes the proof.
\end{proof}

The following is a technical lemma which is useful in bounding the change in 
the Hessian by the amount of oscillation in the iterates.
\begin{lemma}\label{lem:aux_convex_trigonometry}
Under the same setting as Lemma \ref{lem:aux_convex_rphi}, for any $T\ge 0$, any sequence $\{\epsilon_t\}$, and any $\varphi_0 \in [0, 2\pi]$:
\begin{equation*}
\sum_{t=0}^{T} r^t \sin(\phi t + \varphi_0) \epsilon_t
\le  O(\frac{1}{\sin\phi}) \left(|\epsilon_0|+  \sum_{t=1}^T |\epsilon_t - \epsilon_{t-1}|\right).
\end{equation*}
\end{lemma}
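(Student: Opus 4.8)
The plan is to prove the bound by summation by parts (Abel summation), which reduces the claim to a uniform estimate on the partial sums
\[
S_s \defeq \sum_{\tau=0}^{s} r^\tau \sin(\phi \tau + \varphi_0), \qquad S_{-1} \defeq 0 .
\]
First I would show that $|S_s| \le O(1/\sin\phi)$ for every $s\ge 0$. Writing $r^\tau \sin(\phi\tau+\varphi_0) = \mathrm{Im}\big(e^{i\varphi_0}(re^{i\phi})^\tau\big)$ and summing the finite geometric series gives
\[
S_s = \mathrm{Im}\!\left(e^{i\varphi_0}\,\frac{1-(re^{i\phi})^{s+1}}{1-re^{i\phi}}\right),
\]
so that $|S_s| \le (1+r^{s+1})/|1-re^{i\phi}| \le 2/|1-re^{i\phi}|$. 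The denominator is controlled via
\[
|1-re^{i\phi}|^2 = (1-r)^2 + 2r(1-\cos\phi) \ge 2r(1-\cos\phi) \ge r\sin^2\phi,
\]
using $1-\cos\phi \ge \tfrac12\sin^2\phi$. Since $\theta\le \tfrac14$ and $x\le\tfrac14$, Lemma \ref{lem:aux_convex_rphi} gives $r=\sqrt{(1-\theta)(1-x)}\ge \tfrac34$, hence $|1-re^{i\phi}| \ge \tfrac{\sqrt3}{2}\sin\phi$ and $|S_s| \le 4/(\sqrt3\,\sin\phi) = O(1/\sin\phi)$.

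Next I would apply Abel summation. With $c_\tau \defeq r^\tau \sin(\phi\tau+\varphi_0) = S_\tau - S_{\tau-1}$,
\[
\sum_{t=0}^{T} c_t \epsilon_t = \sum_{t=0}^{T}(S_t - S_{t-1})\epsilon_t = S_T \epsilon_T + \sum_{t=0}^{T-1} S_t(\epsilon_t - \epsilon_{t+1}).
\]
Taking absolute values and inserting the uniform bound $\max_s |S_s| = O(1/\sin\phi)$ yields
\[
\left|\sum_{t=0}^{T} r^t \sin(\phi t + \varphi_0)\,\epsilon_t\right| \le O\!\left(\frac{1}{\sin\phi}\right)\left(|\epsilon_T| + \sum_{t=1}^{T}|\epsilon_t - \epsilon_{t-1}|\right),
\]
and finally the triangle inequality $|\epsilon_T| \le |\epsilon_0| + \sum_{t=1}^{T}|\epsilon_t - \epsilon_{t-1}|$ absorbs the $|\epsilon_T|$ term, giving the stated estimate (the bound on the absolute value clearly implies the one-sided inequality in the statement).

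The only mildly delicate step is the lower bound $|1-re^{i\phi}| \gtrsim \sin\phi$; everything else (the geometric sum, the Abel rearrangement, the triangle inequality) is routine bookkeeping. I expect no real obstacle here, since $r$ is bounded away from both $0$ and $1$ in the regime of interest, so the singularity of $1/(1-re^{i\phi})$ is governed purely by how close $\phi$ is to $0$, i.e. by $\sin\phi$.
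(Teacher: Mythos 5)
Your proof is correct, and it takes a genuinely different and cleaner route than the paper's. The paper partitions the sum into approximate periods of length $\tau = \lfloor 2\pi/\phi \rfloor$, decomposes $\epsilon_t = \epsilon_{j\tau} + (\epsilon_t - \epsilon_{j\tau})$ inside each period, and then bounds two terms separately: a ``constant within period'' term, which is small because $\sum_{t} r^t \sin(\phi t + \varphi_0)$ over one period nearly cancels, and an ``oscillation'' term controlled by the total variation. This gives a geometric picture of where the $1/\sin\phi$ comes from (roughly one period worth of terms), but at the cost of delicate bookkeeping with $\psi = 2\pi - \tau\phi$, $r^\tau$, and $1-r^\tau = \Theta((\theta+x)/\phi)$. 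Your approach instead applies Abel summation directly and reduces everything to a single uniform bound $|S_s| = \bigl|\sum_{\tau=0}^{s} r^\tau \sin(\phi\tau + \varphi_0)\bigr| \le 2/|1 - re^{i\phi}|$, which you correctly lower-bound by $|1 - re^{i\phi}|^2 = (1-r)^2 + 2r(1-\cos\phi) \ge r\sin^2\phi$ together with $r \ge 3/4$ in the stated regime. All of the steps check out: the identity for $|1-re^{i\phi}|^2$, the inequality $1-\cos\phi \ge \tfrac12\sin^2\phi$, the Abel rearrangement with $S_{-1}=0$, and the final triangle inequality $|\epsilon_T| \le |\epsilon_0| + \sum_{t=1}^T |\epsilon_t - \epsilon_{t-1}|$. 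Your argument is shorter, more standard, and sidesteps the period-by-period analysis entirely; the paper's version, while more involved, keeps the geometric intuition closer to the surface.
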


\begin{proof}
Let $\tau = \lfloor 2\pi/\phi\rfloor$ be the approximate period, and $J = \lfloor T/\tau \rfloor$ be the number of periods that exist within time $T$. Then, we can group the summation by each period:

\begin{align*}
\sum_{t=0}^{T} r^t \sin(\phi t) \epsilon_t
=& \sum_{j=0}^{J} \left[\sum_{t = j\tau}^{\min\{(j+1)\tau-1, T\}} r^t \sin(\phi t+ \varphi_0) \epsilon_t \right]\\
= &
\sum_{j=0}^{J} \left[\sum_{t = j\tau}^{\min\{(j+1)\tau-1, T\}} r^t \sin(\phi t+ \varphi_0) [\epsilon_{j\tau}
+ (\epsilon_t - \epsilon_{j\tau})] \right]\\
\le & 
\underbrace{\sum_{j=0}^{J} \left[\sum_{t = j\tau}^{\min\{(j+1)\tau-1, T\}} r^t \sin(\phi t+ \varphi_0)\right] \epsilon_{j\tau}}_{\text{Term 1}} + \underbrace{\sum_{j=0}^{J} \left[\sum_{t = j\tau}^{\min\{(j+1)\tau-1, T\}}r^t|\epsilon_t - \epsilon_{j\tau}| \right]}_{\text{Term 2}}.
\end{align*}
We prove the lemma by bounding the first term and the second term 
on the right-hand-side of this equation separately.

\noindent \textbf{Term 2:} Since $r\le 1$, it is not hard to see:
\begin{align*}
\text{Term 2} =& \sum_{j=0}^{J} \left[\sum_{t = j\tau}^{\min\{(j+1)\tau-1, T\}}r^t|\epsilon_t - \epsilon_{j\tau}|\right] \\ 
\le& \sum_{j=0}^{J} \left[\sum_{t = j\tau}^{\min\{(j+1)\tau-1, T\}}r^t\right]
\left[\sum_{t = j\tau+1}^{\min\{(j+1)\tau-1, T\}}|\epsilon_t - \epsilon_{t-1}|\right] \\
\le& 
\tau \sum_{j=0}^{J} \left[\sum_{t = j\tau+1}^{\min\{(j+1)\tau-1, T\}}|\epsilon_t - \epsilon_{t-1}|\right]
\le \tau \sum_{t = 1}^T |\epsilon_t - \epsilon_{t-1}|.
\end{align*}

\noindent \textbf{Term 1:} We first study the inner-loop factor, $\sum_{t = j\tau}^{(j+1)\tau-1} r^t \sin(\phi t)$. Letting $\psi = 2\pi - \tau \phi$ be the offset 
for each approximate period, we have that for any $j < J$:
\begin{align*}
\left|\sum_{t = j\tau}^{(j+1)\tau-1} r^t \sin(\phi t+ \varphi_0)\right|
=& \left|\Im\left[\sum_{t = 0}^{\tau-1} r^{j\tau + t} e^{i\cdot [\phi(j\tau + t)+ \varphi_0]}\right]\right|\\
\le& r^{j\tau}\norm{\sum_{t = 0}^{\tau-1} r^{t} e^{i\cdot \phi t}}
\le r^{j\tau}\norm{\frac{1- r^{\tau} e^{i\cdot (2\pi - \psi)}}{1-r e^{i\cdot \phi}}} \\
=& r^{j\tau} \sqrt{\frac{(1 - r^{\tau}\cos \psi)^2 + (r^{\tau}\sin\psi)^2}
{(1 - r\cos \phi)^2 + (r\sin\phi)^2}}.
\end{align*}
Combined with the fact that for all $y\in[0, 1]$ we have $e^{-3y} \le 1-y \le e^{-y} $, we obtain the following:
\begin{equation}\label{eq:aux_r_tau}
1 - r^{\tau} = 1 - [(1-\theta)(1-x)]^{\frac{\tau}{2}}
= 1 - e^{-\Theta((\theta+x)\tau)} = \Theta ((\theta + x)\tau)
= \Theta \left(\frac{(\theta + x)}{\phi}\right)
\end{equation}
Also, for any $a, b\in [0, 1]$, we have $
(1-ab)^2 \le (1-\min\{a, b\})^2 \le (1-a^2)^2 + (1-b^2)^2$, and by definition of $\tau$, we immediately have $\psi \le \phi$. This yields:
\begin{align*}
\frac{(1 - r^{\tau}\cos \psi)^2 + (r^{\tau}\sin\psi)^2}
{(1 - r\cos \phi)^2 + (r\sin\phi)^2}
\le& \frac{2(1 - r^{2\tau})^2+  2(1-\cos^2 \psi)^2 + (r^{\tau}\sin\psi)^2}
{(r\sin\phi)^2} \\
\le&O\left(\frac{1}{\sin^2 \phi}\right) \left[\frac{(\theta + x)^2}{\phi^2}  +  \sin^4\phi + \sin^2 \phi\right]
\le O\left(\frac{(\theta + x)^2}{\sin^4 \phi}\right)
\end{align*}
The second last inequality used the fact that $r = \Theta(1)$ (although note $r^{\tau}$ is not $\Theta(1)$).
The last inequality is true since by Lemma \ref{lem:aux_convex_rphi}, we know
$(\theta+x)/ \sin^2 \phi \ge \Omega(1)$. This gives:
$$\left|\sum_{t = j\tau}^{(j+1)\tau-1} r^t \sin(\phi t + \varphi_0)\right|
\le r^{j\tau}  \cdot \frac{\theta + x}{\sin^2\phi},$$
and therefore, we can now bound the first term:
\begin{align*}
\text{Term 1} =& \sum_{j=0}^{J} \sum_{t = j\tau}^{\min\{(j+1)\tau-1, T\}} r^t \sin(\phi t+ \varphi_0) \epsilon_{j\tau} 
= \sum_{j=0}^{J} \left[\sum_{t = j\tau}^{\min\{(j+1)\tau-1, T\}} r^t \sin(\phi t+ \varphi_0)\right] (\epsilon_0 + \epsilon_{j\tau} - \epsilon_0)\\
\le&  O(1)\sum_{j=0}^{J - 1} \left[r^{j\tau} \frac{\theta + x}{\sin^2\phi}\right](|\epsilon_0| + |\epsilon_{j\tau} - \epsilon_0|)
+ \sum_{t = J\tau}^{T} (|\epsilon_0| + |\epsilon_{J \tau} - \epsilon_0|)\\
\le& O(1) \left[\frac{1}{1-r^\tau}\frac{\theta + x}{\sin^2\phi} + \tau\right] \cdot
\left[|\epsilon_0|+\sum_{t=1}^T |\epsilon_t - \epsilon_{t-1}|\right]
\le \left[O(\frac{1}{\sin\phi}) + \tau\right] \cdot
\left[|\epsilon_0|+\sum_{t=1}^T |\epsilon_t - \epsilon_{t-1}|\right].
\end{align*}
The second-to-last inequality used Eq.\eqref{eq:aux_r_tau}. In conclusion, since $\tau \le \frac{2\pi}{\phi} \le \frac{2\pi}{\sin\phi}$, we have:
\begin{align*}
\sum_{t=0}^{T} r^t \sin(\phi t+ \varphi_0) \epsilon_t
\le& \text{Term 1} + \text{Term 2}
\le \left[O(\frac{1}{\sin\phi}) + 2\tau\right] \cdot 
\left[|\epsilon_0|+\sum_{t=1}^T |\epsilon_t - \epsilon_{t-1}|\right] \\
\le& O\left(\frac{1}{\sin\phi}\right)\left[|\epsilon_0|+\sum_{t=1}^T |\epsilon_t - \epsilon_{t-1}|\right].
\end{align*}
\end{proof}

\noindent
The following lemma combines the previous two lemmas to bound the
approximation error in the quadratic.
\begin{lemma}\label{lem:aux_convex_inequal}
Under the same setting as Lemma \ref{lem:aux_eigenvalues}, and 
with $x \in (\frac{\theta^2}{(2-\theta)^2}, \frac{1}{4}]$, denote:
\begin{equation*}
(a_t, ~-b_t) = \pmat{1 & 0 } \A^t.
\end{equation*}
Then, for any sequence $\{\epsilon_\tau\}$, any $t \ge \Omega(\frac{1}{\theta})$, we have:
\begin{align*}
\sum_{\tau = 0}^{t-1} a_\tau  \epsilon_\tau \le& O(\frac{1}{x})\left(|\epsilon_0| + \sum_{\tau = 1}^{t-1}|\epsilon_\tau - \epsilon_{\tau -1}|\right)\\
\sum_{\tau = 0}^{t-1} (a_\tau - a_{\tau-1})  \epsilon_\tau \le& O(\frac{1}{\sqrt{x}})\left(|\epsilon_0| + \sum_{\tau = 1}^{t-1}|\epsilon_\tau - \epsilon_{\tau -1}|\right).
\end{align*}
\end{lemma}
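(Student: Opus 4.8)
The plan is to put $a_\tau$ into damped-sinusoid form and then control the two sums by summation by parts against the \emph{exact} partial sums of $\{a_\tau\}$. Since we are in the complex-eigenvalue regime, Lemma~\ref{lem:aux_matrix_form} together with the polar form $\mu_{1,2}=re^{\pm i\phi}$ of Lemma~\ref{lem:aux_convex_rphi} gives the closed form $a_\tau = \frac{\mu_1^{\tau+1}-\mu_2^{\tau+1}}{\mu_1-\mu_2} = r^\tau\frac{\sin((\tau+1)\phi)}{\sin\phi}$, and a one-line angle-addition computation gives $a_\tau - a_{\tau-1} = \frac{\sqrt x\, r^{\tau-1}}{\sin\phi}\sin(\tau\phi+\varphi_0')$ for an explicit phase $\varphi_0'$. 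The single identity behind this is
\[
(1-\mu_1)(1-\mu_2) = \mu_1\mu_2 - (\mu_1+\mu_2) + 1 = (1-\theta)(1-x) - (2-\theta)(1-x) + 1 = x,
\]
so $|1-\mu_1|^2 = 1 - 2r\cos\phi + r^2 = x$.

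For the estimates, by Lemma~\ref{lem:aux_matrix_equality} (using $\pmat{1&0}\A^\tau\pmat{1\\0}=a_\tau$ and $\pmat{1&0}\A^{t}\pmat{1\\1}=a_t-b_t$) the partial sums are exactly $A_k := \sum_{s=0}^k a_s = \frac{1-a_{k+1}+b_{k+1}}{x}$, while the partial sums of the difference sequence telescope to $\sum_{s=0}^k (a_s-a_{s-1}) = a_k$ (with $a_{-1}=0$). Summation by parts therefore reduces the whole lemma to two supremum bounds,
\[
\max_{0\le k\le t-1}|A_k| = O(1/x), \qquad \max_{0\le k\le t-1}|a_k| = O(1/\sqrt x),
\]
because $\big|\sum_{\tau=0}^{t-1} a_\tau\epsilon_\tau\big| \le 2(\max_k|A_k|)\big(|\epsilon_0|+\sum_\tau|\epsilon_\tau-\epsilon_{\tau-1}|\big)$, and likewise the difference sum is bounded by $2(\max_k|a_k|)(\cdots)$.

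To get these two sup bounds, note $a_{k+1}-b_{k+1} = \frac{r^{k+1}}{\sin\phi}(\sin((k+2)\phi)-r\sin((k+1)\phi)) = \frac{\sqrt x\, r^{k+1}}{\sin\phi}\sin((k+1)\phi+\beta)$ (again using $|1-\mu_1|=\sqrt x$), so $|A_k| \le \frac1x + \frac{r^{k+1}|\sin((k+1)\phi+\beta)|}{\sin\phi\sqrt x}$ and $|a_k|\le \frac{r^k|\sin((k+1)\phi)|}{\sin\phi}$. Combining $|\sin m\phi|\le\min\{1,m\sin\phi\}$ with $\sin^2\phi = \frac{(2-\theta)^2x-\theta^2}{4(1-\theta)}$ (Lemma~\ref{lem:aux_convex_rphi}) and $1-r = \Theta(\theta+x)$ (from $1-r^2 = \theta+x-\theta x$), a two-case split on whether $(k+1)\phi\le 1$ finishes the bookkeeping: in the ``early'' range the $\min(1,\cdot)$ factor cancels the possibly large $1/\sin\phi$, and in the ``late'' range $r^k$ is already exponentially small (the same mechanism as in the proof of Lemma~\ref{lem:aux_convex_trigonometry}). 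I expect the only real difficulty to be exactly this case split when $\A_j$ is near critical damping, i.e.\ $x$ just above $\theta^2/(2-\theta)^2$: there $\sin^2\phi$ can be arbitrarily small compared with $x$, so the naive route --- applying Lemma~\ref{lem:aux_convex_trigonometry} directly to $\frac{1}{\sin\phi}\sum_\tau r^\tau\sin((\tau+1)\phi)\epsilon_\tau$ --- only gives $O(1/\sin^2\phi)$, which is too weak, whereas the exact-partial-sum argument above remains sharp. (The hypothesis $t=\Omega(1/\theta)$ is used only to place the truncation past the peak of $|a_\tau|$ and to keep constants clean; the sup bounds are uniform in $k$.)
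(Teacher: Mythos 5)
Your proposal takes a genuinely different route from the paper, and the central idea is sound. Where the paper proves each inequality with an explicit two-case split on $x$ (using a brute absolute-sum bound when $x \in (\theta^2/(2-\theta)^2, 2\theta^2/(2-\theta)^2]$ and the separate trigonometric-sum Lemma~\ref{lem:aux_convex_trigonometry} when $x$ is farther from critical damping), you instead do summation by parts against the \emph{exact} partial sums. The crucial observation that $(\mu_1-1)(\mu_2-1)=x$ both turns Lemma~\ref{lem:aux_matrix_equality} into the closed form $A_k = (1 - a_{k+1}+b_{k+1})/x$ and identifies the amplitude $\sqrt{x}$ of the damped sinusoid $a_{k+1}-b_{k+1}$, which is sharper than the paper's coefficient bound $O((\theta+x)/\sin\phi)$ on $a_\tau - a_{\tau-1}$. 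This buys you a uniform argument that never invokes Lemma~\ref{lem:aux_convex_trigonometry}, and you correctly note that a naive application of that lemma to the first sum only gives $O(1/\sin^2\phi)$, which is strictly weaker than $O(1/x)$ near critical damping.

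Two remarks on the bookkeeping you compress at the end. First, the sup bounds $\max_k|A_k|=O(1/x)$ and $\max_k|a_k|=O(1/\sqrt{x})$ are indeed true but are not a pure consequence of the $(k+1)\phi\le 1$ split alone; to close them you also need the same dichotomy the paper uses implicitly. Concretely, in the early range $(k+1)\phi\le 1$ the bound $|a_k|\le O(1)\min\{r^k(k+1),\,1/\phi\}\le O(1)\min\{1/(1-r),\,1/\phi\}$ needs $\max\{\phi,1-r\}=\Omega(\sqrt{x})$, and verifying this forces exactly the split: when $x\ge 2\theta^2/(2-\theta)^2$ one has $\sin\phi=\Omega(\sqrt{x})$ from $\sin^2\phi=\frac{(2-\theta)^2x-\theta^2}{4(1-\theta)}$, and when $x=\Theta(\theta^2)$ one instead has $1-r=\Theta(\theta+x)=\Theta(\sqrt{x})$. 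You have both identities written down, so the ingredients are all there, but the claim that the $k$-split "finishes the bookkeeping" undersells what remains. Second, for $\max_k|A_k|$ you need $|a_{k+1}-b_{k+1}|=O(1)$ uniformly; since $\sqrt{x}/\sin\phi$ can be large near critical damping, this requires the angle-addition bound $|\sin((k{+}1)\phi+\beta)|\le (k{+}1)\sin\phi + |\sin\beta|$ with $|\sin\beta|=\sin\phi/\sqrt{x}$, together with $\max_k r^{k+1}(k{+}1)=O(1/(\theta+x))$; it works, but it is one more nontrivial estimate. With these two pieces spelled out, your argument is complete and is a cleaner unification than the paper's.
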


\begin{proof}
We prove the two inequalities separately.

\noindent \textbf{First Inequality:} Since $x \in (\frac{\theta^2}{(2-\theta)^2}, 
\frac{1}{4}]$, we further split the analysis into two cases:

\noindent \textbf{Case $x \in (\frac{\theta^2}{(2-\theta)^2}, \frac{2\theta^2}{(2-\theta)^2}]$:}
By Lemma \ref{lem:aux_matrix_form}, we can expand dthe left-hand-side as:
\begin{equation*}
\sum_{\tau = 0}^{t-1} a_\tau  \epsilon_\tau
\le \sum_{\tau = 0}^{t-1} |a_\tau|  (|\epsilon_0| + |\epsilon_\tau - \epsilon_0|)
\le \left[\sum_{\tau = 0}^{t-1} |a_\tau|\right]  \left(|\epsilon_0| + \sum_{\tau = 1}^{t-1}|\epsilon_\tau - \epsilon_{\tau -1}|\right).
\end{equation*}
Noting that in this case $x = \Theta(\theta^2)$, by Lemma \ref{lem:aux_convex_entry} and Lemma \ref{lem:aux_geometric_power}, we have for $t \ge O(1/\theta)$:
\begin{equation*}
\sum_{\tau = 0}^{t-1} |a_\tau|
 \le \sum_{\tau = 0}^{t-1} (\tau+1)(1-\theta)^{\frac{\tau}{2}}
 \le O(\frac{1}{\theta^2}) = O(\frac{1}{x}).
\end{equation*}

\noindent \textbf{Case $x \in (\frac{2\theta^2}{(2-\theta)^2}, \frac{1}{4}]$:}
Again, we expand the left-hand-side as:
\begin{equation*}
\sum_{\tau = 0}^{t-1} a_\tau  \epsilon_\tau
= \sum_{\tau =0}^{t-1} \frac{\mu_1^{\tau+1} -  \mu_2^{\tau+1}}{\mu_1 -\mu_2}\epsilon_\tau
= \sum_{\tau =0}^{t-1}\frac{r^{\tau+1}\sin[(\tau+1)\phi]}{r\sin[\phi]} \epsilon_\tau.
\end{equation*}
Noting in this case that $x = \Theta(\sin^2\phi)$ by Lemma \ref{lem:aux_convex_rphi}, then by Lemma \ref{lem:aux_convex_trigonometry} we have:
\begin{equation*}
\sum_{\tau = 0}^{t-1} a_\tau  \epsilon_\tau 
\le O(\frac{1}{\sin^2\phi})\left(|\epsilon_0| + \sum_{\tau = 1}^{t-1}|\epsilon_\tau - \epsilon_{\tau -1}|\right)
\le O(\frac{1}{x})\left(|\epsilon_0| + \sum_{\tau = 1}^{t-1}|\epsilon_\tau - \epsilon_{\tau -1}|\right).
\end{equation*}

\noindent\textbf{Second Inequality:} 
Using Lemma \ref{lem:aux_matrix_form}, we know:
\begin{align*}
a_\tau - a_{\tau-1} 
=&  \frac{(\mu_1^{\tau+1} -  \mu_2^{\tau+1}) - (\mu_1^{\tau} -  \mu_2^{\tau}) }{\mu_1 -\mu_2} \\
=& \frac{r^{\tau+1}\sin[(\tau+1)\phi]
- r^{\tau}\sin[\tau\phi]}{r\sin[\phi]} \\
=& \frac{r^{\tau}\sin[\tau\phi](r\cos\phi - 1)
+ r^{\tau+1}\cos[\tau\phi]\sin\phi}{r\sin[\phi]} \\
=& \frac{r\cos\phi - 1}{r\sin\phi} \cdot r^{\tau}\sin[\tau\phi]  + r^{\tau}\cos[\tau\phi],
\end{align*}
where we note $r = \Theta(1)$ and the coefficient of the first 
term is upper bounded by the following:
\begin{equation*}
\left|\frac{r\cos\phi - 1}{r\sin\phi}\right|
\le \frac{(1-\cos^2\phi) + (1-r^2)}{r\sin\phi}
\le O\left(\frac{\theta + x}{\sin\phi}\right).
\end{equation*}
As in the proof of the first inequality, we split the analysis 
into two cases:

\noindent \textbf{Case $x \in (\frac{\theta^2}{(2-\theta)^2}, \frac{2\theta^2}{(2-\theta)^2}]$:} Again, we use
\begin{equation*}
\sum_{\tau = 0}^{t-1} (a_\tau - a_{\tau-1} )   \epsilon_\tau
\le \sum_{\tau = 0}^{t-1} |a_\tau- a_{\tau-1} |  (|\epsilon_0| + |\epsilon_\tau - \epsilon_0|)
\le \left[\sum_{\tau = 0}^{t-1} |a_\tau- a_{\tau-1} |\right]  \left(|\epsilon_0| + \sum_{\tau = 1}^{t-1}|\epsilon_\tau - \epsilon_{\tau -1}|\right).
\end{equation*}
Noting $x = \Theta(\theta^2)$, again by Lemma \ref{lem:aux_geometric_power} and $|\frac{\sin \tau\phi}{\sin \phi}| \le \tau$, we have:
\begin{equation*}
\left[\sum_{\tau = 0}^{t-1} |a_\tau- a_{\tau-1} |\right]
\le O(\theta + x)\sum_{\tau = 0}^{t-1}\tau (1-\theta)^{\frac{\tau}{2}} + \sum_{\tau = 0}^{t-1} (1-\theta)^{\frac{\tau}{2}}
\le O(\frac{1}{\theta}) = O(\frac{1}{\sqrt{x}}).
\end{equation*}

\noindent \textbf{Case $x \in (\frac{2\theta^2}{(2-\theta)^2}, \frac{1}{4}]$:}
From the above derivation, we have:
\begin{align*}
\sum_{\tau = 0}^{t-1} (a_\tau - a_{\tau-1})  \epsilon_\tau
=& \frac{r\cos\phi - 1}{r\sin\phi} \sum_{\tau =0}^{t-1}r^{\tau}\sin[\tau\phi]\epsilon_\tau  + \sum_{\tau =0}^{t-1}r^{\tau}\cos[\tau\phi]\epsilon_\tau.
\end{align*}
According to Lemma \ref{lem:aux_convex_rphi}, in this case $ x = \Theta(\sin^2\phi)$, $r = \Theta(1)$ and since $\Omega(\theta^2)\le x \le O(1)$, we have:
\begin{equation*}
\left|\frac{r\cos\phi - 1}{r\sin\phi}\right|
\le O\left(\frac{\theta + x}{\sin\phi}\right)
\le O\left(\frac{\theta + x}{\sqrt{x}}\right) \le O(1).
\end{equation*}
Combined with Lemma \ref{lem:aux_convex_trigonometry}, this gives:
\begin{equation*}
\sum_{\tau = 0}^{t-1} (a_\tau - a_{\tau-1})  \epsilon_\tau
\le O(\frac{1}{\sin\phi})\left(|\epsilon_0| + \sum_{\tau = 1}^{t-1}|\epsilon_\tau - \epsilon_{\tau -1}|\right)
\le O(\frac{1}{\sqrt{x}})\left(|\epsilon_0| 
+ \sum_{\tau = 1}^{t-1}|\epsilon_\tau - \epsilon_{\tau -1}|\right).
\end{equation*}


\noindent
Putting all the pieces together finishes the proof.
\end{proof}

\subsection{Negative-curvature scenario}
In this section, we will prove the auxiliary lemmas required for 
proving Lemma \ref{lem:2nd_seq}. 

The first lemma lower bounds the largest eigenvalue of 
the~\nag~matrix for eigen-directions whose eigenvalues are negative.
\begin{lemma}\label{lem:aux_negcurve_mu1}
Under the same setting as Lemma \ref{lem:aux_eigenvalues}, and 
with $x \in [-\frac{1}{4}, 0]$, and $\mu_1 \ge \mu_2$, we have:
\begin{equation*}
\mu_1 \ge 1 + \frac{1}{2}\min\{\frac{|x|}{\theta}, \sqrt{|x|}\}.
\end{equation*}
\end{lemma}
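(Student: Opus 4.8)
The plan is to mirror the argument used for Lemma~\ref{lem:aux_nonconvex_mu2}: reduce the claim to a single sign check on a quadratic evaluated at a carefully chosen point. First I would invoke Lemma~\ref{lem:aux_eigenvalues}, which says the larger root $\mu_1$ (recall the convention $\mu_1 \ge \mu_2$) solves $\mu^2 - (2-\theta)(1-x)\mu + (1-\theta)(1-x) = 0$. Substituting $\mu = 1 + u$ and simplifying, $u$ satisfies
\begin{equation*}
u^2 + \big((1-x)\theta + 2x\big) u + x = 0 .
\end{equation*}
Since $x \in [-\tfrac14,0]$, write $x = -|x|$ and set $g(u) \defeq u^2 + \big((1+|x|)\theta - 2|x|\big)u - |x|$, an upward-opening parabola whose roots are $u_1 = \mu_1 - 1 \ge u_2 = \mu_2 - 1$. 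Because $g(0) = -|x| \le 0$, the roots straddle the origin, so $u_2 \le 0 \le u_1$. Hence, writing $c \defeq \tfrac12\min\{|x|/\theta,\sqrt{|x|}\} \ge 0$, it suffices to verify $g(c) \le 0$: this places $c$ in the interval $[u_2,u_1]$ and therefore gives $c \le u_1 = \mu_1 - 1$, which is exactly the claim. The degenerate case $x = 0$ is trivial ($\mu_1 = 1$).

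It then remains to check $g(c)\le 0$ in the two regimes determined by which term attains the minimum. When $|x| \le \theta^2$ one has $c = |x|/(2\theta)$, and a direct expansion gives $g(c) = \tfrac{|x|^2(1-4\theta)}{4\theta^2} - \tfrac{|x|(1-|x|)}{2}$; using $\theta \le \tfrac14$ and $|x|\le\theta^2$ the first term is at most $\tfrac{|x|}{4}$, while $|x|\le\tfrac14$ makes the second term at least $\tfrac{|x|}{4}$, so $g(c)\le 0$. When $|x| > \theta^2$ (so $\theta < \sqrt{|x|}$) one has $c = \sqrt{|x|}/2$; expanding and then bounding $(1+|x|)\theta\sqrt{|x|}/2 \le (|x|+|x|^2)/2$ via $\theta \le \sqrt{|x|}$ yields $g(c) \le |x|\big(-\tfrac14 + \tfrac{|x|}{2} - \sqrt{|x|}\big) \le 0$, the last inequality because $|x|\le\tfrac14$. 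In both regimes $g(c)\le 0$, which completes the proof.

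The argument is essentially routine once the reduction to $g(c)\le 0$ is set up, since it exactly parallels Lemma~\ref{lem:aux_nonconvex_mu2}; no new idea is needed. The only step where I would be careful is the first regime, where controlling $\tfrac{|x|^2(1-4\theta)}{4\theta^2}$ requires using \emph{both} $\theta\le\tfrac14$ (to keep the factor $1-4\theta$ nonnegative) and $|x|\le\theta^2$ (to compare it against $\tfrac{|x|(1-|x|)}{2}$); this is the place most prone to sign or bookkeeping errors, but it is otherwise elementary.
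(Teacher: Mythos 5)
Your proof is correct and takes essentially the same approach as the paper's: substitute $\mu = 1+u$ into the characteristic polynomial, observe that the claim reduces to showing $g(c) \le 0$ for an upward-opening parabola $g$, and verify this directly in the two regimes $|x| \le \theta^2$ (where $c = |x|/2\theta$) and $|x| > \theta^2$ (where $c = \sqrt{|x|}/2$), exactly as the paper does. Your added step observing $g(0) = -|x| \le 0$ to justify that the roots straddle the origin is a small clarification of what the paper leaves implicit, but it is not a different argument.
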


\begin{proof}
The eigenvalues satisfy:
\begin{align*}
\det (\A - \mu \I) = \mu^2 - (2-\theta)(1-x)\mu + (1-\theta)(1-x) = 0.
\end{align*}
Let $\mu = 1+u$. We have 
\begin{align*}
&& (1+u)^2 - (2-\theta)(1-x)(1+u) + (1-\theta)(1-x) &= 0 \\
\Rightarrow  && u^2 +  ((1-x)\theta + 2x) u + x&= 0.
\end{align*}
Let $f(u) = u^2 + \theta u + 2xu - x\theta u + x$.  To prove $\mu_1(\A) \ge 1 + \frac{\sqrt{|x|}}{2}$ when $x\in [-\frac{1}{4}, -\theta^2]$, we only need to verify $f(\frac{\sqrt{|x|}}{2}) \le 0$:
\begin{align*}
f(\frac{\sqrt{|x|}}{2}) = &\frac{|x|}{4} + \frac{\theta\sqrt{|x|}}{2}
- |x|\sqrt{|x|} + \frac{|x|\sqrt{|x|}\theta}{2} - |x| \\
\le &  \frac{\theta\sqrt{|x|}}{2}- \frac{3|x|}{4} -|x|\sqrt{|x|}(1-\frac{\theta}{2})\le 0
\end{align*}
The last inequality holds because $\theta \le \sqrt{|x|}$ in this case.

For $x\in [-\theta^2, 0]$, we have:
\begin{align*}
f(\frac{|x|}{2\theta}) = \frac{|x|^2}{4\theta^2} + \frac{|x|}{2} - \frac{|x|^2}{\theta}
+ \frac{|x|^2}{2} - |x|
= \frac{|x|^2}{4\theta^2} - \frac{|x|}{2} -|x|^2(\frac{1}{\theta} - \frac{1}{2})\le 0,
\end{align*}
where the last inequality is due to $\theta^2 \ge |x|$.

In summary, we have proved
\begin{equation*}
\mu_1(\A) \ge 
\begin{cases}
1 + \frac{\sqrt{|x|}}{2}, & x \in  [-\frac{1}{4}, -\theta^2]\\
1 + \frac{|x|}{2\theta}.& x \in  [-\theta^2, 0],
\end{cases}
\end{equation*}
which finishes the proof.
\end{proof}

The next lemma is a technical lemma on large powers.
\begin{lemma} \label{lem:aux_eigen_combo_inequal} 
Under the same setting as Lemma \ref{lem:aux_eigenvalues}, 
and with $x \in [-\frac{1}{4}, 0]$, denote
\begin{equation*}
(a_t, ~-b_t) = \pmat{1 & 0 } \A^t.
\end{equation*}
Then, for any $0\le\tau\le t$, we have 
\begin{equation*}
|a^{(1)}_{t-\tau}||a^{(1)}_{\tau} - b^{(1)}_{\tau}| \\
\le  [\frac{2}{\theta} + (t+1)] |a^{(1)}_{t+1} - b^{(1)}_{t+1}|.
\end{equation*}
\end{lemma}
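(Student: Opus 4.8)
The plan is to reduce the statement to two clean facts about the sequence $a_t$ and then combine them. Note first that for $x\in[-\tfrac14,0]$ Lemma~\ref{lem:aux_eigenvalues} guarantees the eigenvalues $\mu_1\ge\mu_2$ of $\A$ are real, and since $(\mu_1-1)(\mu_2-1)=\mu_1\mu_2-(\mu_1+\mu_2)+1=x\le 0$ while $\mu_1+\mu_2=(2-\theta)(1-x)>0$ and $\mu_1\mu_2=(1-\theta)(1-x)>0$, we get $\mu_1\ge 1\ge\mu_2>0$; moreover Lemma~\ref{lem:aux_nonconvex_mu2} yields $\mu_2\le 1-\tfrac{\theta}{2}$, i.e. $1-\mu_2\ge\tfrac{\theta}{2}$. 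Using the closed forms of Lemma~\ref{lem:aux_matrix_form}, $a_t=\sum_{i=0}^{t}\mu_1^i\mu_2^{t-i}$ and $b_t=\mu_1\mu_2\sum_{i=0}^{t-1}\mu_1^i\mu_2^{t-1-i}=\mu_1\mu_2\,a_{t-1}$, a one-line telescoping computation gives the key identity
\[
a_t-b_t=(1-\mu_2)\,a_t+\mu_2^{\,t+1},
\]
which in particular shows $a_t-b_t\ge 0$ (so the absolute values in the statement are harmless) and $a_t-b_t\ge(1-\mu_2)a_t$.

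Next I would establish two elementary properties of $a_t$. First, monotonicity: from $a_{t+1}=\mu_2 a_t+\mu_1^{t+1}$ and $\mu_1^{t+1}\ge\mu_1^{t+1}-\mu_2^{t+1}=(\mu_1-\mu_2)a_t$ one gets $a_{t+1}\ge\mu_1 a_t\ge a_t$. Second, a sub-multiplicativity bound $a_m a_n\le(\min\{m,n\}+1)\,a_{m+n}$: expanding the product as $\sum_{k}N_k\,\mu_1^k\mu_2^{m+n-k}$, each multiplicity $N_k$ counts lattice points on a segment inside a rectangle and is at most $\min\{m,n\}+1$, so $a_m a_n\le(\min\{m,n\}+1)\sum_{k}\mu_1^k\mu_2^{m+n-k}=(\min\{m,n\}+1)a_{m+n}$. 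In particular $a_{t-\tau}a_\tau\le(t+1)a_t$.

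Now combine. Writing $c_s:=a_s-b_s=(1-\mu_2)a_s+\mu_2^{\,s+1}$, the identity gives $a_{t-\tau}c_\tau=(1-\mu_2)\,a_{t-\tau}a_\tau+\mu_2^{\,\tau+1}a_{t-\tau}$. For the first term, sub-multiplicativity and monotonicity give $a_{t-\tau}a_\tau\le(t+1)a_t\le(t+1)a_{t+1}$, and since $c_{t+1}\ge(1-\mu_2)a_{t+1}$ we conclude $(1-\mu_2)a_{t-\tau}a_\tau\le(t+1)c_{t+1}$. For the second term, $\mu_2^{\,\tau+1}\le 1$ and $a_{t-\tau}\le a_{t+1}\le c_{t+1}/(1-\mu_2)\le(2/\theta)c_{t+1}$. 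Adding the two bounds yields $a_{t-\tau}c_\tau\le(\tfrac{2}{\theta}+t+1)c_{t+1}$, which is precisely the claim.

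The main obstacle is controlling the ``tail'' term $\mu_2^{\,\tau+1}a_{t-\tau}$ uniformly in $\tau$: if one instead expands $a_{t-\tau}c_\tau$ directly in $\mu_1,\mu_2$ and bounds term by term, a factor $(\mu_1/\mu_2)^{t-\tau}$ appears that is not bounded by $O(1/\theta+t)$. The identity $a_t-b_t=(1-\mu_2)a_t+\mu_2^{t+1}$ is exactly what avoids this: it absorbs the $\mu_1^t$-growth of $a_t-b_t$ into the single $a_{t+1}$ term, whose size is pinned down by $c_{t+1}$, and leaves only a decaying remainder whose contribution is tamed by $1-\mu_2\ge\theta/2$. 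The only other points needing a word of care are the degenerate case $\mu_1=\mu_2$ (which in fact never occurs for $x\le 0$, since the discriminant $\Delta=(1-x)(\theta^2-(2-\theta)^2x)$ is strictly positive there) and the boundary conventions $a_0=1$, $a_{-1}=0$; both are routine.
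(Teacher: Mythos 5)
Your proof is correct and follows essentially the same route as the paper's: both start from the identity $a_\tau - b_\tau = (1-\mu_2)(\text{geometric sum}) + (\text{power of }\mu_2)$ (yours in the equivalent form $a_\tau - b_\tau = (1-\mu_2)a_\tau + \mu_2^{\tau+1}$), then split into a ``main'' piece whose coefficient count yields the $(t{+}1)$ factor and a decaying ``residual'' piece whose control via $1-\mu_2 \ge \theta/2$ yields the $2/\theta$ factor. The packaging differs slightly --- you isolate a clean sub-multiplicativity lemma $a_m a_n \le (\min\{m,n\}+1)a_{m+n}$ and monotonicity $a_{t+1}\ge a_t$, whereas the paper bounds $a_{t-\tau}\le (t-\tau+1)\mu_1^{t-\tau}$ directly inside the chain of inequalities --- but the counting argument and the use of $\mu_1\ge 1 \ge \mu_2$, $1-\mu_2\ge\theta/2$ are identical.
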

\begin{proof}
Let $\mu_1$ and $\mu_2$ be the two eigenvalues of the matrix $\A$,
where $\mu_1 \ge \mu_2$.  Since $x\in [-\frac{1}{4}, 0]$, 
according to Lemma \ref{lem:aux_eigenvalues} and 
Lemma \ref{lem:aux_nonconvex_mu2}, we have 
$0\le \mu_2\le 1 - \frac{\theta}{2} \le 1 \le \mu_1$, 
and thus expanding both sides using Lemma \ref{lem:aux_matrix_form} yields:
\begin{align*}
\text{LHS} =& 
\left[\sum_{i=0}^{t-\tau}\mu_1^{t-\tau-i}\mu_2^i\right]
\left[(1-\mu_2)\left(\sum_{i=0}^{\tau-1}\mu_1^{\tau-i}\mu_2^i\right) + \mu_2^\tau \right] \\
=& \left[\sum_{i=0}^{t-\tau}\mu_1^{t-\tau-i}\mu_2^i\right](1-\mu_2)\left(\sum_{i=0}^{\tau-1}\mu_1^{\tau-i}\mu_2^i\right)
+ \left[\sum_{i=0}^{t-\tau}\mu_1^{t-\tau-i}\mu_2^i\right] \mu_2^\tau \\
\le& (t-\tau+1)\mu_1^{t-\tau} (1-\mu_2)\left(\sum_{i=0}^{\tau-1}\mu_1^{\tau-i}\mu_2^i\right)
 + \left[\sum_{i=0}^{t-\tau}\mu_1^{t-\tau-i}\mu_2^i\right] \\
\le& (t+1)(1-\mu_2)\left(\sum_{i=0}^{\tau-1}\mu_1^{t+1-i}\mu_2^i\right)
+\frac{2}{\theta}(1-\mu_2)\left[\sum_{i=0}^{t-\tau}\mu_1^{t+1-i}\mu_2^i\right] \\
\le& [\frac{2}{\theta}+ (t+1)]\left[(1-\mu_2)\sum_{i=0}^{t}\mu_1^{t+1-i}\mu_2^i
+ \mu_2^{t+1}\right]
 =  \text{RHS},
\end{align*}
which finishes the proof.
\end{proof}

\noindent
The following lemma gives properties of the $(1,1)$ element of large 
powers of the~\nag~matrix.
\begin{lemma}\label{lem:aux_increase_x}
Let the $2\times 2$ matrix $\A(x)$ be defined as follows and let
$x\in[-\frac{1}{4}, 0]$ and $\theta \in (0, \frac{1}{4}]$.
\begin{equation*}
\A(x) = \pmat{(2-\theta) (1 - x)&  -(1-\theta) (1 - x) \\1 & 0}.
\end{equation*}
For any fixed $t>0$, letting $g(x) = \abs{\pmat{1 & 0 }[\A(x)]^{t}\pmat{1 \\ 0 }}$, 
then we have:
\begin{enumerate}
\item $g(x)$ is a monotonically decreasing function for $x \in [-1, \theta^2/(2-\theta)^2]$.
\item For any $x \in [\theta^2/(2-\theta)^2, 1]$, we have $g(x) \le g(\theta^2/(2-\theta)^2)$.
\end{enumerate}

\end{lemma}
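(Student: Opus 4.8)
Write $x_0 := \theta^2/(2-\theta)^2$. The eigenvalues $\mu_1,\mu_2$ of $\A(x)$ satisfy $\mu_1+\mu_2 = (2-\theta)(1-x)$ and $\mu_1\mu_2 = (1-\theta)(1-x)$, and by Lemma~\ref{lem:aux_matrix_form} the quantity of interest is $g(x) = \big|\,\pmat{1 & 0}\A(x)^t\pmat{1 \\ 0}\big| = |h_t(\mu_1,\mu_2)|$, where $h_t(a,b) := \sum_{j=0}^{t} a^j b^{t-j}$. The plan is to track $h_t$ as $(\mu_1,\mu_2)$ moves along the eigenvalue curve, which has the convenient feature that $\tfrac{1}{\mu_1}+\tfrac{1}{\mu_2} = \tfrac{\mu_1+\mu_2}{\mu_1\mu_2} = \tfrac{2-\theta}{1-\theta}$ is independent of $x$. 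Note $1-x>0$ throughout, so $\mu_1\mu_2>0$ and $\mu_1+\mu_2>0$, hence both eigenvalues are positive whenever they are real.

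\textbf{Part 1 ($x\in[-1,x_0]$, monotone decrease).} Here the discriminant $(1-x)\big(\theta^2-(2-\theta)^2 x\big)$ is nonnegative, so $\mu_1\ge\mu_2>0$ are real, $h_t(\mu_1,\mu_2)>0$, and hence $g(x)=h_t(\mu_1,\mu_2)$; being a matrix entry of $\A(x)^t$ it is a polynomial in $x$, so it suffices to show $g'(x)\le 0$ on $(-1,x_0)$ and pass to the closed interval by continuity. Implicit differentiation of the characteristic equation gives $\dot\mu_j = \big((1-\theta)-(2-\theta)\mu_j\big)/\big(2\mu_j-(\mu_1+\mu_2)\big)$, i.e.\ $\dot\mu_1 = \tfrac{(1-\theta)-(2-\theta)\mu_1}{\mu_1-\mu_2}$ and $\dot\mu_2 = -\tfrac{(1-\theta)-(2-\theta)\mu_2}{\mu_1-\mu_2}$, so with $P := \partial h_t/\partial\mu_1\ge 0$ and $Q := \partial h_t/\partial\mu_2\ge 0$,
\begin{equation*}
g'(x) = P\dot\mu_1 + Q\dot\mu_2 = \frac{(1-\theta)(P-Q) - (2-\theta)(\mu_1 P - \mu_2 Q)}{\mu_1-\mu_2}.
\end{equation*}
It then remains to prove two elementary facts. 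First, $P\ge Q$: one writes $P-Q = \sum_{j=0}^{t-1}(2j+1-t)\mu_1^{j}\mu_2^{t-1-j}$ and groups index $j$ with index $t-1-j$; each such pair contributes $(2j+1-t)\big(\mu_1^{j}\mu_2^{t-1-j}-\mu_1^{t-1-j}\mu_2^{j}\big)$, which is nonnegative because (taking $j\le(t-1)/2$) the two factors have the same sign, using $\mu_1\ge\mu_2\ge 0$. Second, $\mu_1\ge\tfrac{2(1-\theta)}{2-\theta}$: indeed $\mu_1\ge\tfrac12(\mu_1+\mu_2)=\tfrac12(2-\theta)(1-x)$ and $1-x\ge 1-x_0 = \tfrac{4(1-\theta)}{(2-\theta)^2}$, this last inequality being exactly the hypothesis $x\le x_0$. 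Combining, $\mu_1 P-\mu_2 Q\ge \mu_1(P-Q)\ge\tfrac{2(1-\theta)}{2-\theta}(P-Q)$, so $(2-\theta)(\mu_1 P-\mu_2 Q)\ge 2(1-\theta)(P-Q)\ge(1-\theta)(P-Q)$; thus the numerator above is $\le -(1-\theta)(P-Q)\le 0$, and $g'(x)\le 0$.

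\textbf{Part 2 ($x\in(x_0,1]$, bound by $g(x_0)$).} Now the discriminant is $\le 0$, so the eigenvalues are complex conjugates; write $\mu_{1,2}=re^{\pm\mathrm{i}\phi}$ with $r=\sqrt{\mu_1\mu_2}=\sqrt{(1-\theta)(1-x)}$, as in Lemma~\ref{lem:aux_convex_rphi}. Then $h_t(\mu_1,\mu_2) = \sum_{j=0}^{t} r^{t} e^{\mathrm{i}(2j-t)\phi} = r^{t}\,\tfrac{\sin((t+1)\phi)}{\sin\phi}$, a real number, so $g(x)=r^{t}\big|\tfrac{\sin((t+1)\phi)}{\sin\phi}\big|\le (t+1)\,r^{t} = (t+1)\big[(1-\theta)(1-x)\big]^{t/2}$, using the standard bound $|\sin((t+1)\phi)|\le (t+1)|\sin\phi|$. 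At $x_0$ the discriminant vanishes, so $\mu_1=\mu_2$ and Lemma~\ref{lem:aux_matrix_form} gives $g(x_0)=(t+1)\mu_1^{t} = (t+1)(\mu_1\mu_2)^{t/2}=(t+1)\big[(1-\theta)(1-x_0)\big]^{t/2}$. Since $1-x\le 1-x_0$ for $x\ge x_0$, this yields $g(x)\le g(x_0)$.

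\textbf{Main obstacle.} The computations are routine once the eigenvalue parametrization is in place; the one genuinely delicate step is the sign bookkeeping in the implicit differentiation, and in particular recognizing that the whole estimate collapses onto the single scalar inequality $\mu_1\ge\tfrac{2(1-\theta)}{2-\theta}$ — which is precisely the condition $x\le x_0$ separating the real- and complex-eigenvalue regimes — together with the monomial-pairing argument for $P\ge Q$. The degenerate point $x=x_0$ (repeated eigenvalue) is handled harmlessly, since $g$ is a polynomial, so monotonicity on the open interval extends to its closure.
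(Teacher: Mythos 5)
Your proof is correct, and Part~2 is essentially the paper's argument (the bound $|h_t|\le(t+1)r^t$ with $r^t=[(1-\theta)(1-x)]^{t/2}$, achieved with equality at $x_0=\theta^2/(2-\theta)^2$ where the eigenvalues coincide). Part~1, however, takes a genuinely different route. The paper factors $g(x) = [\mu_1\mu_2]^{t/2}\cdot\sum_{i=0}^t(\mu_1/\mu_2)^{t/2-i}$ and shows each factor is positive and decreasing: the first is $[(1-\theta)(1-x)]^{t/2}$, and for the second it observes that $\mu_1/\mu_2 + \mu_2/\mu_1 = (2-\theta)^2(1-x)/(1-\theta) - 2$ decreases in $x$, hence the ratio $\mu_1/\mu_2\ge 1$ decreases toward $1$, hence each symmetric pair $r^k + r^{-k}$ in the sum decreases. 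You instead differentiate $g=h_t(\mu_1,\mu_2)$ directly via implicit differentiation of the characteristic polynomial, reduce the sign of $g'$ to $(1-\theta)(P-Q)\le(2-\theta)(\mu_1 P - \mu_2 Q)$, and close it by two clean observations: the monomial-pairing inequality $P\ge Q\ge 0$ and the lower bound $\mu_1\ge 2(1-\theta)/(2-\theta)$, which you correctly identify as algebraically equivalent to $x\le x_0$. Both routes are valid; the paper's avoids derivatives altogether and is perhaps tighter conceptually (it never needs to bound $\mu_1$ below), while yours makes transparent exactly where the threshold $x_0$ is used, which is a useful sanity check. Your handling of the degenerate point $x=x_0$ (where $\mu_1=\mu_2$ makes $\dot\mu_j$ singular) by appealing to polynomiality of $g$ and continuity is fine and could be made slightly crisper by noting that the numerator $(1-\theta)(P-Q)-(2-\theta)(\mu_1 P-\mu_2 Q)$ vanishes as $\mu_1\to\mu_2$, so the quotient has a removable singularity, but that detail is not load-bearing given the polynomiality argument.
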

\begin{proof}
For $x \in [-1, \theta^2/(2-\theta)^2]$, we know that $\A(x)$ has two 
real eigenvalues $\mu_1(x)$ and $\mu_2(x)$, Without loss of generality, 
we can assume $\mu_1(x) \ge \mu_2(x)$.
By Lemma \ref{lem:aux_matrix_form}, we know:
\begin{align*}
g(x) = \abs{\pmat{1 & 0 }[\A(x)]^{t}\pmat{1 \\ 0 }}
=\sum_{i=0}^t [\mu_1(x)]^i [\mu_2(x)]^{t-i}
= [\mu_1(x)\mu_2(x)]^{\frac{t}{2}} \sum_{i=0}^t \left[\frac{\mu_1(x)}{\mu_2(x)}\right]^{\frac{t}{2} - i}.
\end{align*}
By Lemma \ref{lem:aux_eigenvalues} and Vieta's formulas, 
we know that $[\mu_1(x)\mu_2(x)]^{\frac{t}{2}} 
= [(1-\theta) (1 - x)]^{\frac{t}{2}}$ is monotonically decreasing in $x$.
On the other hand, we have that:
\begin{align*}
\frac{\mu_1(x)}{\mu_2(x)} + \frac{\mu_2(x)}{\mu_1(x)}
+ 2 = \frac{[\mu_1(x) + \mu_2(x)]^2}{\mu_1(x)\mu_2(x)}
= \frac{(2-\theta)^2(1-x)}{1-\theta}
\end{align*}
is monotonically decreasing in $x$, implying that 
$\sum_{i=0}^t \left[\frac{\mu_1(x)}{\mu_2(x)}\right]^{\frac{t}{2} - i}$ 
is monotonically decreasing in $x$. Since both terms are positive, 
this implies the product is also monotonically decreasing in $x$, 
which finishes the proof of the first part.

For $x \in [\theta^2/(2-\theta)^2, 1]$, the two eigenvalues 
$\mu_1(x)$ and $\mu_2(x)$ are conjugate, and we have:
\begin{equation*}
[\mu_1(x)\mu_2(x)]^{\frac{t}{2}} = [(1-\theta) (1 - x)]^{\frac{t}{2}} \le [\mu_1(\theta^2/(2-\theta)^2)\mu_2(\theta^2/(2-\theta)^2)]^{\frac{t}{2}}
\end{equation*}
which yields:
\begin{equation*}
\sum_{i=0}^t \left[\frac{\mu_1(x)}{\mu_2(x)}\right]^{\frac{t}{2} - i}
\le \norm{\sum_{i=0}^t \left[\frac{\mu_1(x)}{\mu_2(x)}\right]^{\frac{t}{2} - i}}
\le \sum_{i=0}^t \norm{\frac{\mu_1(x)}{\mu_2(x)}}^{\frac{t}{2} - i}
=t+1
= \sum_{i=0}^t \left[\frac{\mu_1(\theta^2/(2-\theta)^2)}{\mu_2(\theta^2/(2-\theta)^2)}\right]^{\frac{t}{2} - i},
\end{equation*}
and this finishes the proof of the second part.
\end{proof}

\noindent
The following lemma gives properties of the sum of the first 
row of large powers of the~\nag~matrix.
\begin{lemma}\label{lem:aux_increase_t}
Under the same setting as Lemma \ref{lem:aux_eigenvalues}, 
and with $x \in [-\frac{1}{4}, 0]$, denote
\begin{equation*}
(a_t, ~-b_t) = \pmat{1 & 0 } \A^t.
\end{equation*}
Then we have
$$|a_{t+1} - b_{t+1}| \ge |a_t - b_t|$$
and
$$|a_t - b_t| \ge \frac{\theta}{2}\left(1 + \frac{1}{2}\min\{\frac{|x|}{\theta}, \sqrt{|x|}\}\right)^t.$$
\end{lemma}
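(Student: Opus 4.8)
The plan is to reduce everything to the scalar sequence $c_t \defeq a_t - b_t = \pmat{1 & 0}\A^t \pmat{1 \\ 1}$, show $c_t>0$ so that $|a_t-b_t| = c_t$, and then derive a first-order recursion for $c_t$ from which both claims fall out immediately.

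First I would record the eigenvalue facts I need. For $x\in[-\tfrac14,0]$ the two eigenvalues $\mu_1\ge\mu_2$ of $\A$ are real (Lemma~\ref{lem:aux_eigenvalues}, since the discriminant is nonnegative on this range), and since $\mu_1\mu_2=(1-\theta)(1-x)>0$ and $\mu_1+\mu_2=(2-\theta)(1-x)>0$, both eigenvalues are positive; moreover $\mu_2\le 1-\tfrac{\theta}{2}$ by Lemma~\ref{lem:aux_nonconvex_mu2}, and $\mu_1\ge 1+\tfrac12\min\{|x|/\theta,\sqrt{|x|}\}\ge 1$ by Lemma~\ref{lem:aux_negcurve_mu1}. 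By Cayley--Hamilton, $\A^{t+1}=(\mu_1+\mu_2)\A^t-\mu_1\mu_2\A^{t-1}$, so $c_t$ obeys $c_{t+1}=(\mu_1+\mu_2)c_t-\mu_1\mu_2 c_{t-1}$, with $c_0=1$ and $c_1=(2-\theta)(1-x)-(1-\theta)(1-x)=1-x$ read off from $\A^0=\I$ and $\A^1=\A$. Setting $d_t\defeq c_t-\mu_2 c_{t-1}$, the recursion collapses to $d_{t+1}=\mu_1 d_t$; using $1-x=\mu_1+\mu_2-\mu_1\mu_2$ one gets $d_1=\mu_1(1-\mu_2)$, hence $d_t=(1-\mu_2)\mu_1^t$, i.e.
\[
c_t=\mu_2 c_{t-1}+(1-\mu_2)\mu_1^t\qquad\text{for all }t\ge 1 .
\]
Since $\mu_1>0$ and $\mu_2\in[0,1)$, a one-line induction from $c_0=1$ gives $c_t>0$, so $|a_t-b_t|=c_t$ throughout.

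From this displayed recursion everything is routine. Dropping the nonnegative term $\mu_2 c_{t-1}$ yields $c_t\ge(1-\mu_2)\mu_1^t\ge\tfrac{\theta}{2}\mu_1^t\ge\tfrac{\theta}{2}\bigl(1+\tfrac12\min\{|x|/\theta,\sqrt{|x|}\}\bigr)^t$, which is the second claim. For monotonicity, rewrite $c_t-c_{t-1}=(1-\mu_2)(\mu_1^t-c_{t-1})$; an induction shows $c_{t-1}\le\mu_1^t$ (base: $c_0=1\le\mu_1$; step: $c_t=\mu_2 c_{t-1}+(1-\mu_2)\mu_1^t$ is a convex combination of $c_{t-1}\le\mu_1^t$ and $\mu_1^t$, so $c_t\le\mu_1^t\le\mu_1^{t+1}$ using $\mu_1\ge1$), hence $c_t-c_{t-1}\ge0$, i.e. $|a_{t+1}-b_{t+1}|\ge|a_t-b_t|$.

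The only points that need care are (i) verifying that $x\in[-\tfrac14,0]$ genuinely keeps us in the real-eigenvalue regime with $\mu_1\ge1\ge\mu_2\ge0$ — all of which is already packaged in Lemmas~\ref{lem:aux_eigenvalues}, \ref{lem:aux_nonconvex_mu2} and~\ref{lem:aux_negcurve_mu1} — and (ii) getting the substitution $d_t=c_t-\mu_2 c_{t-1}$ right; once the recursion $c_t=\mu_2 c_{t-1}+(1-\mu_2)\mu_1^t$ is in hand there is essentially no obstacle. (Alternatively one could obtain the same recursion from the closed form of Lemma~\ref{lem:aux_matrix_form} by rewriting $a_t-b_t=\mu_2^t+(1-\mu_2)\sum_{i=1}^t\mu_1^i\mu_2^{t-i}$, but the Cayley--Hamilton route avoids the reindexing bookkeeping.)
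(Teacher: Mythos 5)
Your proof is correct, but it follows a genuinely different route from the paper's. The paper works directly with the closed-form expressions from Lemma~\ref{lem:aux_matrix_form}: for monotonicity it shows the difference of successive numerators $\bigl(\mu_1^{t+1}-\mu_2^{t+1}-\mu_1\mu_2(\mu_1^t-\mu_2^t)\bigr)-\bigl(\mu_1^{t}-\mu_2^{t}-\mu_1\mu_2(\mu_1^{t-1}-\mu_2^{t-1})\bigr)$ factors as $(\mu_1^t-\mu_2^t)(\mu_1-1)(1-\mu_2)\ge 0$, and for the lower bound it uses the summation form $a_t-b_t=\sum_{i=0}^t\mu_1^i\mu_2^{t-i}-\mu_1\mu_2\sum_{i=0}^{t-1}\mu_1^i\mu_2^{t-1-i}\ge(1-\mu_2)\mu_1^t$. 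You instead invoke Cayley--Hamilton to get the second-order recursion $c_{t+1}=(\mu_1+\mu_2)c_t-\mu_1\mu_2 c_{t-1}$ and telescope via $d_t=c_t-\mu_2 c_{t-1}$ to the first-order form $c_t=\mu_2 c_{t-1}+(1-\mu_2)\mu_1^t$, from which positivity, the lower bound, and monotonicity (via the induction $c_{t-1}\le\mu_1^t$) all follow by one-line arguments. The advantage of your route is that it derives both claims from a single compact recursion, sidesteps the distinct-versus-repeated eigenvalue dichotomy that the closed form of Lemma~\ref{lem:aux_matrix_form} carries along (though that dichotomy is moot here since $x<0$ forces distinct eigenvalues), and makes the sign $c_t>0$ explicit rather than implicit; the paper's factorization $(\mu_1^t-\mu_2^t)(\mu_1-1)(1-\mu_2)$ is a slicker one-shot identity but requires already having the sum expansions in hand. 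Both use exactly the same ingredients from Lemmas~\ref{lem:aux_eigenvalues}, \ref{lem:aux_nonconvex_mu2}, and \ref{lem:aux_negcurve_mu1}.
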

\begin{proof}
Since $x<0$, we know that $\A$ has two distinct real eigenvalues. 
Let $\mu_1$ and $\mu_2$ be the two eigenvalues of $\A$.
For the first inequality, by Lemma \ref{lem:aux_matrix_form}, we only need to prove:
\begin{align*}
\mu_1^{t+1} - \mu_2^{t+1} - \mu_1\mu_2(\mu_1^{t} - \mu_2^{t})
\ge \mu_1^{t} - \mu_2^{t} - \mu_1\mu_2(\mu_1^{t-1} - \mu_2^{t-1}).
\end{align*}
Taking the difference of the LHS and RHS, we have:
\begin{align*}
& \mu_1^{t+1} - \mu_2^{t+1} - \mu_1\mu_2(\mu_1^{t} - \mu_2^{t}) - 
(\mu_1^{t} - \mu_2^{t}) + \mu_1\mu_2(\mu_1^{t-1} - \mu_2^{t-1}) \\
=& \mu_1^{t}(\mu_1 - \mu_1\mu_2 - 1  + \mu_2) - \mu_2^{t}(\mu_2 - \mu_1\mu_2 - 1 +\mu_1)\\
=& (\mu_1^t - \mu_2^t)(\mu_1 - 1)(1-\mu_2).
\end{align*}
According to Lemma \ref{lem:aux_eigenvalues} and Lemma \ref{lem:aux_nonconvex_mu2},
$\mu_1 \ge 1 \ge \mu_2 \ge 0$, which finishes the proof of the first claim.

For the second inequality, again by Lemma \ref{lem:aux_matrix_form}, 
since both $\mu_1$ and $\mu_2$ are positive, we have:
\begin{align*}
a_t - b_t = \sum_{i=0}^t \mu_1^i \mu_2^{t-i} - \mu_1\mu_2\sum_{i=0}^{t-1} \mu_1^{i} \mu_2^{t-1-i}
\ge (1-\mu_2)\sum_{i=0}^t \mu_1^i \mu_2^{t-i} \ge (1-\mu_2)\mu_1^t.
\end{align*}
By Lemma \ref{lem:aux_nonconvex_mu2} we have $1-\mu_2 \ge \frac{\theta}{2}$, 
By Lemma \ref{lem:aux_negcurve_mu1} we know $\mu_1 \ge 1 + \frac{1}{2}\min\{\frac{|x|}{\theta}, \sqrt{|x|}\}$.
Combining these facts finishes the proof.
\end{proof}

\end{document}